\documentclass{article}

\usepackage{microtype}
\usepackage{graphicx}
\usepackage{subcaption}
\usepackage{booktabs}
\usepackage{subcaption}

\usepackage{hyperref}

\usepackage[accepted]{icml2026} 
\usepackage{amsmath}
\usepackage{amssymb}
\usepackage{mathtools}
\usepackage{amsthm}

\usepackage{tikz}
\usetikzlibrary{positioning,calc,arrows.meta}

\usepackage[textsize=tiny]{todonotes}

\usepackage{amsfonts,bm}

\def\eqref#1{equation~\ref{#1}}

\def\1{\bm{1}}

\DeclareMathAlphabet{\mathsfit}{\encodingdefault}{\sfdefault}{m}{sl}
\SetMathAlphabet{\mathsfit}{bold}{\encodingdefault}{\sfdefault}{bx}{n}

\theoremstyle{plain}
\newtheorem{theorem}{Theorem}[section]
\newtheorem{proposition}[theorem]{Proposition}
\newtheorem{lemma}[theorem]{Lemma}
\newtheorem{corollary}[theorem]{Corollary}
\theoremstyle{definition}

\theoremstyle{remark}

\usepackage{enumitem}

\usepackage{thm-restate}
\providecommand{\definitionname}{Definition}
\providecommand{\lemmaname}{Lemma}
\providecommand{\theoremname}{Theorem}
\providecommand{\definitionname}{Definition}
\providecommand{\lemmaname}{Lemma}
\providecommand{\theoremname}{Theorem}

\usepackage[utf8]{inputenc} 
\usepackage[T1]{fontenc}    
\usepackage{hyperref}       
\usepackage[capitalize,noabbrev]{cleveref}
\usepackage{url}            
\usepackage{booktabs}       
\usepackage{amsfonts}       
\usepackage{nicefrac}       
\usepackage{microtype}      
\usepackage{xcolor}         
\usepackage{titletoc}

\crefname{equation}{Eq.}{Eqs.}
\Crefname{equation}{Equation}{Equations}
\newcommand{\bS}{\boldsymbol{S}}
\newcommand{\bmu}{\boldsymbol{\mu}}
\newcommand{\bb}{\boldsymbol{b}}

\newcommand{\bx}{\boldsymbol{x}}
\newcommand{\bz}{\boldsymbol{z}}
\newcommand{\by}{\boldsymbol{y}}
\newcommand{\avg}[1]{\mathbb{E}\left[ #1 \right]}

\crefname{section}{Sec.}{Secs.}
\Crefname{section}{Section}{Sections}

\crefname{appendix}{App.}{Apps.}
\Crefname{appendix}{Appendix}{Appendixes}

\crefname{theorem}{Theorem}{Theorems}
\crefname{lemma}{Lemma}{Lemmas}
\crefname{proposition}{Proposition}{Propositions}
\crefname{corollary}{Corollary}{Corollaries}

\Crefname{theorem}{Theorem}{Theorems}
\Crefname{lemma}{Lemma}{Lemmas}
\Crefname{proposition}{Proposition}{Propositions}
\Crefname{corollary}{Corollary}{Corollaries}

\hypersetup{
    colorlinks,
    linkcolor={blue!50!black},
    citecolor={blue!50!black},
    urlcolor={blue!80!black}
}

\usepackage{thm-restate}

\providecommand{\definitionname}{Definition}
\providecommand{\lemmaname}{Lemma}
\providecommand{\theoremname}{Theorem}
\providecommand{\definitionname}{Definition}
\providecommand{\lemmaname}{Lemma}
\providecommand{\theoremname}{Theorem}

\newcommand{\norm}[1]{\left\lVert{#1}\right\rVert}

\newcommand{\bs}{\boldsymbol{S}}

\icmltitlerunning{}

\begin{document}
\pagestyle{plain} 
\twocolumn[
\icmltitle{The Implicit Bias of Logit Regularization}

\begin{icmlauthorlist}
\icmlauthor{Alon~Beck}{tau}
\icmlauthor{Yohai~Bar-Sinai}{tau,racah,hujics}
\icmlauthor{Noam~Levi}{epfl}
\end{icmlauthorlist}

\icmlaffiliation{tau}{Raymond and Beverly Sackler School of Physics and Astronomy, Tel-Aviv University, Tel-Aviv 69978, Israel}
\icmlaffiliation{epfl}{École Polytechnique Fédérale de Lausanne (EPFL), Switzerland}
\icmlaffiliation{racah}{Racah Institute of Physics, The Hebrew University of Jerusalem, Edmond J. Safra Campus, Jerusalem 9190401, Israel}
\icmlaffiliation{hujics}{The Rachel and Selim Benin School of Computer Science and Engineering, The Hebrew University of Jerusalem, Edmond J. Safra Campus, Jerusalem 9190401, Israel}

\icmlcorrespondingauthor{Alon~Beck}{alonbk2@gmail.com}
\icmlkeywords{label smoothing, logit regularization, implicit bias, high-dimensional asymptotics, grokking}

\vskip 0.3in
]

\printAffiliationsAndNotice{}  


\begin{abstract}
Logit regularization, the addition of a convex penalty directly in logit space, is widely used in modern classifiers, with label smoothing as a prominent example. While such methods often improve calibration and generalization, their mechanism remains under-explored.
In this work, we analyze a general class of such logit regularizers in the context of linear classification, and demonstrate that they induce an implicit bias of \emph{logit clustering} around finite per-sample targets.
For Gaussian data, or whenever logits are sufficiently clustered, we prove that logit clustering drives the weight vector to align exactly with \emph{Fisher's Linear Discriminant}.
To demonstrate the consequences, we study a simple signal-plus-noise model in which this transition has dramatic effects:
Logit regularization halves the critical sample complexity and induces grokking in the small-noise limit, while making generalization robust to noise.
Our results extend the theoretical understanding of label smoothing and highlight the efficacy of a broader class of logit-regularization methods.
\end{abstract}

\section{Introduction}

Modern deep learning models are frequently trained with Label Smoothing (LS) \citep{szegedy2016rethinking} or similar regularization techniques to improve generalization and calibration \citep{muller2019when}. The empirical benefits of these methods are well-documented, and their success is often attributed to the prevention of overconfidence. However, a theoretical understanding of their underlying mechanics remains incomplete. 

In this work, we investigate the geometric mechanism of a broader class of regularizers: \emph{convex logit regularization}, of which Label Smoothing is one instance. Our central observation is that LS causes clustering of logits. 

Unlike weight-based regularization schemes, such as $L_2$, or ridge regularization~\cite{Tikhonov1977SolutionsOI}, LR allows a per-sample decomposition of the loss as $\mathcal{L}=N^{-1}\sum_i \ell(\bz_i,\by_i)$, where $\bz_i$ and $\by_i$ denote the logit and class of the $i$-th sample, respectively, and $N$ is the number of samples. A convex logit penalty generically creates a finite global minimum for $\ell$, and thus gradient descent performs \emph{logit clustering} around these finite targets in logit space. This is in contrast to unregularized logisitic regression, in which gradient descent is equivalent to margin maximization~\cite{soudry2018implicit,ji2019implicit,lyu2020margin}.
Leveraging this insight, we derive several surprising consequences of logit-regularization. We summarize our main contributions below:

\begin{itemize}
    \item \textbf{The implicit bias of Logit regularization:} We demonstrate that logit regularization fundamentally shifts the optimization objective from margin maximization to logit clustering around finite per-sample targets.
    For Gaussian data or quadratic per-sample loss, we prove that this clustering objective drives the weight vector $\bs$ to align exactly with \emph{Fisher's Linear Discriminant} \citep{fisher1936use}, $\bs \propto \Sigma^{-1} \boldsymbol{\mu}$.
    
    \item \textbf{Insensitivity to the regularizer:}
    Beyond the solvable Gaussian/quadratic cases, we argue the same direction holds approximately whenever the data is near-Gaussian or when logits concentrate near their targets (making the loss locally quadratic). As this direction depends only on data geometry, generalization accuracy becomes largely insensitive to the regularizer’s functional form and strength.

    \item We assume a signal-plus-noise decomposition of the data, that splits the noisy part of the input features into a signal-aligned component and an orthogonal complement.
    In this setting, we demonstrate the following.
    \begin{itemize}
        \item \textbf{Shift in the Interpolation Threshold and grokking:}
         In the limit of noiseless features (where the noise component along the signal direction is zero) and working in the high-dimensional proportional regime of $d,N\to\infty$ (where $d$ is the input dimension), logit regularization shifts the interpolation threshold from the standard separability limit $\lambda_c = d/N = 1/2$~\citep{cover1965geometrical,gardner1988space, beck2025grokking} to $\lambda_c = 1$. Notably, under weak regularization, we identify \emph{grokking} dynamics (delayed generalization accompanied by non-monotonic test loss dynamics) in the region $1/2 < \lambda < 1$.
        
        \item \textbf{Invariance to Orthogonal Noise:} We prove that the optimal generalization accuracy is invariant to the amplitude of orthogonal noise. This observation allows us to construct a \textit{phase diagram} delineating the specific regimes (defined by signal noise and orthogonal noise amplitudes) where logit regularization yields a generalization benefit over the unregularized baseline.
        
    \end{itemize}

\end{itemize}

We validate our theoretical predictions numerically on Gaussian data and on a more realistic distribution using ResNet-18~\cite{resnet18} embeddings from the CIFAR-10 dataset~\citep{krizhevsky2009cifar}.

\section{Related Work}

Here, we provide a focused review of the prior research most central to our contribution. For an
additional detailed literature survey, including variants of LS and technical high-dimensional solvers, we refer the reader to~\cref{appendix:related_work}.

\textbf{Logit Regularization.} Label smoothing (LS) \citep{szegedy2016rethinking} is a standard tool for improving generalization and calibration \citep{muller2019when,guo2017calibration}. It is a specific instance of a broader class of logit-space penalties, including entropy-based \citep{pereyra2017regularizing} and logit-norm \citep{dauphin2021deconstructing} regularizers. While these are often motivated by the prevention of overconfidence, our work provides a geometric characterization of how these penalties replace unbounded margin growth with a \emph{logit clustering} objective.

\textbf{Implicit Bias and High-Dimensional Asymptotics.} For unregularized classification, the implicit bias of gradient descent leads to the hard-margin SVM/max-margin separator \citep{soudry2018implicit,lyu2020margin}. We contrast this with the finite per-sample optima created by logit regularization. Our analysis connects to high-dimensional theories of linear separability \citep{cover1965geometrical,gardner1988space} and recent asymptotics for overparameterized classifiers \citep{montanari2019maxmargin,mignacco2020regularization,wang2022gaussianmixtures}.

\textbf{Grokking and Neural Collapse.} Grokking, a delayed transition to generalization, was originally observed in algorithmic tasks \citep{power2022grokking}. The most relevant works in this field relate the edge of separability in logistic regression with grokking transitions~\citep{beck2025grokking}. In multi-class settings, are results are reminiscent of \emph{neural collapse} \citep{papyan2020neuralcollapse}, a phenomenon in which logits arrange in simplex configurations \citep{zhou2022all,garrod2024persistence}.

\section{The Implicit Bias of Logit Regularization}

\subsection{Setup: Logit Regularization as a Generalization of Label Smoothing}
We study linear classifiers. For conceptual clarity, we focus initially on the binary setting with zero bias ($b=0$). In this case, the model outputs a scalar logit $z_i = \boldsymbol{S}^\top \boldsymbol{x}_i$ where $\boldsymbol{x}_i$ are the data points, and $\boldsymbol{S}$ is model weights vector. The extension to the multi-class setting is provided in \cref{sec:multiclass}.

We consider a class of loss functions $\mathcal{L}=N^{-1}\sum_{i=1}^N\ell(z_i,y_i)$, where $z_i$ is the logit and $y_i \in \{-1, +1\}$  is label of the $i$-th sample, respectively. We consider the per-sample loss
\begin{equation}
\label{eq:single_sample_loss}
    \ell(z, y) = (1-\alpha)\ell_{\mathrm{CE}}(z,y) + \alpha f(z),
\end{equation}
where $\ell_{\mathrm{CE}}(z,y)=\log(1+e^{-yz})$ is the standard cross-entropy loss, $f(z)$ is a convex and even \emph{logit regularization} function, and $\alpha \in [0, 1)$ controls the regularization amplitude.
Notably, standard Label Smoothing (LS) is  a specific instance of this framework, corresponding to the choice $f(z) = \log(2+2\cosh z)$ (See~\cref{appendix:label_smoothing}). While most of our results do not depend on the exact form $f$, for simplicity, we will consider $f(z)=z^2$ in all our numerical results. 

Since $\ell_{\mathrm{CE}}$ depends only on the product $yz$ and $f$ is even, the loss depends strictly on the signed logit. For notational simplicity, from here on we redefine $z \leftarrow yz$. Equivalently, we absorb the label into the input vector, $\boldsymbol{x} \leftarrow y\boldsymbol{x}$, such that the logit remains $z_i = \boldsymbol{S}^\top \boldsymbol{x}_i$. This allows us to analyze the univariate function $\ell(z)$ independently of the class label.

\emph{Distinction from $L_2$ regularization:} We emphasize that this formulation differs significantly from standard parameter regularization. While logit regularization acts on the model logits independently, $L_2$ penalty implicitly couples data samples via the global norm. This prevents decomposing the loss into independent per-sample terms, a property central to all of our analysis throughout this work.

\subsection{Logit Clustering}
\label{sec:clustering}
The introduction of logit regularization fundamentally alters the optimization landscape. In the unregularized case ($\alpha=0$), the per-sample cross-entropy loss $\ell_
{\mathrm{CE}}$ is monotonic, and the optimal solution maximizes the margins \citep{soudry2018implicit,ji2019implicit}.
When the data is linearly separable, this implicit bias drives the weight norm $\|\boldsymbol{S}\| \to \infty$, potentially leads to overconfidence and overfitting (see \cref{fig:logit_clustering}, top row).
In contrast, when $\alpha > 0$, the per-sample loss $\ell(z)$ has a unique finite global minimum denoted by $z^*$. Clearly, the loss has a data-independent lower bound, $\mathcal{L}\ge z^* $, which is achieved only if $z_i=z^*$ for all samples.
However, given the linear constraint $z_i = \boldsymbol{S}^\top \boldsymbol{x}_i$, the model generally cannot collapse all samples to a single point. Consequently, the optimization process becomes a geometric compromise: the model seeks a weight vector $\boldsymbol{S}$ that \textbf{clusters all logits as tightly as possible around the target $z^*$.} This phenomenon is visualized in the bottom row of \cref{fig:logit_clustering}.
While logit clustering is a known effect of label smoothing \citep{muller2019when}, we will show that this straightforward observation has important consequenses.

\begin{figure}[t!]
    \centering
    \includegraphics[scale=0.32]{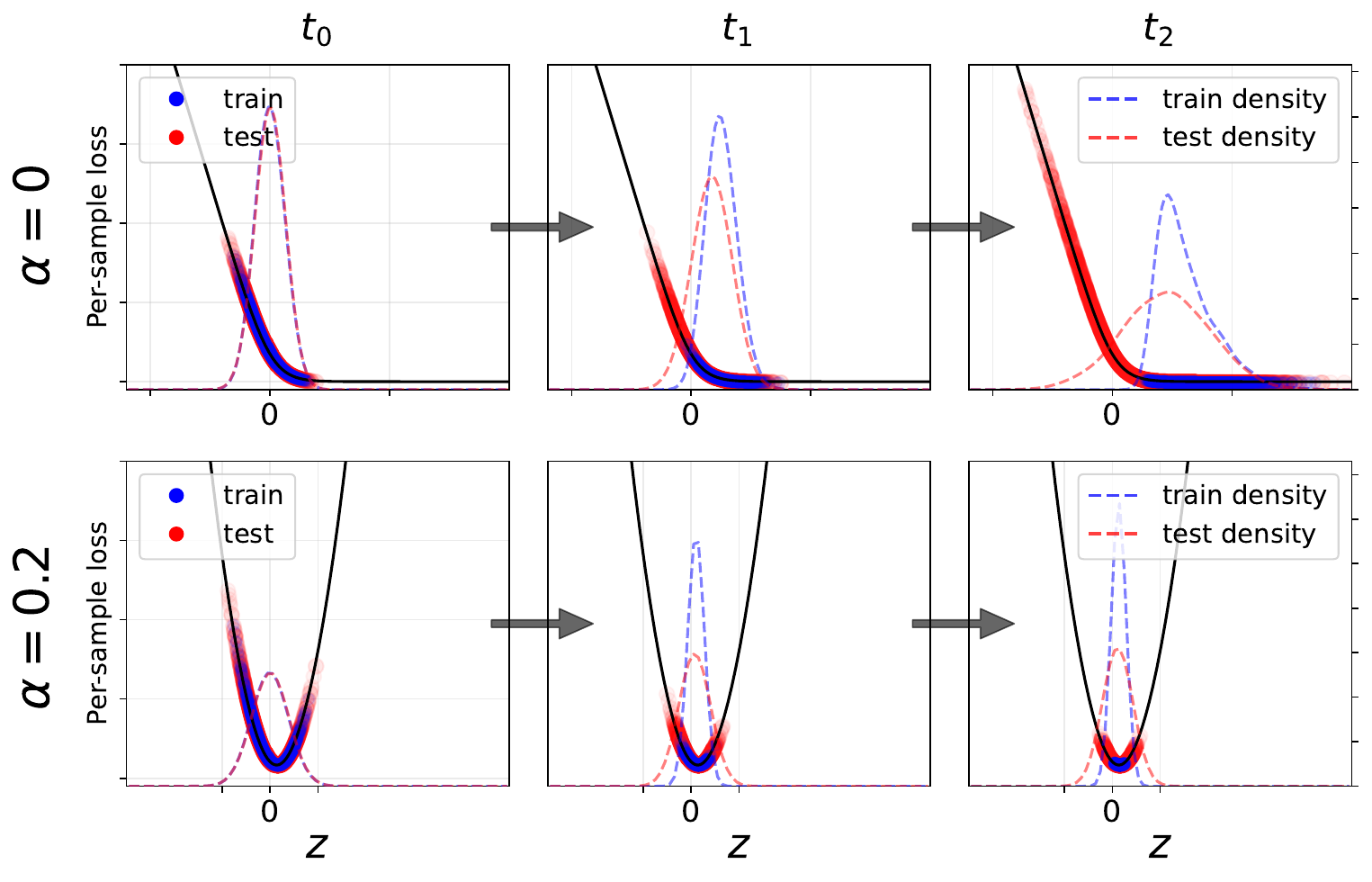}
    \caption{\textbf{Logit regularization induces clustering.} 
    Logit evolution for a linear classifier trained on Gaussian data (see \cref{app:numerical_details} for more details regarding the setup). The train (blue) and test (red) samples are visualized across three training epochs (early, middle, and final). Note that samples are classified correctly if $z>0$. \textbf{Top ($\alpha=0$):} The unregularized logits are pushed toward infinity to maximize margins, indicating overconfidence. \textbf{Bottom ($\alpha > 0$):} The regularized loss exhibits a distinct finite minimum, driving the logits to cluster tightly around a target value $z^*$ during training. Note the looser clustering on the test set due to the misalignment between the empirical noise direction and the true signal.
    }
    \label{fig:logit_clustering}
\end{figure}

\subsection{Alignment with Fisher's Linear Discriminant}
In this section, we derive the exact form of the implicit bias of logit regularization for logistic regression under idealized conditions, and show that logit clustering in this case drives the weight vector to be in the direction of Fisher's Linear Discriminant. Specifically, we consider two regimes: (i) Gaussian data with any convex regularizer, and (ii) arbitrary data distributions with a quadratic per-sample loss\footnote{Almost any per-sample loss as defined in~\cref{eq:single_sample_loss} can be considered quadratic close enough to its minimum.}. In both cases, we  first demonstrate that the optimization problem is equivalent to minimizing the \emph{coefficient of variation} (CV) $r(\bS)\equiv\sigma(\bS)/\mu(\bS)$, where $\mu(\bs),\sigma(\bs)$ are the logits mean and standard deviation, as a function of the weights.

\begin{proposition}[Gaussian Data]\label[proposition]{prop:gaussian_data}
    Let $\bx \sim \mathcal{N}(\boldsymbol{\mu}_{x}, \Sigma_{x})$ with $\boldsymbol{\mu}_{x} \neq \mathbf{0} \in \mathbb{R}^d$, and consider a vector $\bs\in\mathbb{R}^d$.
    The logit $z(\bs) = \bs^{T}\bx$ is Gaussian with mean
    $\mu(\bs) = \bs^{T}\boldsymbol{\mu}_{x}$
    and variance
    $\sigma^{2}(\bs) = \bs^{T}\Sigma_{x}\bs$.
    Define $L(\bs) = \avg{\ell(z(\bs))}$, where $\ell(z)$ is a convex function with a unique minimum.
    
    Then, the optimum
    $\bs_{\min} = \operatorname*{arg\,min}_{\bs} L(\bs)$
    minimizes the ratio $r(\bs) \!\!=\!\! \sigma(\bs) / \mu(\bs)$. That is, $r(\bs_{\min})\le r(\bs), \forall\bs\in \mathbb{R}^d$.
\end{proposition}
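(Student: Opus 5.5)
The plan is to reduce $L$ to a function of the two scalars $(\mu,\sigma)$ and then show that its minimizer over the admissible region of the $(\mu,\sigma)$ plane lies on the boundary ray of smallest slope. Since $z(\bs)$ is Gaussian, $L(\bs)=G(\mu(\bs),\sigma(\bs))$ with
\[
G(\mu,\sigma)=\mathbb{E}_{\xi\sim\mathcal{N}(0,1)}\bigl[\ell(\mu+\sigma\xi)\bigr].
\]
So the whole dependence on $\bs$ passes through $(\mu,\sigma)$. The set of attainable pairs is a cone. For fixed $\mu>0$, the smallest attainable $\sigma$ is $\mu\, r_*$, where $r_*=\min_{\bs:\mu(\bs)>0}\sigma(\bs)/\mu(\bs)$. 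This minimum is attained by the Fisher direction $\bs\propto\Sigma_x^{-1}\bmu_x$ via Cauchy--Schwarz, assuming $\Sigma_x$ is invertible; degenerate covariances can be handled by a limiting argument. Any $\sigma\ge \mu r_*$ is also attainable, by adding a component in $\ker(\bmu_x^\top)$ and using continuity. Hence the attainable region is $\{(\mu,\sigma):\sigma\ge r_*|\mu|\}$.

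The key monotonicity step is to show that $G$ is nondecreasing in $\sigma$ for every fixed $\mu$. Convexity of $\ell$ gives this directly: $\sigma\mapsto \ell(\mu+\sigma\xi)$ is convex, so $G(\mu,\cdot)$ is convex in $\sigma$. It is also even in $\sigma$, because $\xi\mapsto-\xi$ leaves the Gaussian law unchanged. An even convex function is nondecreasing on $[0,\infty)$. Alternatively, one can use the fact that $\mathcal{N}(\mu,\sigma_2^2)$ is a mean-preserving spread of $\mathcal{N}(\mu,\sigma_1^2)$ and apply Jensen's inequality. Either way, for each $\mu$ the minimum over attainable $\sigma$ is achieved at the smallest attainable $\sigma=r_*|\mu|$.

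To finish, let $\bs_{\min}$ be a minimizer and write $\mu_0=\mu(\bs_{\min})$, $\sigma_0=\sigma(\bs_{\min})$. The main obstacle is that weak monotonicity alone only says some minimizer lies on the boundary ray. To show that every minimizer does, and so obtain $r(\bs_{\min})=r_*$, I would prove strict monotonicity of $G$ in $\sigma$ for $\sigma>0$. Since $\ell$ is convex with a unique minimum, it is not affine on all of $\mathbb{R}$. Gaussian averaging has full support, so $\partial_\sigma^2 G>0$ in the smooth case, or strict convexity of $G(\mu,\cdot)$ in the general case. Together with evenness this gives strict increase on $(0,\infty)$.

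Two further sign issues need checking. First, $\mu_0>0$: the minimizer should have $\mu_0\neq 0$, because moving along the Fisher ray from $0$ toward $z^*$ lowers the loss. In the label-absorbed convention the relevant sign is $\mu_0>0$, which I would argue from $z^*>0$ together with the reflection $\bs\mapsto-\bs$. Second, the case $\sigma_0=0$ has $r=0$ and is trivially minimal. With these in place, $\sigma_0/\mu_0=r_*\le r(\bs)$ for all $\bs$ with $\mu(\bs)>0$.
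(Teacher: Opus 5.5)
Your proposal is correct and follows the same route as the paper: push $L$ forward to the cone of attainable $(\mu,\sigma)$, show that at fixed $\mu$ the loss is nondecreasing in $\sigma$ (you via convexity plus evenness of $G(\mu,\cdot)$, the paper via the convex-combination argument of Lemma~\ref{lemma:monotonicity_of_std}), and conclude that the minimizer lies on the minimal-slope ray. Your extra treatment of strictness and of the sign of $\mu_0$ goes beyond the main-text proof, with one small correction: $G(\mu,\cdot)$ need not be strictly convex for nonsmooth $\ell$ (for $\ell(z)=|z-z^*|$ at $\mu=z^*$ one gets $G=\sigma\sqrt{2/\pi}$), but strict increase on $[0,\infty)$ still holds, because a tie would force $\mathbb{E}[\ell(\mu+\sigma\xi)]=\ell(\mu)$, and the equality case of Jensen would then make $\ell$ affine.
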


\begin{proof}
    Let $Q:\mathbb{R}^d\to\mathbb{R}^2$ be mapping from weights to logit moments, $Q(\bs)=(\mu(\bs),\sigma(\bs)$. 
     Since $z$ is normally distributed, $L$ is completely determined by its moments $\mu(\bs)$ and $\sigma(\bs)$ and optimization in weight space is equivalent to a constrained optimization problem in moment space.
     Let $\Omega \subset \mathbb{R}^2$ be the set of attainable moments, ie the image of $\mathbb{R}^d$ under $Q$.
    
    Since both $\mu(\bs)$ and $\sigma(\bs)$ scale linearly in $\bs$ (e.g., $\mu(\lambda \bs)=\lambda \mu(\bs)$), each ray in weight space is mapped to a ray in $\Omega$. Therefore, $\Omega$ is a cone composed of rays emanating from the origin:
    \begin{equation}
        \Omega = \{(\mu, r \mu) \mid \mu > 0, \ r_1 \le r \le r_2 \},
    \end{equation}
    where we assumed without loss of generality that $\mu>0$ and $r_2$ may be infinite. 
    
    Using standard convexity arguments, it is easy to show that for any fixed mean $\mu$, $L$ is an increasing function of the spread $\sigma$ (see Lemma \ref{lemma:monotonicity_of_std} in the Appendix). Therefore, optimum in $\Omega$ is attained on its lower boundary, namely the ray with the minimal slope $r_1$. 
    Thus, $\bs_{\min}$ in mapped under $Q$ to the minimal $r = \sigma/\mu$. 
\end{proof}
\begin{corollary}[Quadratic Loss for any data]\label[corollary]{cor:quadratic_loss}
    For any data distribution, if the per-sample loss is quadratic, i.e., $\ell(z)=a(z-z^*)^2$, then the optimal direction $\hat{\bs}_{\min}$ minimizes the ratio $r = \sigma/\mu$.
\end{corollary}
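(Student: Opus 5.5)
For a quadratic per-sample loss, the expected loss depends on the logit distribution only through its first two moments, which is the one property of Gaussianity used in \cref{prop:gaussian_data}. So we can rerun that argument in moment space, and here the minimization can in fact be done in closed form. Expanding the square gives
\begin{equation*}
L(\bs) = a\,\avg{(z-z^*)^2} = a\left[\sigma^2(\bs) + \left(\mu(\bs)-z^*\right)^2\right],
\end{equation*}
where $\mu(\bs)=\bs^T\bmu_x$ and $\sigma^2(\bs)=\bs^T\Sigma_x\bs$ are the mean and covariance of $\bx$ under an arbitrary distribution with finite second moments. The map $Q(\bs)=(\mu(\bs),\sigma(\bs))$ is again positively homogeneous. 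Hence the attainable set $\Omega$ is a cone of rays, exactly as in the proof of \cref{prop:gaussian_data}, and $L$ is a function of $(\mu,\sigma)$ alone.

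We then write $\bs = t\hat{\bs}$ with $t>0$, a unit direction $\hat{\bs}$, and $\mu(\hat{\bs})>0$. Setting $m=\mu(\hat{\bs})$ and $s=\sigma(\hat{\bs})$, the loss is $a[t^2 s^2 + (tm - z^*)^2]$. This is a one-dimensional quadratic in $t$ with minimizer $t^* = z^* m/(s^2+m^2)$. Substituting back gives
\begin{equation*}
\min_{t} L = a\,(z^*)^2\,\frac{s^2}{s^2+m^2} = a\,(z^*)^2\,\frac{r^2}{1+r^2}, \qquad r = s/m .
\end{equation*}
The minimized loss depends on the direction only through $r$, and the map $r\mapsto r^2/(1+r^2)$ is strictly increasing for $r\ge 0$. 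Therefore the optimal direction $\hat{\bs}_{\min}$ is exactly the one that minimizes $r=\sigma/\mu$, which is the claim.

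Alternatively, we could invoke the monotonicity lemma (Lemma~\ref{lemma:monotonicity_of_std}) directly. Since $\partial L/\partial\sigma = 2a\sigma \ge 0$ at fixed $\mu$, the optimum lies on the lower boundary ray of $\Omega$, and the conclusion of \cref{prop:gaussian_data} follows verbatim. I prefer the explicit computation because it avoids any appeal to Gaussianity.

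There are two edge cases to handle.

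\emph{Sign of $\mu$.} The minimum value $a(z^*)^2 r^2/(1+r^2)$ is below the value $a(z^*)^2$ attained at $\bs=0$ only when $m\neq 0$ and $z^*\neq 0$, and the optimal $t^*$ has the sign of $z^* m$. So we restrict to directions with $\mu(\hat{\bs})>0$ when $z^*>0$, which is the natural convention already adopted in the proposition. Directions with $\mu\le 0$ give loss at least $a(z^*)^2$, so they are never optimal.

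\emph{Degenerate directions.} If $\Sigma_x$ is singular, $s$ may vanish. Then $r=0$ and the loss reaches its global lower bound, which is consistent with the claim.

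I expect no real obstacle here. The main point to state carefully is that, for a quadratic loss, only the first two moments of $z$ enter $L$, so no distributional assumption on $\bx$ is needed.
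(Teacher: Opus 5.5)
Your proof is correct. The paper's own proof of this corollary is the route you give as your alternative. It observes that $L$ depends only on the first two moments of $z$, and then reuses the cone argument of \cref{prop:gaussian_data} verbatim: the attainable moments form a cone, $L$ increases in $\sigma$ at fixed $\mu$, so the optimum lies on the minimal-slope ray.

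Your main route instead profiles out the norm in closed form to get $\min_t L = a(z^*)^2 r^2/(1+r^2)$. The paper does carry out this same computation, but only in \cref{app:explicit_quadratic_derivation}, as pedagogical intuition rather than as the proof.

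The cone argument is shorter, and it is what makes the statement a genuine corollary of the proposition. Your explicit profile makes several points visible that the one-line proof leaves implicit:
\begin{itemize}
\item The minimized loss is a strictly increasing function of $r$, so every optimal direction, not just some, minimizes $r$.
\item Directions with $\mu \le 0$ are excluded by an actual bound (loss at least $a(z^*)^2$), rather than by the ``without loss of generality $\mu>0$'' convention.
\item You get by-products such as the optimal norm and the shrunk mean $\mu^* = z^*/(1+r^2)$.
\end{itemize}

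Your expansion $a\left[\sigma^2 + (\mu - z^*)^2\right]$ is the correct one. The paper's displayed formula drops the factor $a$ on the second term, which is harmless for the monotonicity argument but would matter in an explicit computation like yours.
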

\begin{proof}
    If $\ell$ is quadratic function then the loss is completely determined by the first and second moments of $z$.
    $$L(\bS)=\mathbb{E}\left[\ell(z(\bS))\right]=a\sigma^2(\bS)+(\mu(\bs)-z^*)^2$$
    Therefore, the argument of \cref{prop:gaussian_data} applies verbatim.
\end{proof}
We now state the main result: under these conditions, the implicit bias of logit regularization aligns the weights with Fisher's Linear Discriminant solution.
\begin{corollary}
    [LDA direction]\label[corollary]{cor:directional_invariance}
    If the data is Gaussian, the optimal weight direction is $\hat{\bs}_{\min}\propto \Sigma^{-1} \mu$. Similarly, if  $\lambda=d/N<1$ and the per-sample loss is quadratic, $\hat{\bs}_{\min} \propto \Sigma^{-1} \mu$ (where $\Sigma,\mu$ are the empirical mean and centered covariance).
\end{corollary}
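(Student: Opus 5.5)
By \cref{prop:gaussian_data} and \cref{cor:quadratic_loss}, in both settings the optimal direction minimizes $r(\bs)=\sigma(\bs)/\mu(\bs)$ over the rays with $\mu(\bs)>0$. Here $\mu(\bs)=\bs^T\bmu$ and $\sigma^2(\bs)=\bs^T\Sigma\bs$, with $\bmu,\Sigma$ the population moments in the Gaussian case and the empirical mean and centered covariance in the quadratic case. So the task reduces to the deterministic problem of minimizing $r^2(\bs)=\bs^T\Sigma\bs/(\bs^T\bmu)^2$. Since $r$ is scale invariant, only the direction $\hat{\bs}$ matters.

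I would first show that $\Sigma$ is positive definite, so that $\Sigma^{-1}$ exists and $\sigma(\bs)>0$ for $\bs\neq 0$.
\begin{itemize}
\item In the Gaussian case I assume a nondegenerate covariance; otherwise a direction in $\ker\Sigma$ with $\bs^T\bmu\neq0$ gives $r=0$, and the statement needs $\Sigma^{-1}$ read as a pseudo-inverse.
\item In the empirical case this is exactly where $\lambda=d/N<1$ enters. With $N>d$ samples in general position (e.g.\ drawn from a continuous distribution), the centered sample covariance has rank $\min(d,N-1)=d$ almost surely, hence it is invertible.
\end{itemize}
This is the step I expect to be the main obstacle, or at least the one needing care. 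If $\lambda\ge1$, the centered data lie in a subspace of dimension at most $N-1<d$. One can then pick $\bs\perp$ all centered samples with $\bs^T\bmu\neq0$, making $\sigma=0$: every logit sits exactly at the target and the minimizer is no longer unique. This is the interpolation threshold at $\lambda_c=1$.

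With $\Sigma\succ0$, I would apply Cauchy--Schwarz in the $\Sigma$ inner product:
\[
(\bs^T\bmu)^2=\bigl((\Sigma^{1/2}\bs)^T(\Sigma^{-1/2}\bmu)\bigr)^2\le (\bs^T\Sigma\bs)\,(\bmu^T\Sigma^{-1}\bmu).
\]
Hence $r^2(\bs)\ge 1/(\bmu^T\Sigma^{-1}\bmu)$. Equality holds iff $\Sigma^{1/2}\bs\parallel\Sigma^{-1/2}\bmu$, i.e.\ $\bs\propto\Sigma^{-1}\bmu$. The sign is fixed by $\mu(\bs)>0$, and $\bmu^T\Sigma^{-1}\bmu>0$ because $\bmu\neq0$. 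This gives $\hat{\bs}_{\min}\propto\Sigma^{-1}\bmu$.

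Finally, I would note that uniqueness of the direction follows from the equality case of Cauchy--Schwarz. The norm of $\bs_{\min}$ is then set by the one-dimensional problem along this ray, which has a finite minimizer because $\ell$ has a unique finite minimum. That norm depends on $f$ and $\alpha$, but the direction does not.
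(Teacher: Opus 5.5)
Your proposal is correct and follows essentially the same route as the paper. You reduce to minimizing $r^2(\bs)=\bs^\top\Sigma\bs/(\bs^\top\bmu)^2$ via \cref{prop:gaussian_data} and \cref{cor:quadratic_loss}, then apply Cauchy--Schwarz in the $\Sigma$-inner product and use its equality case to get $\bs\propto\Sigma^{-1}\bmu$; your rank argument that the centered sample covariance has rank $\min(d,N-1)=d$ almost surely when $\lambda<1$, and your caveat about degenerate covariances, simply make explicit what the paper assumes with ``$\Sigma$ invertible (since $\lambda<1$)''.
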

\begin{proof}
    Identifying $r(\bs)^{-1}$ as the square root of the Fisher Criterion (i.e., $(\bs^\top \boldsymbol{\mu})^2/\bs^\top \Sigma \bs$), the result follows immediately from standard LDA theory. See \cref{app:complementary_proofs} for a direct proof.
\end{proof}
 Outside these idealized setting, we find empirically that direction of the solution remains close to this theoretical minimum, as we will show in the next section below.

\subsection{Consequence: Insensitivity to the Logit Penalty\label{sec:alpha_independence}}
We now turn to analyze the practical consequences this result.
Even though the direction of the solution would align with LDA only when the data is purely Gaussian or the per-sample loss is completely quadratic, in practice, we expect this to hold approximately (i.e., the deviation from this direction would not be large) for a much broader set of cases: As discussed in~\cref{sec:clustering}, logit reguarlization tends to cluster the logits around the loss minimum. Therefore, locally, the optimal per-sample loss may be approximated as quadratic.
This suggest, as we will also demonstrate numerically below, that the optimal generalization accuracy $\mathcal{A}_{\mathrm{gen}}^{\min}$ will be quite insensitive to the specific per-sample loss $\ell(z)$, and in particular the logit regularization function $f(z)$.

\textbf{Geometric Intuition.}
First, we will provide intuition from another perspective. As stated above, minimization of the loss now requires placing logits as tightly as possible around $z^*$, the minimum of $\ell(z)$. This implicitly decouple the optimization into two distinct tasks:
\begin{enumerate}
    \item \textbf{Shape Optimization (Direction):} Making the cluster as tight as possible, by choosing a direction $\hat{\bs}=\bs/\norm{\bs}$ minimizing the CV $r=\sigma/\mu$.
    \item \textbf{Location Optimization (Norm):} Scaling $\|\bs\|$ to shift the logit mean $\mu$ to the proximity of $z^*$.
\end{enumerate}
Approximately, the first task is determined solely by the dataset geometry and not by the per-loss function $\ell(z)$. Since generalization accuracy depends only on $\hat{\bs}$ (and not $\norm{\bs}$) it should be invariant to the choice of regularization function (while the norm would be very sensitive to the per-sample minimum).

\textbf{Empirical Evidence.}
We will now provide numerical evidence for this insensitivity. In \Cref{fig:alpha_independence}, we consider a binary classification experiment where the data is the mean vector $\pm (1,0,0,\dots)$ corresponding to the class label (arbitrarily aligned with the first axis), plus a random noise vector where each component is drawn independently from a Student-t distribution with degrees of freedom $\nu$. Here, the logit-regularization function is taken to be $f(z)=z^2$.
This is an illustrative example of a more general data model which will be discussed in detail in~\cref{sec:signal_noise_decomp}.
The left panel shows the cosine similarity between the learned $\hat{\bs}_{\min}$ and the true feature axis. We observe a sharp transition from the unregularized state ($\alpha=0$) to $\alpha>0$, indicating the \emph{change of the implicit bias}, followed by stable plateau for $\alpha > 0$. The plateau is flatter for larger $\nu$ (approaching Gaussianity) or larger $\alpha$ (approaching a quadratic loss). In stark contrast, the right panel shows that the norm $\|\bs_{\min}\|$ varies significantly with $\alpha$. This confirms that logit regularization modifies the scale of the weights to satisfy the per-sample minimum, while the direction (and thus the accuracy) remains robust. For additional numerical evidence, see~\cref{app:additional_numerical_evidence}.

\begin{figure}[t]
    \centering
    \includegraphics[scale=0.47]{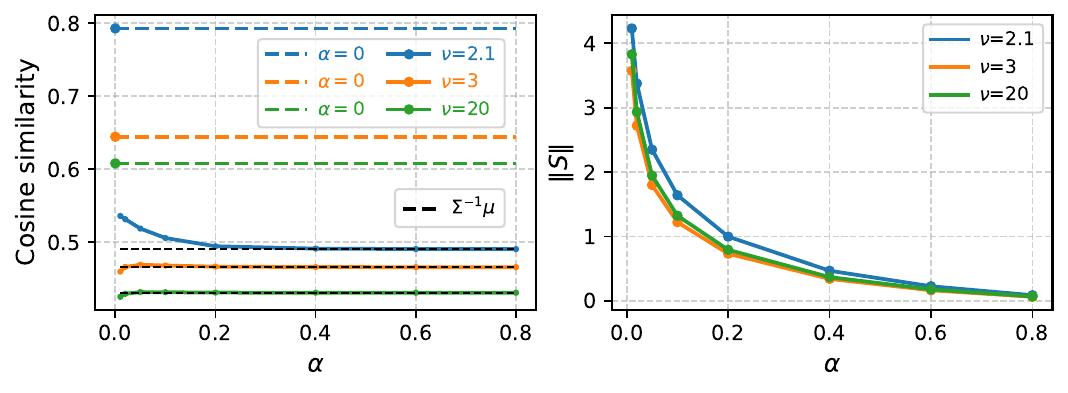}
    \caption{\label{fig:alpha_independence}
    \textbf{Change in the implicit bias and robustness to $
    \alpha$.} Analysis of the optimal weights $\bs_{\min}$ on Student's t-distributed data ($\nu \in \{2.1, 3, 20\}$).
    \textbf{Left panel:} Cosine similarity between $\bs_{\min}$ and the feature-axis, where the data is drawn from a Student's t-distribution for several values of $\nu$. The dashed lines indicates the unregularized values ($\alpha=0$). We observe an abrupt shift from $\alpha=0$ to $\alpha>0$, followed by a plateau that becomes flatter as $\nu$ increases (approaching a Gaussian distribution). On black dashed line we plot for comparison the limiting LDA expected value, $\Sigma^{-1}\mu$, where $\mu$ and $\Sigma$ are the empirical mean and centered covariance. \textbf{Right panel:} The norm $\|\bs_{\min}\|$, which, in contrast, exhibits a clear dependence on $\alpha$. See \cref{app:numerical_details} for more details about the numerical setup.}
\end{figure}

\section{Signal-plus-Noise Decomposition}
\label{sec:signal_noise_decomp}
Building on the geometric insight of logit clustering, we now derive several surprising consequences of logit regularization when the data distribution 
has a signal-plus-noise structure.

\subsection{Data model}
We model the data geometry via a decomposition into a signal subspace and an orthogonal noise subspace. The input $\boldsymbol{x} \in \mathbb{R}^d$ is generated according to
\begin{equation}
    \boldsymbol{x} = y \mu_f \boldsymbol{e}_1 + \boldsymbol{\xi},
\end{equation}
where $y=\pm1$ is the label, $\boldsymbol{e}_1$ denotes the signal direction, and $\mu_f$ represents the signal magnitude. The noise vector $\boldsymbol{\xi}$ is drawn from a zero-mean distribution with anisotropic scaling. Specifically, we define $\boldsymbol{\xi} = \sigma_f \xi_f \boldsymbol{e}_1 + \sigma_n \boldsymbol{\xi}_{\perp}$, where $\xi_f$ and the components of $\boldsymbol{\xi}_{\perp}$ are drawn from distributions $\mathcal{D}_f$ and $\mathcal{D}_n$ (signal and noise) respectively, with zero mean. Here, $\sigma_f$ and $\sigma_n$ act as mixture weights. We note that in the case of $\sigma_f=\sigma_n$ and Gaussian noise, our data generation process coincides with the noisy Gaussian mixture classification problem studied in~\citet{mignacco2020regularization} and subsequent works.

Unless explicitly stated otherwise (e.g., in the ResNet-18 validation), we 
take the noise distributions to be Gaussian and align the signal with the $x_1$-axis with unit amplitude (i.e., $\mu_f=1$). Under this assumption, $\sigma_f$ and $\sigma_n$ represent the standard deviations of the signal-aligned and orthogonal noise components, respectively. Formally, the noise vector $\boldsymbol{\xi}$ has independent entries satisfying $\xi_1 \sim \mathcal{N}(0, \sigma_f^2)$ and $\xi_i \sim \mathcal{N}(0, \sigma_n^2)$ for all $i > 1$. The full details regarding the setup of each figure appear in Appendix \ref{app:numerical_details}.

\subsection{Empirical Motivation}
\label{sec:empirical_motivation}

To motivate our construction, we show that the penultimate representations of a NN trained on real-world images exhibit a geometric structure that aligns with our signal-plus-noise decomposition.
Analyzing the second-order statistics, we show that the noise structure is anisotropic with respect to the task as in our suggested model.

Specifically, we consider the penultimate layer of a ResNet-18 trained on ImageNet and evaluated on CIFAR-10 (clean) versus CIFAR-10C \citep{hendrycks2019benchmarking} (noisy). We focus on a binary task (Planes vs. Cats), and calculate for each class the empirical class means $\bmu_c =\frac{1}{N_c} \sum_i \bx_i$ and the disjoint empirical centered covariance matrices $\Sigma_c = \frac{1}{N_c} \sum_{i} (\bx_i - \bmu_c)(\bx_i - \bmu_c)^\top$.

To understand the noise geometry, we project these covariances matrices onto two subspaces: the signal subspace, spanned by the difference of the class means $\boldsymbol{v} = \bmu_1 - \bmu_2$, and its orthogonal complement. Computing the standard deviation along the signal axis and the root-mean-square (RMS) of the eigenvalues in the orthogonal space, we get empirical equivalents of $\sigma_f$ and $\sigma_n$.
As shown in \cref{fig:feature_geometry_motivation}, the intra-class variance along the signal direction ($\sigma_f$) differs significantly from the variance in the orthogonal directions ($\sigma_n$).
Furthermore, when the input images are corrupted (using CIFAR-10C), these noise components grow.

These observations suggests that a single isotropic noise parameter cannot fully capture the geometry of real-world representations, which justify our choice of a more expressive two-parameter model (defined by $\mathcal{D}_f$ and $\mathcal{D}_n$).
We will return to this exact empirical setting in \cref{sec:resnet_validation} to validate our main theoretical predictions on real-world data.

\begin{figure}[t!]
    \centering
    \begin{subfigure}[t]{0.24\textwidth}
        \centering
        \includegraphics[width=\textwidth]{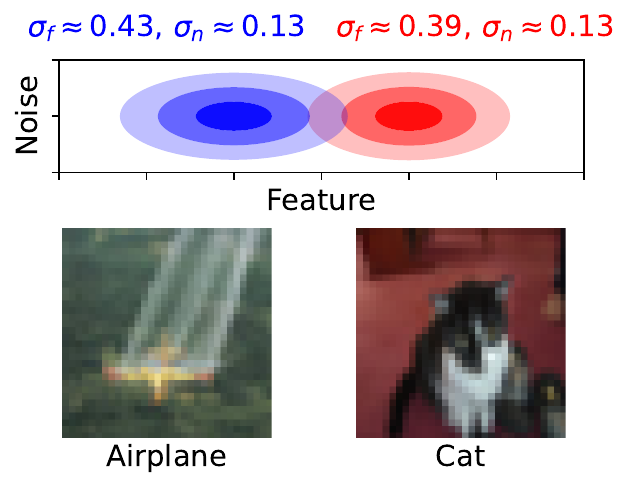}
    \end{subfigure}
    \begin{subfigure}[t]{0.228\textwidth}
        \centering
        \includegraphics[width=\textwidth]{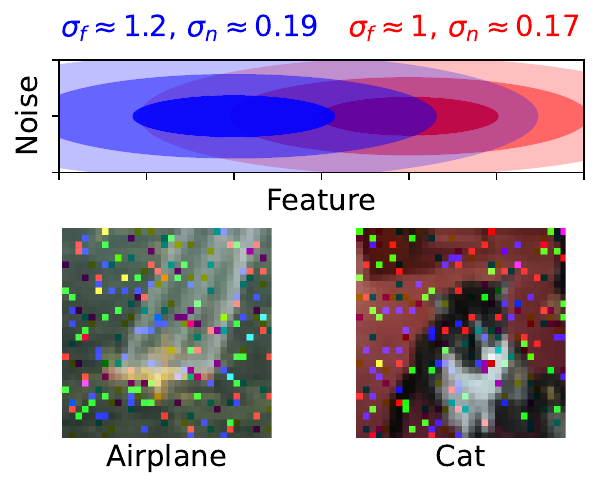}
    \end{subfigure}
    \begin{subfigure}[t]{0.428\textwidth}
        \centering
        \includegraphics[width=\textwidth]{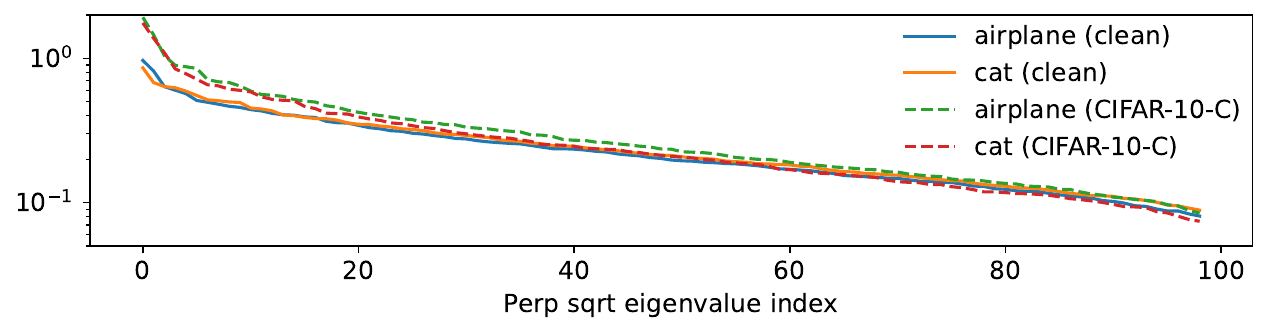}
    \end{subfigure}
    \caption{\textbf{Effect of Input Noise on Feature Geometry.} 
    \textbf{Top Row:} Visualization of penultimate layer features for two classes (Planes vs. Cats) from ResNet-18. The horizontal axis represents the signal direction (connecting class means), while the vertical axis represents the effective orthogonal noise radius. Left: Clean CIFAR-10 data. Right: Noisy CIFAR-10C data.
    \textbf{Bottom Row:} The eigenspectrum of the orthogonal noise covariance matrix. For more details, see Appendix \ref{app:numerical_details}.
    }
    \label{fig:feature_geometry_motivation}
    \vspace{-10pt}
\end{figure}

\section{The Noiseless Feature Regime}

\label{sec:noiseless_regime}

We begin by investigating logistic regression under the signal-plus-noise data decomposition, in the regime where the feature noise in the signal direction vanishes ($\sigma_f=0$), yielding several surprising results.
While idealized, the phenomena derived below persist approximately whenever the signal noise is sufficiently small ($\sigma_f \ll \mu_f, \sigma_n$).

\subsection{Shift of interpolation threshold}
We analyze the high-dimensional proportional limit where $d, N \to \infty$ while the ratio $\lambda = d/N$ remains fixed. In the unregularized setting, the interpolation threshold is known to occur at $\lambda_c = 1/2$ \citep{gardner1989unfinished}. The reason is tied to  separability: for $\lambda > 1/2$, points in general position are linearly separable regardless of their labels. This geometric freedom allows the optimizer to satisfy the margin constraints by discovering separating directions distinct from the ``true'' signal $\boldsymbol{S}=\boldsymbol{e}_1$, leading to overfitting~\citep{beck2025grokking}.
In contrast, we will show that the introduction of \textbf{any} convex logit regularization will shift abruptly the interpolation threshold to $\lambda_c = 1$.

\begin{proposition}
    Let $\sigma_f=0$ and $\alpha>0$. Then, the minimizer of the training loss achieves perfect generalization accuracy if and only if $\lambda = d/N < 1$.
\end{proposition}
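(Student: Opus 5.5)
With $\sigma_f=0$, after absorbing labels every training point has the form $\bx_i = \mu_f\boldsymbol{e}_1 + \sigma_n y_i\boldsymbol{\xi}_{\perp,i}$, with $\boldsymbol{\xi}_{\perp,i}\in\mathbb{R}^{d-1}$ supported on the orthogonal complement of $\boldsymbol{e}_1$. Write $\bS=(s_1,\bS_\perp)$. The logits are then
\[
z_i = \mu_f s_1 + \sigma_n y_i\,\bS_\perp^\top\boldsymbol{\xi}_{\perp,i}.
\]
A test point $(\bx,y)$ is classified by the sign of $y\bS^\top\bx = \mu_f s_1 + \sigma_n\,\bS_\perp^\top \boldsymbol{\xi}_\perp$. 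Here $\boldsymbol{\xi}_\perp$ is fresh Gaussian noise with a continuous, unbounded distribution. So perfect generalization holds if and only if $\bS_\perp=\mathbf{0}$ and $s_1>0$. The proposition therefore reduces to a statement about the training minimizer: $\bS_\perp=\mathbf{0}$ exactly when $\lambda<1$. I will use the lower bound from \cref{sec:clustering}: $\mathcal{L}\ge\ell(z^*)$, with equality iff $z_i=z^*$ for every $i$, where $z^*$ is the unique finite minimizer of $\ell$ (it exists because $\alpha>0$ and $f$ is convex and even).

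\textbf{Case $\lambda<1$.} I will argue by strict convexity/Jensen, in the spirit of \cref{prop:gaussian_data}. Fix $s_1$ and consider the $N$ vectors $y_i\boldsymbol{\xi}_{\perp,i}\in\mathbb{R}^{d-1}$. Since $d-1<N$, these span $\mathbb{R}^{d-1}$ almost surely. Hence for $\bS_\perp\neq\mathbf{0}$ the deviations $u_i=\sigma_n y_i\bS_\perp^\top\boldsymbol{\xi}_{\perp,i}$ are not all zero.

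The key step is to compare $\bS$ with the projected point $\bS' = (s_1+\bar u/\mu_f,\mathbf{0})$, where $\bar u = N^{-1}\sum_i u_i$. This gives $z_i' = \bar z$ for all $i$. By convexity of $\ell$,
\[
N^{-1}\sum_i \ell(z_i)\ge \ell(\bar z)=\mathcal{L}(\bS').
\]
Plain convexity only gives $\ge$, so I will invoke strict convexity of $\ell$. This holds whenever $\alpha<1$, since $\ell_{\mathrm{CE}}$ is strictly convex and $f$ is convex. The inequality is then strict unless all $z_i$ are equal, which forces $u_i$ constant in $i$. That in turn requires $\bS_\perp$ to solve an overdetermined linear system $\sigma_n y_i\boldsymbol{\xi}_{\perp,i}^\top\bS_\perp=c$ with $N>d$ equations. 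The augmented system has only $d$ unknowns including $c$, so a nonzero solution exists with probability zero.

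Therefore the minimizer has $\bS_\perp=\mathbf{0}$ and $s_1=z^*/\mu_f$. Since $z^*>0$ for the CE-plus-even-penalty loss (the derivative at $0$ is $-(1-\alpha)/2<0$), we get $s_1>0$ and perfect test accuracy.

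\textbf{Case $\lambda\ge 1$.} The aim is to show the training loss attains its absolute lower bound with $\bS_\perp\neq\mathbf{0}$. When $d-1\ge N$, the Gaussian vectors $y_i\boldsymbol{\xi}_{\perp,i}$ are almost surely linearly independent. For any $s_1$, I can then solve $\sigma_n y_i\boldsymbol{\xi}_{\perp,i}^\top\bS_\perp = z^*-\mu_f s_1$ for all $i$. This yields a whole affine family of global minimizers; choosing $s_1=0$ gives one with $\bS_\perp\neq\mathbf{0}$.

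I then need to argue that the minimizer actually reached is not the one with $\bS_\perp=\mathbf{0}$. The natural reading is the gradient-descent limit from zero initialization, which is the minimum-norm interpolant of $z_i=z^*$. That solution lies in the row span of the data and generically has a nonzero orthogonal component: the minimum-norm solution of $X\bS=z^*\1$ is not proportional to $\boldsymbol{e}_1$, because $\1$ is not the image of $\boldsymbol{e}_1$ alone once the orthogonal columns have rank $N$. Its test error is then strictly positive.

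\textbf{Main obstacle.} The delicate point is exactly the boundary $d\approx N$ and which minimizer is meant. For $\lambda>1$ the minimizer is non-unique, so "the minimizer" must be read as the minimum-norm or gradient-descent one. I will address this by stating that choice explicitly and checking that the $\boldsymbol{e}_1$-only interpolant is not of minimal norm, via a Schur-complement computation on $X X^\top$. The equality case $\lambda=1$ in the limit is handled by the same independence argument, since $d-1<N$ fails only by one dimension, which is negligible asymptotically.
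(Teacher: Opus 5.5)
Your argument is correct in substance and follows the paper's skeleton. The point $(z^*/\mu_f)\boldsymbol{e}_1$ attains the bound $\mathcal{L}\ge\ell(z^*)$, and for $d\le N$, strict convexity of $\ell$ plus full column rank of the data matrix make it the unique minimizer. Your Jensen comparison with the projected point $\bS'$ is a longer route to the same rank condition, since $[A,-\mathbf{1}]$ is the data matrix up to column permutation and scaling. It is quicker to note that any minimizer must satisfy $X\bS=z^*\mathbf{1}$.

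You improve on the paper in the converse. For $\lambda>1$, $(z^*/\mu_f)\boldsymbol{e}_1$ is still a global minimizer, so the paper's claim that ``the'' minimizer has $\bS_\perp\neq\mathbf{0}$ needs a selection rule. Your gradient-descent-from-zero (minimum-norm) choice supplies one.

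Two slips need fixing.

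First, your stated reason that the min-norm interpolant is not proportional to $\boldsymbol{e}_1$ is false as written, because $X\boldsymbol{e}_1=\mu_f\mathbf{1}$. The right reason is $\boldsymbol{e}_1\notin\mathrm{row}(X)$: $\sum_i c_i\bx_i=\boldsymbol{e}_1$ forces $\sum_i c_i y_i\boldsymbol{\xi}_{\perp,i}=\mathbf{0}$, hence $c=\mathbf{0}$ when these $N$ vectors are independent, which contradicts $\mu_f\sum_i c_i=1$. Quantitatively, Sherman--Morrison applied to $XX^\top=\mu_f^2\mathbf{1}\mathbf{1}^\top+\sigma_n^2G$, with $G_{ij}=y_iy_j\boldsymbol{\xi}_{\perp,i}^\top\boldsymbol{\xi}_{\perp,j}$, gives $\|\bS_{\mathrm{mn}}\|^2=z^{*2}a/(1+\mu_f^2a)<(z^*/\mu_f)^2$ with $a=\mathbf{1}^\top(\sigma_n^2G)^{-1}\mathbf{1}$.

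Second, your claim that $\lambda=1$ is covered by the same independence argument is wrong. At $d=N$ your own first-case argument applies verbatim, since it only needs $N\ge d$. So the minimizer is unique and generalizes perfectly. The ``only if'' direction can therefore only be claimed for $\lambda>1$, which is also all the paper's proof addresses.
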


\begin{proof}
    With $\sigma_f=0$, the signal component is deterministic: $y_i \boldsymbol{e}_1^\top \boldsymbol{x}_i = \mu_f$ for all $i$. Since $\alpha > 0$, the per-sample loss is strictly convex with a unique global minimum $z^*$.
    
    We can construct a weight vector aligned purely with the signal, $\boldsymbol{S} = (z^*/\mu_f)\boldsymbol{e}_1$, such that $y_i \boldsymbol{S}^\top \boldsymbol{x}_i = z^*$ for all samples. This configuration achieves the global unconstrained lower bound of the total loss (the sum of per-sample minima). This solution is unique as the data matrix is full rank ($N > d$), and since it aligns perfectly with $\boldsymbol{e}_1$, it yields $100\%$ test accuracy.
    
    Conversely, for $\lambda > 1$, the data matrix has a non-trivial null space. The loss minimizer takes the form $\boldsymbol{S}_{\min} = (z^*/\mu_f)\boldsymbol{e}_1 + \boldsymbol{S}_{\perp}$, where $\boldsymbol{S}_{\perp} \neq \boldsymbol{0}$ lies in the null space of the training data, degrading test accuracy.
\end{proof}
\begin{figure}[t!]
    \centering
    \includegraphics[scale=0.47]{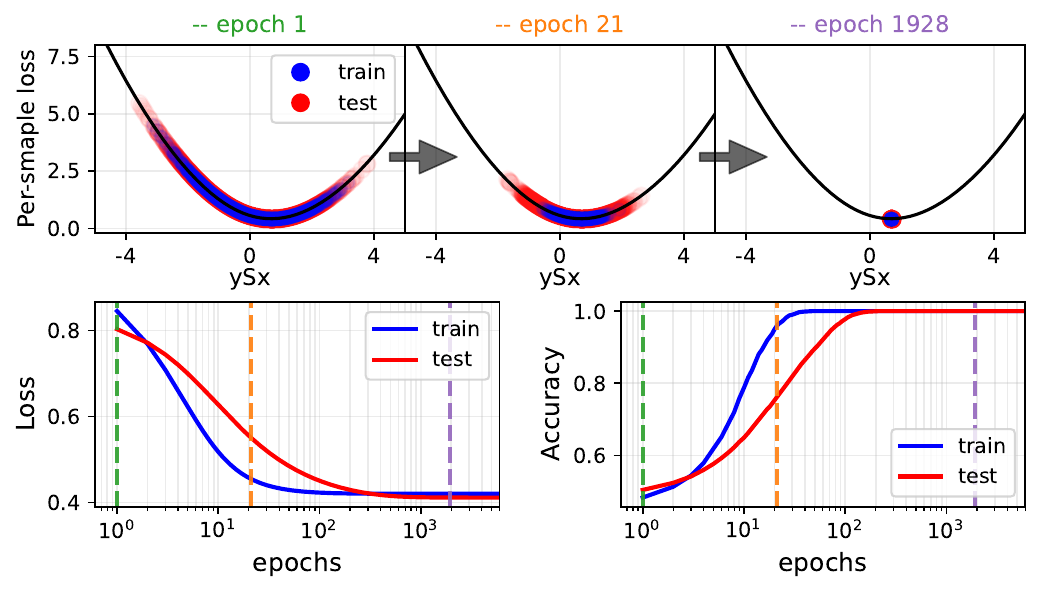}
    \caption{\textbf{Perfect generalization with zero feature noise ($\sigma_f=0$).} 
    \textbf{Top:} Evolution of the logit distribution across three training epochs. At convergence, all logits collapse to a single target value $z^*$. \textbf{Bottom:} Loss and accuracy over time. Vertical dashed lines indicate the epochs corresponding to the top snapshots.
    }
    \label{fig:logit_clustering_zero_sigma_f}
\end{figure}

\cref{fig:logit_clustering_zero_sigma_f} demonstrates the fact that the logits collapse to a single point in this regime. Notably, this result is independent of the regularization function $f(z)$ (consistent with our previous discussion) and the orthogonal noise scale $\sigma_n$ (which is not unique to $\sigma_f=0$ as discussed below). This suggests that for sufficiently small $\sigma_f$, logit regularization is highly beneficial, effectively extending the generalization regime to $\lambda < 1$. We examine the case of non-negligible $\sigma_f$ in the following section.

\subsection{Grokking Dynamics.}

Grokking (i.e., delayed generalization accompanied by non-monotonic test loss) has drawn significant recent attention. Developing simple analytical models that exhibit grokking is crucial for isolating the mechanisms that induce it. Previous work has identified grokking in the unregularized version of this setup, occurring near the critical threshold $\lambda_c=1/2$~\citep{beck2025grokking}.
Notably, we find that logit regularization can also induce grokking dynamics across the broad region $1/2 < \lambda < 1$ --- see \cref{fig:grokking}). This occurs because the interpolation threshold is shifted to $\lambda_c=1$. Consequently, the system exhibits delayed generalization as $\alpha \to 0$: the model initially overfits (in the direction of the unregularized implicit bias) before eventually converging to the generalizing solution. The timescale of this delay (``grokking time'') diverges as $\alpha \to 0$, consistent with previous works showing that the grokking time diverges as some control parameter is close to criticality~\citep{levi2024grokkinglinear}.

\begin{figure}[t!]
\begin{centering}
\includegraphics[scale=0.38]{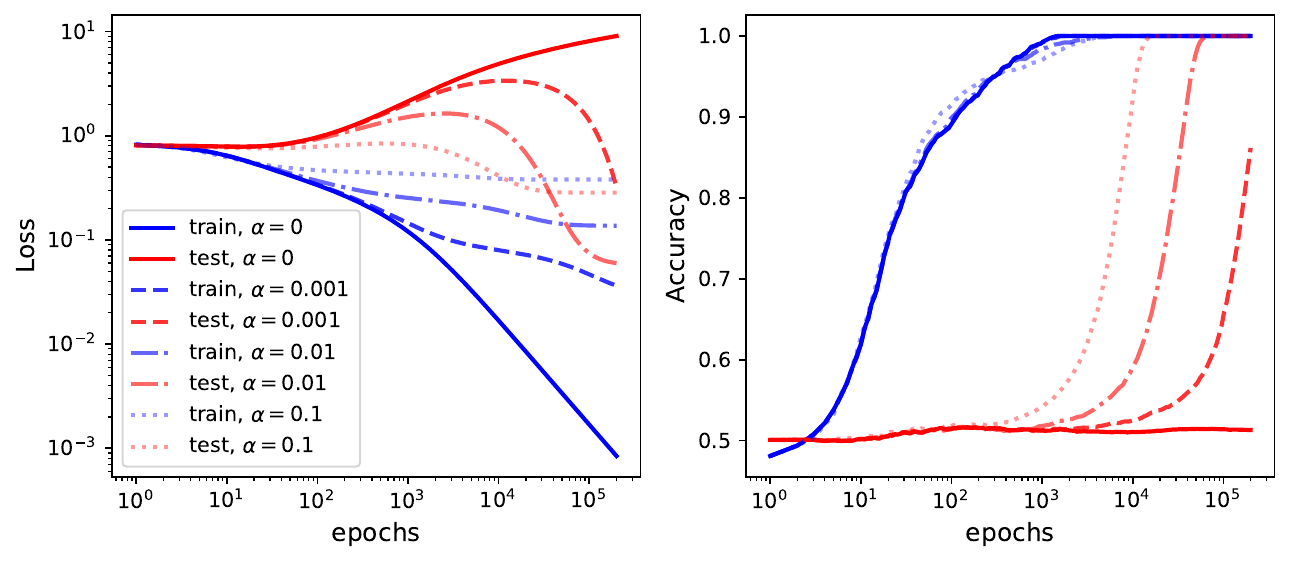}
\par\end{centering}
\caption{\label{fig:grokking}\textbf{Grokking.} Loss and accuracy curves for $\sigma_{f}=0$ and $\lambda=0.7$. While the unregularized model ($\alpha=0$) overfits, logit regularization induces grokking (delayed generalization), with the delay diverging as $\alpha$ approaches zero.}
\end{figure}

\section{Invariance to the Noise Scale}

In this section, we show that in the presence of logit-regularization, the optimal generalization accuracy becomes independent of the orthogonal noise scale $\sigma_n$. We then use this result to map the specific regimes where logit regularization outperforms the unregularized baseline.

\subsection{Proof of the Invariance}
\begin{proposition}
\label{prop:sigma_n_independence}
Let $\alpha > 0$, $\lambda = d/N < 1$, and fix $\sigma_f, \mu_f$. Then, the optimal generalization accuracy $\mathcal{A}^{\min}_{\mathrm{gen}}$ is strictly independent of $\sigma_n$.
\end{proposition}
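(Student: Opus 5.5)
The proof uses a rescaling symmetry. Changing $\sigma_n$ amounts to a diagonal linear change of coordinates on the training data, and the training loss depends on the weights only through the logits. Fix a realization of the standardized noise variables $\xi_{f,i}$ and $\boldsymbol{\xi}_{\perp,i}$ and the labels, and write the label-absorbed sample as
\[
\bx_i = (\mu_f + \sigma_f \xi_{f,i})\,\boldsymbol{e}_1 + \sigma_n \boldsymbol{\xi}_{\perp,i}.
\]
Let $D_{\sigma_n} = \mathrm{diag}(1,\sigma_n,\dots,\sigma_n)$. Then $\bx_i = D_{\sigma_n}\bx_i^{(1)}$, where $\bx_i^{(1)}$ is the same sample with $\sigma_n=1$. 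For any weight vector $\bs$ we have $z_i = \bs^\top D_{\sigma_n}\bx_i^{(1)} = (D_{\sigma_n}\bs)^\top \bx_i^{(1)}$. Hence the training loss satisfies
\[
\mathcal{L}_{\sigma_n}(\bs) = \mathcal{L}_1(D_{\sigma_n}\bs).
\]
This relies only on the per-sample form $\mathcal{L}=N^{-1}\sum_i \ell(z_i)$; with an $L_2$ penalty it would fail. Since $D_{\sigma_n}$ is invertible for $\sigma_n>0$, the minimizers are related by $\bs_{\min}(\sigma_n) = D_{\sigma_n}^{-1}\bs_{\min}(1)$.

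Uniqueness of the minimizer is needed for this to be well-defined. For $\alpha>0$ the per-sample loss is strictly convex. For $\lambda<1$ the $N\times d$ data matrix has full column rank almost surely (Gaussian noise with $\sigma_f>0$, or with $\sigma_f=0$ as in the earlier proposition). So $\mathcal{L}$ is strictly convex in $\bs$ and coercive, because $\ell$ has a finite minimum and grows at infinity. The minimizer therefore exists and is unique, which is exactly where the hypotheses $\alpha>0$ and $\lambda<1$ enter.

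Next, the generalization accuracy is computed for a test point $\bx = D_{\sigma_n}\bx^{(1)}$ drawn from the same model. Its logit is $\bs_{\min}(\sigma_n)^\top D_{\sigma_n}\bx^{(1)} = \bs_{\min}(1)^\top \bx^{(1)}$, so the test logit has the same distribution for every $\sigma_n$. The accuracy $\Pr[z>0]$ is therefore identical, conditionally on each training realization. Averaging over the training set, whose standardized variables do not depend on $\sigma_n$, gives equality of the expected optimal accuracy, and of its high-dimensional limit if that is the notion intended. The same coupling also shows that the whole training trajectory maps to itself in preconditioned form. However, the statement concerns the minimizer only, so the dynamics are not needed.

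The argument is almost entirely bookkeeping. The one point needing care is the uniqueness and existence step, which ensures that ``the minimizer'' is unambiguous. It is also the reason the claim fails at $\alpha=0$: there the separable regime has no finite minimizer, and the max-margin direction is not equivariant under the non-orthogonal map $D_{\sigma_n}$. I would close with that remark, together with a note that the argument extends to any invertible linear transformation acting only on the orthogonal subspace.
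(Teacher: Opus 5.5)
Your proposal is correct and uses the same rescaling argument as the paper: scaling the orthogonal noise by $\beta$ is compensated by scaling $\bs_\perp$ by $\beta^{-1}$, which leaves the training logits and the test-logit distribution unchanged. Your identity $\mathcal{L}_{\sigma_n}(\bs)=\mathcal{L}_1(D_{\sigma_n}\bs)$ and your existence/uniqueness step make explicit what the paper's ``the weights must scale inversely'' leaves implicit; note that strict convexity actually comes from the cross-entropy term (for any $\alpha<1$), while $\alpha>0$ is what supplies coercivity.
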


\begin{proof}
    This follows from a simple scaling argument. 
     Let $\boldsymbol{S}_{\min}=(S_{1,\min}, \boldsymbol{S}_{\perp,\min})$ denote the minimizer. The loss depends exclusively on the scalar logits $z_i = S_{1,\min}x_{i,1} + \boldsymbol{S}_{\perp,\min}^{T}\boldsymbol{x}_{i,\perp}$. Consider a rescaling of the noise variance $\sigma_{n} \to \beta\sigma_{n}$, which implies $\boldsymbol{x}_{i,\perp} \to \beta\boldsymbol{x}_{i,\perp}$. To maintain the optimal logit values $z_{\min}$ (and thus minimize the loss), the weights must scale inversely: $\boldsymbol{S}_{\perp,\min} \to \beta^{-1}\boldsymbol{S}_{\perp,\min}$. Consequently, the product $\boldsymbol{S}_{\perp,\min}^{T}\boldsymbol{x}_{i,\perp}$ remains invariant. Since the distribution of the test logits depends on this same product, the test accuracy is invariant to $\sigma_n$.
\end{proof}

Notably, while $\mathcal{A}^{\min}_{\mathrm{gen}}$ is independent of $\sigma_n$, the direction $\bs_{min}$ is not. The cosine similarity with the feature axis $\boldsymbol{e}_1$, denoted by $\rho$, explicitly depends on the noise. Since $\|\boldsymbol{S}_{\perp,\min}\| \propto \sigma_n^{-1}$, the alignment behaves as:
\begin{equation}
    \rho_{\min} = \frac{S_{1,\min}}{\|\boldsymbol{S}_{\min}\|} = \frac{1}{\sqrt{1 + (C/\sigma_n)^2}},
    \label{eq:rho_min}
\end{equation}
where $C$ depends on $\sigma_f$ but is independent of $\sigma_n$ (see Appendix \ref{subsec:cosine_similarity_dependence}).

To demonstrate how can it be that the accuracy remains constant despite the changing direction of $\boldsymbol{S}$, we examine the case of Gaussian data. The analytical expression for generalization accuracy in this case is:
\begin{equation}
    \mathcal{A}_{\mathrm{gen}}
    =
    \frac{1}{2}\left[
    1+\mathrm{erf}\!\left(
    \frac{\rho \mu_{f}}{\sqrt{2\bigl(\rho^{2}\sigma_{f}^{2}+(1-\rho^{2})\sigma_{n}^{2}\bigr)}}
    \right)
    \right].
    \label{eq:A_gen_vs_rho}
\end{equation}
Substituting $\rho_{\min}$ from \cref{eq:rho_min} into \cref{eq:A_gen_vs_rho} cancels the $\sigma_n$ terms in the denominator, rendering the final accuracy explicitly independent of the noise scale.
See \cref{fig:A_test_vs_lambda} in the appendix for more empirical evidence to the $\sigma_n$-invariance.

\subsection{When does logit regularization help?}
Since the regularized accuracy $\mathcal{A}_{\alpha>0}^{\min}$ is constant with respect to $\sigma_n$ while the unregularized accuracy $\mathcal{A}_{\alpha=0}^{\min}$ change, one may find a critical noise threshold where these curves intersect. This intersection defines the point at which logit regularization becomes advantageous. For Gaussian data, we illustrate this crossover in the left panel of \cref{fig:sigma_n_independence_and_phase_diagram}. By tracing this threshold across varying $\sigma_f$, we construct the phase diagram shown in the right panel, presenting the regime where regularization improves performance.

\begin{figure}[t!]
    \centering
    \includegraphics[scale=0.32]{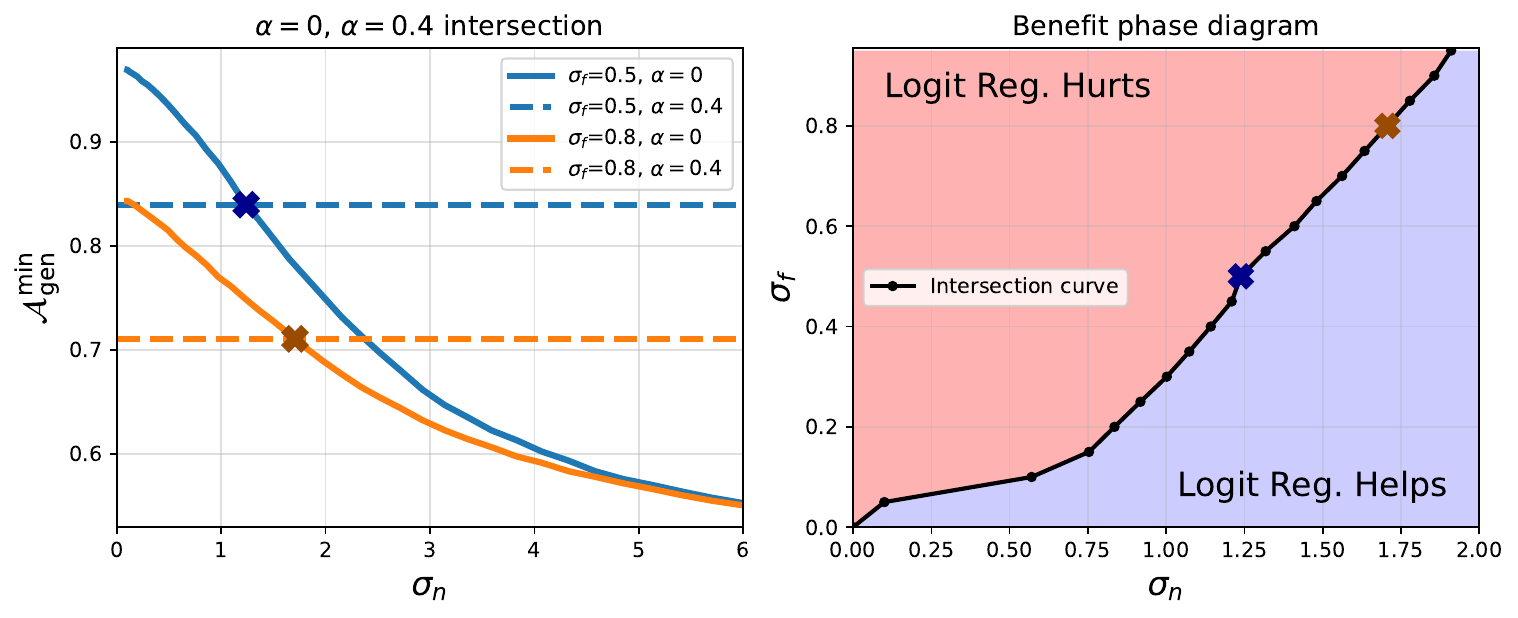}
    \caption{ \textbf{Benefit phase diagram.}
    \textbf{Left:} Final generalization accuracy as a function of $\sigma_n$, comparing the unregularized ($\alpha=0$) and regularized ($\alpha>0$) cases, for $\lambda=0.7$. \textbf{Right:} Phase diagram showing regions where logit regularization is beneficial. The boundary curve is constructed from the intersection points (as shown in the left panel) for different values of $\sigma_f$. For illustration, the intersection points of the unregularized and regularized curves for $\sigma_f=0.5, 0.8$ (left) are marked as ``X''s of corresponding colors on the boundary curve (right). For an equivalent figure in the $\lambda<1/2$ regime, see \cref{fig:sigma_n_independence_and_phase_diagram_v2} in the appendix.}
    \label{fig:sigma_n_independence_and_phase_diagram}
\end{figure}

\section{Generalization to Multi-class Setting}
\label{sec:multiclass}

In this section, we extend our analysis to the $K$-class setting, demonstrating that the core geometric mechanism of logit clustering remains the same.

\subsection{Setup}

We consider a linear classifier producing logits $\boldsymbol{z} = W\boldsymbol{x} + \boldsymbol{b}$, where $W \in \mathbb{R}^{K \times d}$. The predicted class is $c = \mathrm{argmax}_k z_k$.
Let $\boldsymbol{y}$ be the one-hot label vector for class $c$. We minimize the empirical loss $\mathcal{L} = \frac{1}{N}\sum_{i} \ell(\boldsymbol{z}^{(i)}, \boldsymbol{y}^{(i)})$, where the per-sample loss is defined as:
\begin{equation}\label{eq:loss_logit_regularization_multiclass}
    \ell(\boldsymbol{z}, \boldsymbol{y}) = (1-\alpha)\ell_{\mathrm{CE}}(\boldsymbol{z}, \boldsymbol{y}) + \alpha f(\boldsymbol{z}).
\end{equation}
Here, $\ell_{\mathrm{CE}}(\boldsymbol{z}, \boldsymbol{y}) = -\sum_k y_k \log p_k$ is the standard cross-entropy loss, and $f(\boldsymbol{z})$ is a symmetric, convex regularization term. We note that standard LS is recovered by choosing (See Appendix \ref{appendix:label_smoothing}):
\begin{equation}
    f_{\mathrm{LS}}(\boldsymbol{z}) = \log\left(\sum_{k=1}^K e^{z_k}\right) - \frac{1}{K}\sum_{k=1}^K z_k.
\end{equation}

To extend the signal-plus-noise decomposition, we assume the class means $\{\boldsymbol{\mu}_c\}_{c=1}^K$ are affinely independent, spanning a $(K-1)$-dimensional centered signal subspace $\mathcal{S}=\mathrm{span}\{\boldsymbol{\mu}_c-\bar{\boldsymbol{\mu}}\}_{c=1}^K$. The input $\boldsymbol{x}$ for a given class $c$ is modeled as $\boldsymbol{x} = \boldsymbol{\mu}_c + \boldsymbol{\xi}$. The noise vector $\boldsymbol{\xi}$ decomposes into signal-aligned and orthogonal components:
\begin{equation}
    \boldsymbol{\xi} = \sigma_f \boldsymbol{\eta}_f + \sigma_n \boldsymbol{\eta}_n,
\end{equation}
where $\boldsymbol{\eta}_f \in \mathcal{S}$ and $\boldsymbol{\eta}_n \in \mathcal{S}^\perp$ are drawn from standard isotropic distributions $\mathcal{D}_f$ and $\mathcal{D}_n$, scaled by $\sigma_f$ and $\sigma_n$, respectively.

\begin{figure}[t]
    \centering
    \includegraphics[scale=0.3]{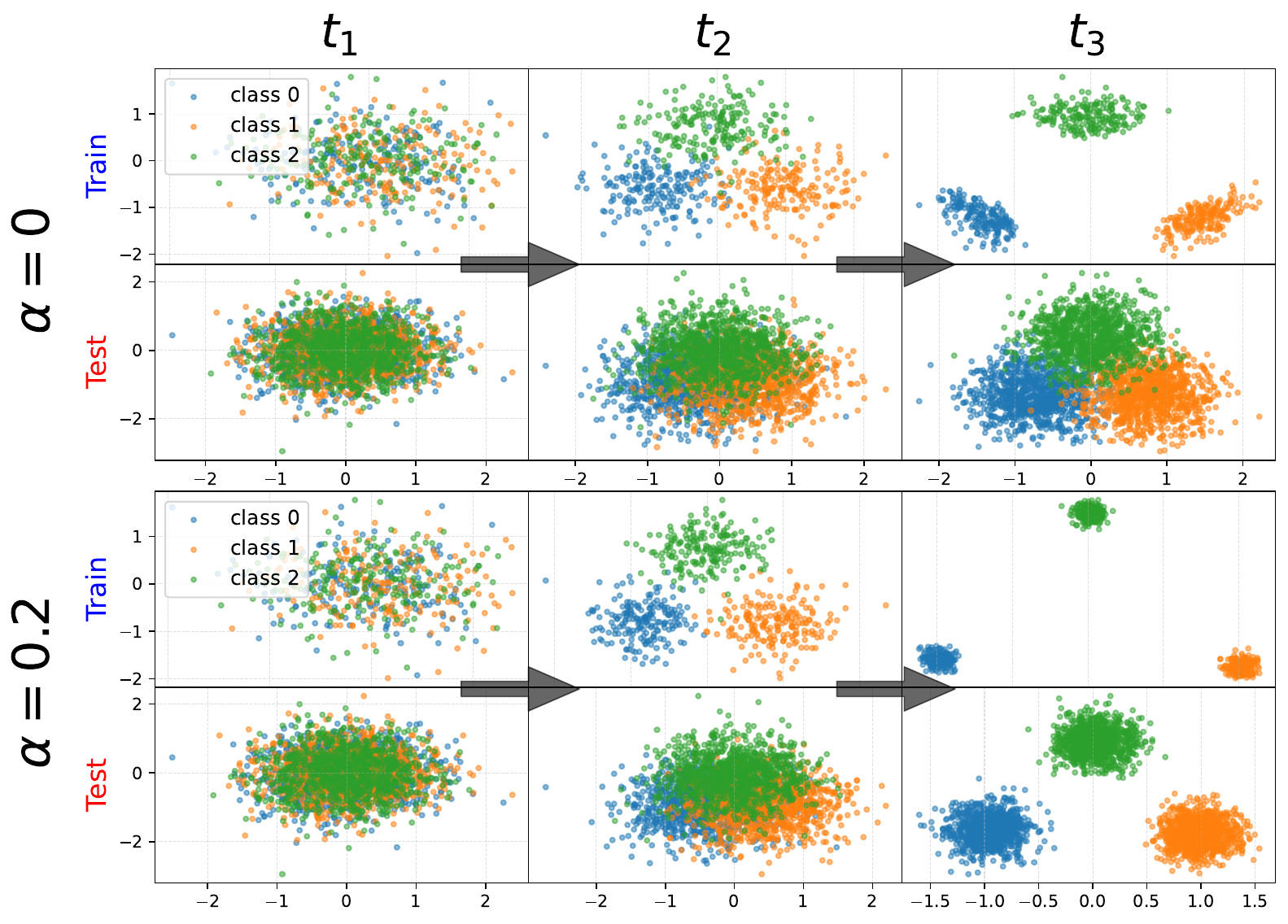}
    \caption{\textbf{Multi-class Logit Clustering.} 
    Projection of the data from three classes onto the subspace spanned by the relative weight differences ($S_1-S_2$ and $S_1-S_3$). The parameter is set to $\lambda=d/N=0.7$: since $\lambda > 1/2$, the data is separable.
    \textbf{Left ($\alpha=0$):} Standard cross-entropy separates classes linearly, maximizing the margin.
    \textbf{Right ($\alpha=0.2$):} Logit regularization enforces tight clustering around the simplex vertices.}
    \label{fig:multiclass_clustering}
\end{figure}

\begin{figure}[t!]
\centering\includegraphics[width=1.01\linewidth]{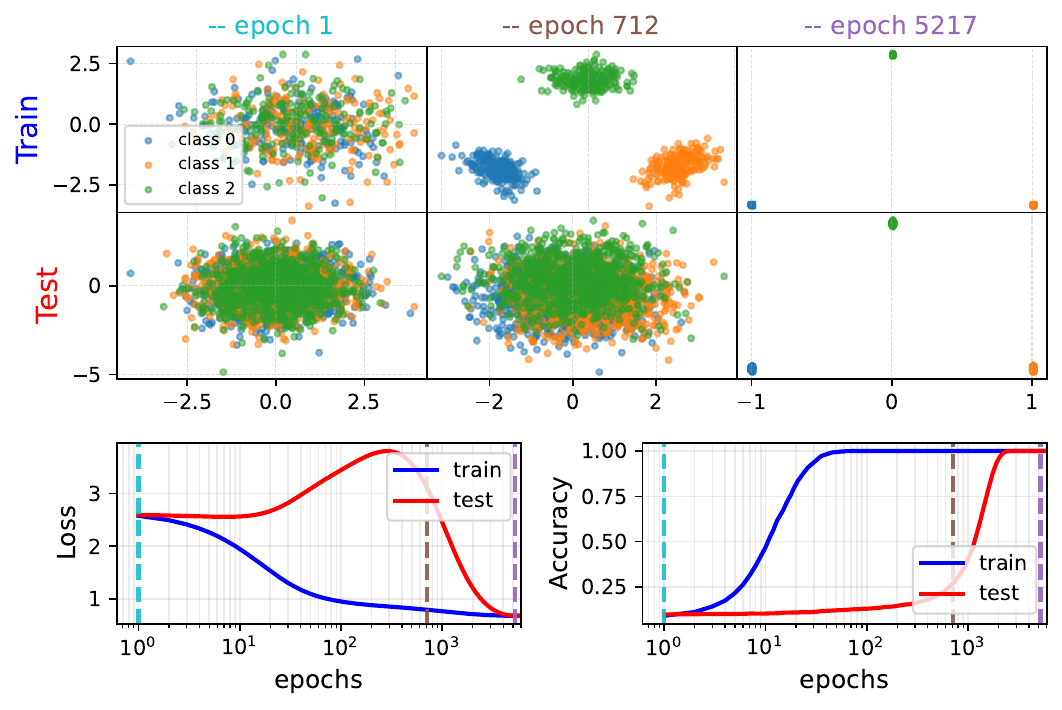}
\caption{
\textbf{Grokking in Multi-class and Small Logit Regularization.}
Dynamics in the multi-class case, for $\sigma_f=0$, $\lambda=0.7$ and $\alpha=0.04$.
\textbf{Top:} Logit projections at three time-steps. The model initially learns the overfitting solution before eventually collapsing the minima of the simplex vertices, corresponding to the generalization phase.
\textbf{Bottom:} Evolution of test accuracy and loss. We observe the characteristic grokking signature: the test loss initially rises (overfitting) before descending, and generalization accuracy is delayed.
}
\label{fig:multiclass_grokking}
\end{figure}

\subsection{Simplex Clustering minimization}

As in the binary classification case, the introduction of regularization ($\alpha > 0$) fundamentally alters the optimization landscape. 
The per-sample loss now possesses a finite minimum. Explicitly writing the per-sample loss for class $c$ (where $\boldsymbol{y}=\boldsymbol{y}_c$ is a one-hot vector with $1$ at index $c$), we obtain:
\begin{equation}
    \ell(\boldsymbol{z}, \boldsymbol{y}=\boldsymbol{y}_c) = (1-\alpha)
    \left( \log\left(\sum_{j}e^{z_{j}}\right) -z_{c}\right) 
    + \alpha f(\boldsymbol{z}).
\end{equation}
For each class $c$, there exists a unique target logit vector $\boldsymbol{z}^*_{(c)}$ that minimizes this per-sample loss. Due to permutation symmetry, this vector takes a specific form: a high value $z_{\mathrm{high}}$ at index $c$, and a uniform lower value $z_{\mathrm{low}}$ elsewhere. This represents the maximal certainty permitted by the regularization, corresponding to a probability distribution that peaks at class $c$ (with a value determined by $\alpha$) and uniform elsewhere.

The set of target vectors $\{\boldsymbol{z}^*_{(1)}, \dots, \boldsymbol{z}^*_{(K)}\}$ forms the vertices of a symmetric simplex in logit space. Consequently, the optimization objective transforms into a \emph{clustering} problem: the model seeks a weight matrix $W$ that collapses the logits of each class $c$ as tightly as possible around its corresponding vertex $\boldsymbol{z}^*_{(c)}$. 

This behavior is visualized in the numerical experiment presented in  \cref{fig:multiclass_clustering}. 
We assume Gaussian distributions for both signal and noise components. 
The model is trained with a quadratic regularization term $f(\boldsymbol{z})=\|\boldsymbol{z}\|^2$.
We then adopt the visualization scheme introduced in \citet{muller2019when}, projecting the data of three classes onto the plane spanned by the class templates at three distinct time-steps during training. In the unregularized regime ($\alpha=0$), the (projected) logits scatter to maximize separability. In contrast, regularization ($\alpha>0$) forces them to cluster at the simplex vertices.

Most of our key results from the binary setting extend naturally to the multi-class case: see \cref{app:multiclass_additional_details} for details. For example, \cref{fig:multiclass_grokking} demonstrates our main finding regarding the noiseless regime: for $1/2<\lambda<1$ and small $\alpha$, we observe grokking. First, the training samples become fully separated while the model overfits the test set. At later times, the logits switch their implicit bias due to logit-regularization, collapsing to the vertices of the simplex and obtaining perfect generalization.

\begin{figure}[!t]
    \centering
    \includegraphics[scale=0.47]{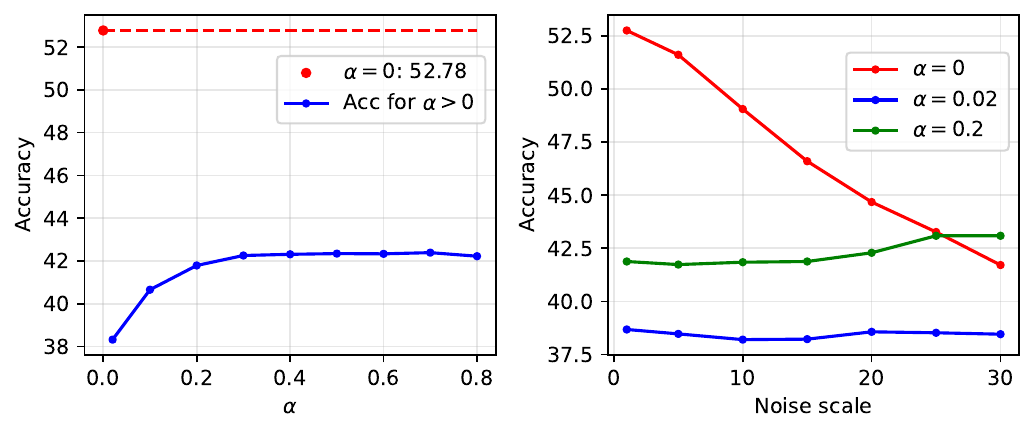}
    \caption{\textbf{Validation on ResNet-18 Penultimate Features.} 
    \textbf{Left:} Generalization accuracy vs. regularization strength $\alpha$. We observe a jump followed by saturation, confirming directional stability.
    \textbf{Right:} Accuracy vs. synthetic scaling of the orthogonal noise component. Logit regularization ($\alpha > 0$) makes the model insensitive to the noise scale, in contrast to the unregularized baseline.}
    \label{fig:Penultimate_resnet18_results}
\end{figure}

\section{Validation on Penultimate Layer Embeddings}
\label{sec:resnet_validation}

The analysis above was carried out in a linear setting with an idealized signal-plus-noise decomposition. We now validate two main qualitative predictions --- the sharp transition from $\alpha=0$ to $\alpha>0$ and the invariance to orthogonal noise scaling --- in a more realistic feature space.

\textbf{Setup.}
We use the same ResNet-18 penultimate-layer embeddings introduced in \cref{fig:feature_geometry_motivation}. We freeze the representation and train only a linear classifier on top, using the loss in \cref{eq:loss_logit_regularization_multiclass}, with quadratic logit regularization $f(\bz)=\|\bz\|^2$. See \cref{app:numerical_details} for more details.

\textbf{$\alpha$-Insensitivity.}
We examine the sensitivity of the optimal generalization accuracy to the regularization strength $\alpha$, see left panel of \cref{fig:Penultimate_resnet18_results}. As predicted in \cref{sec:alpha_independence}, the accuracy jumps significantly and enters a stable plateau. The deviation from it at smaller $\alpha$ is due
to the fact that the data is not a Gaussian.
 
\textbf{$\sigma_n$-independence via synthetic orthogonal rescaling.}
To directly test the invariance in Proposition~\ref{prop:sigma_n_independence}, we synthetically rescale only the component of each embedding that is orthogonal to the subspace spanned by the class means. This changes the effective $\sigma_n$ without altering the class-mean geometry. The right panel of \cref{fig:Penultimate_resnet18_results} shows that the unregularized classifier ($\alpha=0$) degrades substantially under this rescaling, whereas the regularized classifier ($\alpha>0$) is more robust, according to our expectations.

\section{Conclusion}

In this work, we have characterized the implicit bias of convex logit regularization, such as label smoothing, in linear classification. We showed that such regularization results in \emph{logit clustering} around the finite minima of the per-sample loss function. This provides a straightforward geometric explanation for the known label-smoothing effect, and demonstrates that the phenomenon is not unique to Label Smoothing, but is a general consequence of any loss that can be decomposed as a sum of per-sample losses with a finite minimum.

Remarkably, this clustering objective is intimately connected to classical statistical discrimination. We proved that for Gaussian data or quadratic loss functions, logit regularization drives the weight vector to align exactly with \emph{Fisher's Linear Discriminant} ($\Sigma^{-1}\boldsymbol{\mu}$). Empirically, we demonstrated that the direction remains close to this solution even beyond the idealized conditions. This result bridges a conceptual gap, showing that modern ``soft target'' techniques like Label Smoothing effectively recover the optimal linear discriminator for Gaussian data. These findings also suggest that the benefits of Label Smoothing are not unique to the specific functional form used in practice but are shared by a broad class of convex logit penalties.

Leveraging a signal-plus-noise decomposition, we also demonstrated that logit regularization shifts the interpolation threshold to $\lambda=1$ (yielding perfect generalization in regimes where unregularized models fail) and induces grokking dynamics under weak regularization. Finally, we proved that regularized accuracy is invariant to orthogonal noise scaling, allowing us to map the precise phase space where regularization is beneficial.

\textbf{Limitations and Future Work.} Our analysis relies on a linear framework. While we validated our findings on fixed ResNet embeddings, extending the logit clustering perspective to end-to-end training (where the representation is learned jointly) remains a primary challenge. Additionally, while we characterized the implicit bias for Gaussian data as the minimization of the coefficient of variation, identifying the analogous geometric invariant for general, heavy-tailed distributions remains an open question. Future research should investigate how this ``LDA-like'' bias interacts with the inductive bias of deep networks during feature learning, as well as the efficacy of this broader class of convex penalties in practical deep learning settings.

\bibliography{bibib}

\begin{thebibliography}{45}
\providecommand{\natexlab}[1]{#1}
\providecommand{\url}[1]{\texttt{#1}}
\expandafter\ifx\csname urlstyle\endcsname\relax
  \providecommand{\doi}[1]{doi: #1}\else
  \providecommand{\doi}{doi: \begingroup \urlstyle{rm}\Url}\fi

\bibitem[Beck et~al.(2025)Beck, Levi, and Bar-Sinai]{beck2025grokking}
A.~Beck, N.~I. Levi, and Y.~Bar-Sinai.
\newblock Grokking at the edge of linear separability.
\newblock In \emph{International Conference on Machine Learning (ICML)}, 2025.
\newblock URL \url{https://arxiv.org/abs/2410.04489}.

\bibitem[Celentano et~al.(2021)Celentano, Cheng, and Montanari]{celentano2021fom}
M.~Celentano, C.~Cheng, and A.~Montanari.
\newblock The high-dimensional asymptotics of first order methods with random data.
\newblock \emph{arXiv preprint arXiv:2112.07572}, 2021.
\newblock URL \url{https://arxiv.org/abs/2112.07572}.

\bibitem[Chandrasegaran et~al.(2022)Chandrasegaran, Tran, Zhao, and Cheung]{chandrasegaran2022revisiting}
K.~Chandrasegaran, N.-T. Tran, Y.~Zhao, and N.-M. Cheung.
\newblock Revisiting label smoothing and knowledge distillation compatibility: What was missing?
\newblock In \emph{Proceedings of the International Conference on Machine Learning (ICML)}, pages 2890--2916. PMLR, 2022.

\bibitem[Cover(1965)]{cover1965geometrical}
T.~M. Cover.
\newblock Geometrical and statistical properties of systems of linear inequalities with applications in pattern recognition.
\newblock \emph{IEEE Transactions on Electronic Computers}, EC-14\penalty0 (3):\penalty0 326--334, 1965.
\newblock \doi{10.1109/PGEC.1965.264137}.

\bibitem[Dauphin and Cubuk(2021)]{dauphin2021deconstructing}
Y.~Dauphin and E.~D. Cubuk.
\newblock Deconstructing the regularization of batchnorm.
\newblock In \emph{International Conference on Learning Representations (ICLR)}, 2021.

\bibitem[Deng et~al.(2019)Deng, Kammoun, et~al.]{deng2019doubledescent}
Z.~Deng, A.~Kammoun, et~al.
\newblock A model of double descent for high-dimensional binary classification.
\newblock \emph{arXiv preprint arXiv:1912.00887}, 2019.

\bibitem[Fisher(1936)]{fisher1936use}
R.~A. Fisher.
\newblock The use of multiple measurements in taxonomic problems.
\newblock \emph{Annals of Eugenics}, 7:\penalty0 179--188, 1936.
\newblock \doi{10.1111/j.1469-1809.1936.tb02137.x}.

\bibitem[Gao et~al.(2020)Gao, Wang, Herold, Yang, and Ney]{gao2020towards}
Y.~Gao, W.~Wang, C.~Herold, Z.~Yang, and H.~Ney.
\newblock Towards a better understanding of label smoothing in neural machine translation.
\newblock In \emph{Proceedings of the 1st Conference of the Asia-Pacific Chapter of the Association for Computational Linguistics and the 10th International Joint Conference on Natural Language Processing}, pages 212--223, 2020.

\bibitem[Gardner(1988)]{gardner1988space}
E.~Gardner.
\newblock The space of interactions in neural network models.
\newblock \emph{Journal of Physics A: Mathematical and General}, 21\penalty0 (1):\penalty0 257--270, 1988.

\bibitem[Gardner and Derrida(1989)]{gardner1989unfinished}
E.~Gardner and B.~Derrida.
\newblock Three unfinished works on the optimal storage capacity of networks.
\newblock \emph{Journal of Physics A: Mathematical and General}, 22:\penalty0 1983--1994, 1989.

\bibitem[Garrod and Keating(2025)]{garrod2024persistence}
C.~Garrod and J.~P. Keating.
\newblock The persistence of neural collapse despite low-rank bias, 2025.
\newblock URL \url{https://arxiv.org/abs/2410.23169}.

\bibitem[Guo et~al.(2017)Guo, Pleiss, Sun, and Weinberger]{guo2017calibration}
C.~Guo, G.~Pleiss, Y.~Sun, and K.~Q. Weinberger.
\newblock On calibration of modern neural networks.
\newblock In \emph{Proceedings of the International Conference on Machine Learning (ICML)}, pages 1321--1330. PMLR, 2017.

\bibitem[Guo et~al.(2024)Guo, Ross, Zhao, Andriopoulos, Ling, Xu, and Dong]{guo2024cross}
L.~Guo, K.~Ross, Z.~Zhao, G.~Andriopoulos, S.~Ling, Y.~Xu, and Z.~Dong.
\newblock Cross entropy versus label smoothing: A neural collapse perspective.
\newblock \emph{arXiv preprint arXiv:2402.03979}, 2024.

\bibitem[Han et~al.(2022)Han, Papyan, and Donoho]{han2022neuralcollapsemse}
X.~Y. Han, V.~Papyan, and D.~L. Donoho.
\newblock Neural collapse under mse loss: Proximity to and dynamics on the simplex etf.
\newblock In \emph{International Conference on Learning Representations (ICLR)}, 2022.
\newblock URL \url{https://openreview.net/forum?id=w1UbdvWH_R3}.

\bibitem[He et~al.(2015)He, Zhang, Ren, and Sun]{resnet18}
K.~He, X.~Zhang, S.~Ren, and J.~Sun.
\newblock Deep residual learning for image recognition, 2015.
\newblock URL \url{https://arxiv.org/abs/1512.03385}.

\bibitem[Hendrycks and Dietterich(2019)]{hendrycks2019benchmarking}
D.~Hendrycks and T.~Dietterich.
\newblock Benchmarking neural network robustness to common corruptions and perturbations.
\newblock In \emph{International Conference on Learning Representations (ICLR)}, 2019.
\newblock URL \url{https://arxiv.org/abs/1903.12261}.

\bibitem[Hinton et~al.(2015)Hinton, Vinyals, and Dean]{hinton2015distilling}
G.~Hinton, O.~Vinyals, and J.~Dean.
\newblock Distilling the knowledge in a neural network.
\newblock \emph{arXiv preprint arXiv:1503.02531}, 2015.
\newblock URL \url{https://arxiv.org/abs/1503.02531}.

\bibitem[Ji and Telgarsky(2019)]{ji2019implicit}
Z.~Ji and M.~Telgarsky.
\newblock The implicit bias of gradient descent on nonseparable data.
\newblock In \emph{Proceedings of The 32nd Conference on Learning Theory (COLT)}, 2019.
\newblock URL \url{https://proceedings.mlr.press/v99/ji19a.html}.

\bibitem[Kornblith et~al.(2021)Kornblith, Chen, Lee, and Norouzi]{kornblith2021better}
S.~Kornblith, T.~Chen, H.~Lee, and M.~Norouzi.
\newblock Why do better loss functions lead to less transferable features?
\newblock \emph{Advances in Neural Information Processing Systems (NeurIPS)}, 34:\penalty0 28648--28662, 2021.

\bibitem[Krizhevsky(2009)]{krizhevsky2009cifar}
A.~Krizhevsky.
\newblock Learning multiple layers of features from tiny images.
\newblock Technical report, University of Toronto, 2009.

\bibitem[Lee et~al.(2022)Lee, Cheung, and Zhang]{lee2022adaptive}
D.~Lee, K.~C. Cheung, and N.~L. Zhang.
\newblock Adaptive label smoothing with self-knowledge in natural language generation.
\newblock \emph{arXiv preprint arXiv:2210.13459}, 2022.

\bibitem[Levi et~al.(2024)Levi, Beck, and Bar-Sinai]{levi2024grokkinglinear}
N.~I. Levi, A.~Beck, and Y.~Bar-Sinai.
\newblock Grokking in linear estimators -- a solvable model that groks without understanding.
\newblock In \emph{International Conference on Learning Representations (ICLR)}, 2024.
\newblock URL \url{https://openreview.net/forum?id=GH2LYb9XV0}.

\bibitem[Liang et~al.(2022)Liang, Li, Bing, Zhao, Tang, Lin, and Fan]{liang2022efficient}
J.~Liang, L.~Li, Z.~Bing, B.~Zhao, Y.~Tang, B.~Lin, and H.~Fan.
\newblock Efficient one pass self-distillation with zipf’s label smoothing.
\newblock In \emph{European Conference on Computer Vision (ECCV)}, pages 104--119. Springer, 2022.

\bibitem[Lyu and Li(2020)]{lyu2020margin}
K.~Lyu and J.~Li.
\newblock Gradient descent maximizes the margin of homogeneous neural networks.
\newblock In \emph{International Conference on Learning Representations (ICLR)}, 2020.
\newblock URL \url{https://arxiv.org/abs/1906.05890}.

\bibitem[Mignacco et~al.(2020)Mignacco, Krzakala, Zdeborov{\'a}, et~al.]{mignacco2020regularization}
F.~Mignacco, F.~Krzakala, L.~Zdeborov{\'a}, et~al.
\newblock The role of regularization in classification of high-dimensional noisy gaussian mixture.
\newblock In \emph{Proceedings of the 37th International Conference on Machine Learning (ICML)}, 2020.

\bibitem[Montanari et~al.(2019)Montanari, Ruan, Sohn, and Yan]{montanari2019maxmargin}
A.~Montanari, F.~Ruan, Y.~Sohn, and J.~Yan.
\newblock The generalization error of max-margin linear classifiers: High-dimensional asymptotics in the overparameterized regime.
\newblock \emph{arXiv preprint arXiv:1911.01544}, 2019.
\newblock URL \url{https://arxiv.org/abs/1911.01544}.

\bibitem[M{\"u}ller et~al.(2019)M{\"u}ller, Kornblith, and Hinton]{muller2019when}
R.~M{\"u}ller, S.~Kornblith, and G.~E. Hinton.
\newblock When does label smoothing help?
\newblock In \emph{Advances in Neural Information Processing Systems (NeurIPS)}, 2019.
\newblock URL \url{https://arxiv.org/abs/1906.02629}.

\bibitem[Papyan et~al.(2020)Papyan, Han, and Donoho]{papyan2020neuralcollapse}
V.~Papyan, X.~Han, and D.~L. Donoho.
\newblock Prevalence of neural collapse during the terminal phase of deep learning training.
\newblock \emph{Proceedings of the National Academy of Sciences}, 2020.
\newblock URL \url{https://arxiv.org/abs/2008.08186}.

\bibitem[Pereyra et~al.(2017)Pereyra, Tucker, Chorowski, Kaiser, and Hinton]{pereyra2017regularizing}
G.~Pereyra, G.~Tucker, J.~Chorowski, {\L}.~Kaiser, and G.~Hinton.
\newblock Regularizing neural networks by penalizing confident output distributions.
\newblock \emph{arXiv preprint arXiv:1701.06548}, 2017.

\bibitem[Power et~al.(2022)Power, Burda, Edwards, Babuschkin, and Misra]{power2022grokking}
A.~Power, Y.~Burda, H.~Edwards, I.~Babuschkin, and V.~Misra.
\newblock Grokking: Generalization beyond overfitting on small algorithmic datasets.
\newblock \emph{arXiv preprint arXiv:2201.02177}, 2022.
\newblock URL \url{https://arxiv.org/abs/2201.02177}.

\bibitem[Prieto et~al.(2025)Prieto, Barsbey, Mediano, and Birdal]{prieto2025grokkingedgenumericalstability}
L.~Prieto, M.~Barsbey, P.~A.~M. Mediano, and T.~Birdal.
\newblock Grokking at the edge of numerical stability, 2025.
\newblock URL \url{https://arxiv.org/abs/2501.04697}.

\bibitem[Shen et~al.(2021)Shen, Liu, Xu, Chen, Cheng, and Savvides]{shen2021label}
Z.~Shen, Z.~Liu, D.~Xu, Z.~Chen, K.-T. Cheng, and M.~Savvides.
\newblock Is label smoothing truly incompatible with knowledge distillation: An empirical study.
\newblock \emph{arXiv preprint arXiv:2104.00676}, 2021.

\bibitem[Soudry et~al.(2018)Soudry, Hoffer, Nacson, Gunasekar, and Srebro]{soudry2018implicit}
D.~Soudry, E.~Hoffer, M.~S. Nacson, S.~Gunasekar, and N.~Srebro.
\newblock The implicit bias of gradient descent on separable data.
\newblock \emph{Journal of Machine Learning Research}, 19\penalty0 (70):\penalty0 1--57, 2018.
\newblock URL \url{https://arxiv.org/abs/1710.10345}.

\bibitem[S{\'u}ken{\'\i}k et~al.(2024)S{\'u}ken{\'\i}k, Mondelli, and Lampert]{sukenik2024neural}
P.~S{\'u}ken{\'\i}k, M.~Mondelli, and C.~Lampert.
\newblock Neural collapse versus low-rank bias: Is deep neural collapse really optimal?
\newblock \emph{arXiv preprint arXiv:2405.14468}, 2024.

\bibitem[Sur and Cand{\`e}s(2019)]{sur2019modern}
P.~Sur and E.~J. Cand{\`e}s.
\newblock A modern maximum-likelihood theory for high-dimensional logistic regression.
\newblock In \emph{Proceedings of the National Academy of Sciences}, 2019.
\newblock URL \url{https://arxiv.org/abs/1701.05023}.

\bibitem[Szegedy et~al.(2016)Szegedy, Vanhoucke, Ioffe, Shlens, and Wojna]{szegedy2016rethinking}
C.~Szegedy, V.~Vanhoucke, S.~Ioffe, J.~Shlens, and Z.~Wojna.
\newblock Rethinking the inception architecture for computer vision.
\newblock In \emph{Proceedings of the IEEE/CVF Conference on Computer Vision and Pattern Recognition (CVPR)}, pages 2818--2826, 2016.

\bibitem[Tikhonov and Arsenin(1977)]{Tikhonov1977SolutionsOI}
A.~N. Tikhonov and V.~Y. Arsenin.
\newblock Solutions of ill-posed problems.
\newblock 1977.
\newblock URL \url{https://api.semanticscholar.org/CorpusID:122072756}.

\bibitem[Wang and Thrampoulidis(2022)]{wang2022gaussianmixtures}
K.~Wang and C.~Thrampoulidis.
\newblock Binary classification of gaussian mixtures: Abundance of support vectors, robustness, and generalization.
\newblock \emph{SIAM Journal on Mathematics of Data Science}, 2022.
\newblock URL \url{https://arxiv.org/abs/2010.00000}.

\bibitem[Wei et~al.(2022)Wei, Xie, Cheng, Feng, An, and Li]{wei2022mitigatingneuralnetworkoverconfidence}
H.~Wei, R.~Xie, H.~Cheng, L.~Feng, B.~An, and Y.~Li.
\newblock Mitigating neural network overconfidence with logit normalization, 2022.
\newblock URL \url{https://arxiv.org/abs/2205.09310}.

\bibitem[Xia et~al.(2025)Xia, Laurent, Franchi, and Bouganis]{xiatowards}
G.~Xia, O.~Laurent, G.~Franchi, and C.-S. Bouganis.
\newblock Towards understanding why label smoothing degrades selective classification and how to fix it.
\newblock In \emph{International Conference on Learning Representations (ICLR)}, 2025.

\bibitem[Xu and Liu(2023)]{xu2023quantifying}
J.~Xu and H.~Liu.
\newblock Quantifying the variability collapse of neural networks.
\newblock In \emph{Proceedings of the International Conference on Machine Learning (ICML)}, pages 38535--38550. PMLR, 2023.

\bibitem[Yuan et~al.(2020)Yuan, Tay, Li, Wang, and Feng]{yuan2020revisiting}
L.~Yuan, F.~E. Tay, G.~Li, T.~Wang, and J.~Feng.
\newblock Revisiting knowledge distillation via label smoothing regularization.
\newblock In \emph{Proceedings of the IEEE/CVF Conference on Computer Vision and Pattern Recognition (CVPR)}, pages 3903--3911, 2020.

\bibitem[Zhang et~al.(2021)Zhang, Jiang, Hou, Wei, Han, Li, and Cheng]{zhang2021delving}
C.-B. Zhang, P.-T. Jiang, Q.~Hou, Y.~Wei, Q.~Han, Z.~Li, and M.-M. Cheng.
\newblock Delving deep into label smoothing.
\newblock \emph{IEEE Transactions on Image Processing}, 30:\penalty0 5984--5996, 2021.

\bibitem[Zhou et~al.(2022)Zhou, You, Li, Liu, Liu, Qu, and Zhu]{zhou2022all}
J.~Zhou, C.~You, X.~Li, K.~Liu, S.~Liu, Q.~Qu, and Z.~Zhu.
\newblock Are all losses created equal: A neural collapse perspective.
\newblock \emph{Advances in Neural Information Processing Systems (NeurIPS)}, 35:\penalty0 31697--31710, 2022.

\bibitem[Zhu et~al.(2022)Zhu, Cheng, Zhang, and Liu]{zhu2022rethinking}
F.~Zhu, Z.~Cheng, X.-Y. Zhang, and C.-L. Liu.
\newblock Rethinking confidence calibration for failure prediction.
\newblock In \emph{European Conference on Computer Vision (ECCV)}, pages 518--536. Springer, 2022.

\end{thebibliography}
\bibliographystyle{abbrvnat}

\appendix
\onecolumn
\part*{\clearpage Appendices}

\section{Additional Related Work}
\label[appendix]{appendix:related_work}

\textbf{Label smoothing and soft-target training.}
Label smoothing (LS) was introduced as ``label-smoothing regularization'' by \citet{szegedy2016rethinking} and is now a standard tool for improving generalization and calibration in classification and sequence models \citep{muller2019when,guo2017calibration}. LS has been analyzed across domains, including neural machine translation \citep{gao2020towards}. More broadly, LS is one instance of training with \emph{soft targets}, closely related to knowledge distillation \citep{hinton2015distilling, yuan2020revisiting}. The interaction between LS and distillation is nuanced, with studies investigating when LS-trained teachers help or harm KD \citep{shen2021label,chandrasegaran2022revisiting}. Beyond fixed uniform smoothing, variants adapt the target distribution during training, including Online Label Smoothing \citep{zhang2021delving}, Zipf label smoothing \citep{liang2022efficient}, and adaptive/self-knowledge smoothing \citep{lee2022adaptive}. LS has also been linked to changes in feature geometry \citep{muller2019when,kornblith2021better} and quantified via feature-variability measures \citep{xu2023quantifying}. However, LS can be problematic for selective classification or failure prediction, motivating targeted fixes \citep{zhu2022rethinking,xiatowards}.

\textbf{Logit/output regularization beyond classic label smoothing.}
Methods that regularize the \emph{logits} directly include entropy penalties \citep{pereyra2017regularizing}, logit-norm regularizers \citep{dauphin2021deconstructing}, and logit normalization \citep{wei2022mitigatingneuralnetworkoverconfidence}. A key theme is whether the loss remains \emph{lower bounded} and admits a \emph{finite} per-sample optimum, as in convex logit penalties, versus cross-entropy whose infimum is achieved only at infinite margin.

\textbf{Implicit bias of cross-entropy and margin maximization.}
For separable linear classification, gradient descent on unregularized cross-entropy converges to the hard-margin SVM \citep{soudry2018implicit}. This has been extended to nonseparable data \citep{ji2019implicit} and overparameterized homogeneous models \citep{lyu2020margin}. Convex logit regularization provides a sharp contrast by forcing the per-sample objective to attain a finite minimum.

\textbf{High-dimensional solvable models.}
Our analysis connects to classical work on the combinatorial ``capacity'' of linear separators \citep{cover1965geometrical, gardner1988space}. Recent work develops precise high-dimensional asymptotics for max-margin classifiers via CGMT/AMP techniques \citep{montanari2019maxmargin, deng2019doubledescent, mignacco2020regularization, wang2022gaussianmixtures}. In related directions, high-dimensional logistic regression exhibits phase-transition phenomena tied to the existence of maximum-likelihood estimators \citep{sur2019modern, celentano2021fom}.

\textbf{Grokking and delayed generalization.}
Grokking was popularized in neural networks trained on small algorithmic datasets \citep{power2022grokking}. Solvable models have since been developed for linear estimators \citep{levi2024grokkinglinear} and logistic regression near the edge of separability \citep{beck2025grokking}. Logit regularization in this context was briefly discussed in \citet{prieto2025grokkingedgenumericalstability}; we identify an analogous delayed generalization region with a shifted threshold.

\textbf{Simplex geometry and neural collapse.}
Our multi-class setup relates to the \emph{neural collapse} phenomenon \citep{papyan2020neuralcollapse, han2022neuralcollapsemse}, where logits converge to structured simplex/ETF configurations. Recent work studies how different losses shape this collapse \citep{zhou2022all, guo2024cross} and how it interacts with low-rank bias \citep{sukenik2024neural, garrod2024persistence}.


\section{Label smoothing as a special case of logit regularization}
\label[appendix]{appendix:label_smoothing}

We show that label smoothing (LS) can be written as a convex \emph{logit regularizer} added to cross-entropy, matching our form
$\ell = (1-\alpha)\ell_{\mathrm{CE}}+\alpha f$ (up to a reparameterization of $\alpha$ and additive constants).

\subsection*{Binary classification}

Consider a two-class softmax. Let $t \equiv yz$ be the signed logit, so that
$p_y=\sigma(t)=1/(1+e^{-t})$ and $p_{-y}=1-p_y=\sigma(-t)$. Standard cross-entropy is
$\ell_{\mathrm{CE}}(t)=-\log p_y=\log(1+e^{-t})$.
Binary LS with smoothing $\varepsilon$ uses targets $(1-\varepsilon,\varepsilon)$ and yields
\begin{equation}
\ell_{\mathrm{LS}}(t) = -(1-\varepsilon)\log p_y - \varepsilon \log p_{-y}
= (1-\varepsilon)\log(1+e^{-t})+\varepsilon\log(1+e^{t}).
\end{equation}
Using $\log(1+e^{t})+\log(1+e^{-t})=\log\!\big(2+2\cosh t\big)$,
we rewrite
\begin{equation}
\ell_{\mathrm{LS}}(t)=(1-2\varepsilon)\,\ell_{\mathrm{CE}}(t)\;+\;\varepsilon\,\log\!\big(2+2\cosh t\big).
\end{equation}
Thus, in the binary case LS is equivalent to logit regularization with
\begin{equation}
f_{\mathrm{LS}}(t)=\frac{1}{2}\log\!\big(2+2\cosh t\big),\qquad t=yz,
\end{equation}
(up to rescaling $\alpha=2\varepsilon$).

\subsection*{Multiclass classification}

Let $p_k(\bz)=\exp(z_k)/\sum_j \exp(z_j)$ and one-hot labels $y_k\in\{0,1\}$. LS replaces
$y_k^{\mathrm{LS}}=(1-\varepsilon)y_k+\varepsilon/K$, giving
\begin{align}
\mathcal{L}_{\mathrm{LS}}(\bz,\by)
&= -\sum_{k=1}^K y_k^{\mathrm{LS}}\log p_k(\bz) \nonumber\\
&= (1-\varepsilon)\,\mathcal{L}_{\mathrm{CE}}(\bz,\by)\;+\;\frac{\varepsilon}{K}\sum_{k=1}^K \bigl[-\log p_k(\bz)\bigr].
\end{align}
Hence LS is of the form $(1-\alpha)\mathcal{L}_{\mathrm{CE}}+\alpha f(\bz)$ with the convex, permutation-symmetric logit regularizer
\begin{equation}
f_{\mathrm{LS}}(\bz)=\log\!\Big(\sum_{k=1}^K e^{z_k}\Big)\;-\;\frac{1}{K}\sum_{k=1}^K z_k,
\end{equation}

\section{Complementary details for \cref{sec:alpha_independence}}

In \cref{sec:alpha_independence} we showed that under some assumptions, the direction of the weight vector becomes almost invariant to the logit-regularization function in general, and in particular to the regularization strength $\alpha$. We did by proving \cref{prop:gaussian_data} and \cref{cor:quadratic_loss}, where we showed if the distribution of the data is strictly gaussian, this will be true for any convex per-sample loss, and on the other hand if the per-sample loss is quadratic, then it will be true regardless of its minimum.

\subsection{Complementary proofs}
\label[appendix]{app:complementary_proofs}

We begin by proving a Lemma which is needed for the proof of  \cref{prop:gaussian_data}.

\begin{lemma}\label{lemma:monotonicity_of_std}
Let $x$ be a zero-mean random variable and let $f:\mathbb {R}\to\mathbb{R}$ be a convex function. Then $\avg{f(\lambda x)}$ is an nondecreasing function of $\lambda$ for $\lambda >0$. 
\end{lemma}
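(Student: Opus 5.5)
We prove that $g(\lambda)\equiv\avg{f(\lambda x)}$ is nondecreasing on $(0,\infty)$ by combining two properties: $g$ is convex in $\lambda$, and $g$ has its global minimum at $\lambda=0$. A convex function on $[0,\infty)$ whose minimum over that half-line sits at the left endpoint must be nondecreasing there. We assume $\avg{|f(\lambda x)|}<\infty$ for the relevant $\lambda$, as is implicit in the statement.

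\emph{Step 1: convexity of $g$.} For $\lambda_1,\lambda_2\in\mathbb{R}$ and $t\in[0,1]$, convexity of $f$ applied pointwise gives $f\bigl((t\lambda_1+(1-t)\lambda_2)x\bigr)\le t f(\lambda_1 x)+(1-t)f(\lambda_2 x)$. Taking expectations yields $g(t\lambda_1+(1-t)\lambda_2)\le t g(\lambda_1)+(1-t)g(\lambda_2)$, so $g$ is convex on $\mathbb{R}$.

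\emph{Step 2: $g(\lambda)\ge g(0)$.} By Jensen's inequality and the zero-mean assumption, $g(\lambda)=\avg{f(\lambda x)}\ge f(\lambda\avg{x})=f(0)=g(0)$ for every $\lambda$.

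\emph{Step 3: monotonicity.} Take $0<\lambda_1<\lambda_2$ and write $\lambda_1=t\lambda_2+(1-t)\cdot 0$ with $t=\lambda_1/\lambda_2\in(0,1)$. Convexity gives
\begin{equation*}
g(\lambda_1)\le t\,g(\lambda_2)+(1-t)\,g(0)\le t\,g(\lambda_2)+(1-t)\,g(\lambda_2)=g(\lambda_2),
\end{equation*}
where the second inequality uses Step 2. Hence $g(\lambda_1)\le g(\lambda_2)$, which is the claim.

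\emph{Use in \cref{prop:gaussian_data}.} Fix the mean $\mu$ and write the logit as $z=\mu+\sigma\xi$ with $\xi$ standard normal. Apply the lemma to the convex function $h(u)=\ell(\mu+u)$ and the zero-mean variable $\xi$, with $\lambda=\sigma$. This gives that $L$ is nondecreasing in $\sigma$ at fixed $\mu$, which is exactly what the proof of \cref{prop:gaussian_data} needs.

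There is no real obstacle beyond checking that the expectations are finite.
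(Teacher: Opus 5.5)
Your proof is correct and is essentially the paper's argument: write $\lambda_1$ as a convex combination of $0$ and $\lambda_2$, apply convexity of $f$ pointwise and take expectations, then bound $f(0)=f(\avg{\lambda_2 x})\le\avg{f(\lambda_2 x)}$ by Jensen's inequality using the zero mean. Stating the convexity of $g$ as a separate step and making the integrability assumption explicit do not change the argument.
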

\begin{proof}
    This is a standard convexity argument. Let's examine two lambdas $0<\lambda_1<\lambda_2$. Then $\lambda_1$ is a convex combination of 0 and $\lambda_2$, 
    $$\lambda_1 = \left(1-\frac{\lambda_1}{\lambda_2}\right)\cdot 0 + \frac{\lambda_1}{\lambda_2}\lambda_2\ .$$
    Therefore,
    \begin{align*}
        \avg{f\left(\lambda_1 x\right)}&=
        \avg{f\left(\left(1-\frac{\lambda_1}{\lambda_2}\right)\cdot 0 \cdot x + \frac{\lambda_1}{\lambda_2}\lambda_2 x\right)} \\
        &\le \left(1-\frac{\lambda_1}{\lambda_2}\right) f(0) + \frac{\lambda_1}{\lambda_2}\avg{f(\lambda_2 x)}\\
        &\le \avg{\left(1-\frac{\lambda_1}{\lambda_2}\right) f(\lambda_2 x) + \frac{\lambda_1}{\lambda_2}f(\lambda_2 x)}
        =\avg{f(\lambda_2 x)}
    \end{align*}
    Where we used the fact that $x$ is zero-mean and therefore $f(0)=f(\avg{ \lambda_2 x})\le \avg{f(\lambda_2 x)}$.
\end{proof}

\textbf{The connection to \cref{prop:gaussian_data}:} 
For fixed $\mu\in\mathbb{R}$, define $g(t)\equiv f(\mu+t)$, which is convex whenever $f$ is convex. Applying Lemma~\ref{lemma:monotonicity_of_std} with $x=Y\sim\mathcal{N}(0,1)$ yields that
\[
Q(\mu,\sigma)\;=\;\mathbb{E}\bigl[f(\mu+\sigma Y)\bigr]\;=\;\mathbb{E}\bigl[g(\sigma Y)\bigr]
\]
is nondecreasing in $\sigma\ge 0$ for each fixed $\mu$.

We will now provide the proof to \cref{cor:directional_invariance} in the main text.
\begin{theorem}[Restatement of \cref{cor:directional_invariance}]
    If the data is Gaussian, the optimal weight direction is $\hat{\bs}_{\min}\propto \Sigma^{-1} \mu$ and therefore independent of the convex regularization function $f$. Similarly, if  $\lambda=d/N<1$ and the per-sample loss is quadratic, $\hat{\bs}_{\min} \propto \Sigma^{-1} \mu$ (where $\Sigma,\mu$ are the empirical mean and centered covariance) and therefore independent of the target value $z^*$.
\end{theorem}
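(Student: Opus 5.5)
The plan is to reduce both cases to the same finite-dimensional problem: maximize the Fisher criterion
\[
J(\bs)=\frac{(\bs^\top\bmu)^2}{\bs^\top\Sigma\bs}=r(\bs)^{-2}
\]
over directions with $\bs^\top\bmu>0$. This problem is then solved directly by a Cauchy--Schwarz argument in the $\Sigma$-inner product.

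The reduction comes from earlier results. In the Gaussian case, \cref{prop:gaussian_data} gives $r(\bs_{\min})\le r(\bs)$ for all $\bs$ with $\mu(\bs)>0$. Here $\mu=\bmu_x$ and $\Sigma=\Sigma_x$. In the quadratic case, \cref{cor:quadratic_loss} gives the same conclusion with $\mu,\Sigma$ the empirical mean and centered covariance of the label-absorbed samples. In both cases I will check that $\mu(\bs_{\min})>0$. This follows because $\ell$ has its unique minimum at $z^*>0$, and reflecting $\bs\to-\bs$ or shrinking towards $0$ otherwise lowers the loss by \cref{lemma:monotonicity_of_std}. Hence $\hat{\bs}_{\min}$ is a maximizer of $J$ on the half-space $\{\bs^\top\bmu>0\}$.

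Next I need $\Sigma$ to be positive definite so that $J$ is well defined and the maximizer is unique. For Gaussian data I will assume this as nondegeneracy of $\Sigma_x$; a singular direction would make $r$ zero, a degenerate case I will exclude. For the empirical case, $\lambda=d/N<1$ means $N>d$, so in general position the centered sample covariance has rank $\min(d,N-1)=d$. This is exactly where the hypothesis $\lambda<1$ enters. With $\Sigma\succ0$, write $\bs=\Sigma^{-1/2}\bm{u}$. Then
\[
(\bs^\top\bmu)^2=(\bm{u}^\top\Sigma^{-1/2}\bmu)^2\le\|\bm{u}\|^2\,\bmu^\top\Sigma^{-1}\bmu,
\]
with equality iff $\bm{u}\propto\Sigma^{-1/2}\bmu$, that is, iff $\bs\propto\Sigma^{-1}\bmu$. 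So $J\le\bmu^\top\Sigma^{-1}\bmu$, and the unique maximizing direction with positive mean is $\Sigma^{-1}\bmu$. The minimizer of $r$ is therefore $\hat{\bs}_{\min}\propto\Sigma^{-1}\bmu$.

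Independence from $f$ (respectively from $z^*$) then follows, because the direction depends only on $(\bmu,\Sigma)$. The loss only selects the norm along this ray; in the quadratic case this is the explicit scalar minimization of $a\,t^2\sigma_0^2+(t\mu_0-z^*)^2$.

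The main obstacle is the bookkeeping around the reduction rather than the LDA step. First, \cref{prop:gaussian_data} only asserts that the minimizer lies on the minimal-$r$ ray, so I must argue that this ray is unique in weight space; the equality case of Cauchy--Schwarz handles this once $\Sigma\succ0$. Second, in the empirical quadratic case I must justify positive definiteness, and hence well-posedness, precisely from $N>d$. I would handle this by noting that $\Sigma=\frac1N X_c^\top X_c$ with $X_c$ the centered $N\times d$ data matrix, which has full column rank almost surely when $N-1\ge d$.
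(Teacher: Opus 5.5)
Your proposal is correct and follows the paper's proof. Both cases are reduced via \cref{prop:gaussian_data} and \cref{cor:quadratic_loss} to minimizing $r(\bs)^2=\bs^\top\Sigma\bs/(\bs^\top\bmu)^2$, and Cauchy--Schwarz applied to $\langle\Sigma^{1/2}\bs,\Sigma^{-1/2}\bmu\rangle$ then gives the ray $\bs\propto\Sigma^{-1}\bmu$ from its equality case. Your extra bookkeeping (positivity of $\mu(\bs_{\min})$, $\Sigma\succ0$ from $N-1\ge d$, uniqueness of the minimizing ray) is more careful than the paper's write-up. One small correction on the sign step: \cref{lemma:monotonicity_of_std} does not apply as stated, because it needs a zero-mean variable and $z$ is not zero-mean. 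Use Jensen instead, $\mathbb{E}[\ell(z)]\ge\ell(\mu)\ge\ell(0)$ whenever $\mu\le 0<z^*$, together with the fact that small multiples of any direction with positive mean go strictly below $\ell(0)$; this is essentially the $h'(0)$ convexity argument in \cref{app:alt_proof_Qr}.
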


\begin{proof}
\textbf{Population Gaussian.}
For $\bx\sim\mathcal{N}(\boldsymbol{\mu},\Sigma)$ and any $\boldsymbol{S}$, the logit
$z=\boldsymbol{S}^\top\bx$ is Gaussian with mean $\mu(\boldsymbol{S})=\boldsymbol{S}^\top\boldsymbol{\mu}$
and variance $\sigma^2(\boldsymbol{S})=\boldsymbol{S}^\top\Sigma\boldsymbol{S}$.
By Proposition~\ref{prop:gaussian_data}, any minimizer of $L(\boldsymbol{S})=\mathbb{E}[\ell(z)]$
minimizes the ratio $r(\boldsymbol{S})=\sigma(\boldsymbol{S})/\mu(\boldsymbol{S})$ over $\{\mu(\boldsymbol{S})>0\}$, i.e.
\[
r(\boldsymbol{S})^2=\frac{\boldsymbol{S}^\top\Sigma\boldsymbol{S}}{(\boldsymbol{S}^\top\boldsymbol{\mu})^2}.
\]
Using Cauchy--Schwarz,
\[
(\boldsymbol{S}^\top\boldsymbol{\mu})^2
=\langle \Sigma^{1/2}\boldsymbol{S},\Sigma^{-1/2}\boldsymbol{\mu}\rangle^2
\le (\boldsymbol{S}^\top\Sigma\boldsymbol{S})\,(\boldsymbol{\mu}^\top\Sigma^{-1}\boldsymbol{\mu}),
\]
hence $r(\boldsymbol{S})^2\ge (\boldsymbol{\mu}^\top\Sigma^{-1}\boldsymbol{\mu})^{-1}$.
Moreover, equality holds iff $\Sigma^{1/2}\boldsymbol{S}\propto  \Sigma^{-1/2}\boldsymbol{\mu}$,
equivalently $\boldsymbol{S}\propto \Sigma^{-1}\boldsymbol{\mu}$.

\textbf{Empirical quadratic.}
Let $\ell(z)=a(z-z^*)^2$ and define $\hat{\boldsymbol{\mu}}=\frac1N\sum_i\bx_i$,
$\hat{\Sigma}=\frac1N\sum_i(\bx_i-\hat{\boldsymbol{\mu}})(\bx_i-\hat{\boldsymbol{\mu}})^\top$.
Assume that $\Sigma$ is invertible (since $\lambda<1$),
then
\[
\hat L(\boldsymbol{S})=\frac{a}{N}\sum_i(\boldsymbol{S}^\top\bx_i-z^*)^2
=a\,\boldsymbol{S}^\top\hat{\Sigma}\boldsymbol{S}+a(\boldsymbol{S}^\top\hat{\boldsymbol{\mu}}-z^*)^2,
\]
so the optimal \emph{direction} minimizes
$\hat r(\boldsymbol{S})^2=\frac{\boldsymbol{S}^\top\hat{\Sigma}\boldsymbol{S}}{(\boldsymbol{S}^\top\hat{\boldsymbol{\mu}})^2}$ and is independent of $z^*$.
Repeating the Cauchy--Schwarz step from the Gaussian case gives the result.

\end{proof}

\subsection{Additional numerical evidence}
\label[appendix]{app:additional_numerical_evidence}

We numerically illustrate the finite-sample mechanism behind the $\alpha$-plateau discussed in \cref{sec:alpha_independence}.
For each $N$, we sample $x_i\in\mathbb{R}^2$ i.i.d.\ from a Gaussian and compute the empirical minimizer
\[
S_{\min}(\alpha)=\arg\min_S \frac1N\sum_{i=1}^N\Big[(1-\alpha)\log(1+e^{-x_i^\top S})+\alpha(x_i^\top S)^2\Big].
\]
As a reference direction, we also compute $S_{\min}^{\mathrm{quad}}$ for the purely quadratic per-sample loss $b(z-a)^2$ on the same dataset (with values $b=a=1$, but we also verified that it does not matter what are these values from \cref{cor:quadratic_loss}), and plot
$\tan\theta(\alpha)$ where $\theta(\alpha)$ is the angle between $ S_{\min}(\alpha)$ and $ S_{\min}^{\mathrm{quad}}$.

\Cref{fig:finite_sample_alpha_plateau} shows that for small $N$, $ S_{\min}(\alpha)$ can deviate at very small $\alpha$ (finite-sample non-Gaussianity), but it stabilizes for large $\alpha$ as the objective becomes effectively quadratic. For larger $N$, the plateau is reached at much smaller $\alpha$, consistent with improved approximate Gaussianity.

\begin{figure}[h]
\centering
\includegraphics[width=0.6\linewidth]{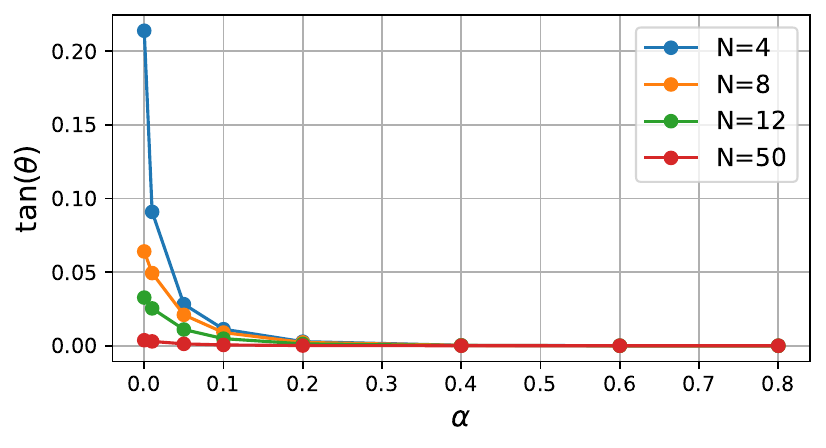} 
\caption{\textbf{Finite-sample $\alpha$-plateau.} $\tan\theta(\alpha)$ between $ S_{\min}(\alpha)$ for $(1-\alpha)\log(1+e^{-z})+\alpha z^2$ and the reference direction $ S_{\min}^{\mathrm{quad}}$ for $b(z-a)^2$, for several $N$. Small $N$ shows deviations at small $\alpha$. Increasing $N$ makes the plateau appear at even smaller $\alpha$. Note that the smallest $\alpha$ in this figure is $\alpha=0.00001$.}
\label{fig:finite_sample_alpha_plateau}
\end{figure}

\textbf{Two toy examples with $N=3$ points.}
We next give two minimal $N=3$ demonstrations that separate the “quadratic invariance” mechanism from the “Gaussian-logit” mechanism.

\textbf{Figure~\ref{fig:toy_quadratic_shift_a}: changing $a$ only (quadratic case).}
We sample three points $x_1,x_2,x_3\in\mathbb{R}^2$ in random position and minimize
\[
\frac{1}{3}\sum_{i=1}^3 \ell(z_i),
\qquad
z_i=S^\top x_i,
\qquad
\ell(z)=(z-a)^2 + b\,\max(0,-z).
\]
In the first toy example we set $b=0$, so the objective is purely quadratic in $z$.
We compare two runs with $(a,b)=(2,0)$ and $(a,b)=(6,0)$.
As shown in \Cref{fig:toy_quadratic_shift_a}, the minimizers $S^{\min}_1$ and $S^{\min}_2$ point in exactly the same direction and differ only by a scalar factor (left panel).
Accordingly, the three logits $z_i=S^\top x_i$ shift to different absolute values between the two runs (right panel), but their “clustering ratio”
\[
r \;=\; \frac{\sigma}{\mu}
\]
(where $\mu$ and $\sigma$ are the empirical mean and standard deviation over the three logits) remains constant: we report $(\mu,\sigma,r)$ in the table in the bottom row of the figure.
This is a direct finite-sample illustration of the direction-independence in the quadratic regime (\cref{cor:quadratic_loss}): changing the quadratic target $a$ rescales the minimizer without changing its direction.

\begin{figure}[t]
\centering
\includegraphics[width=\linewidth]{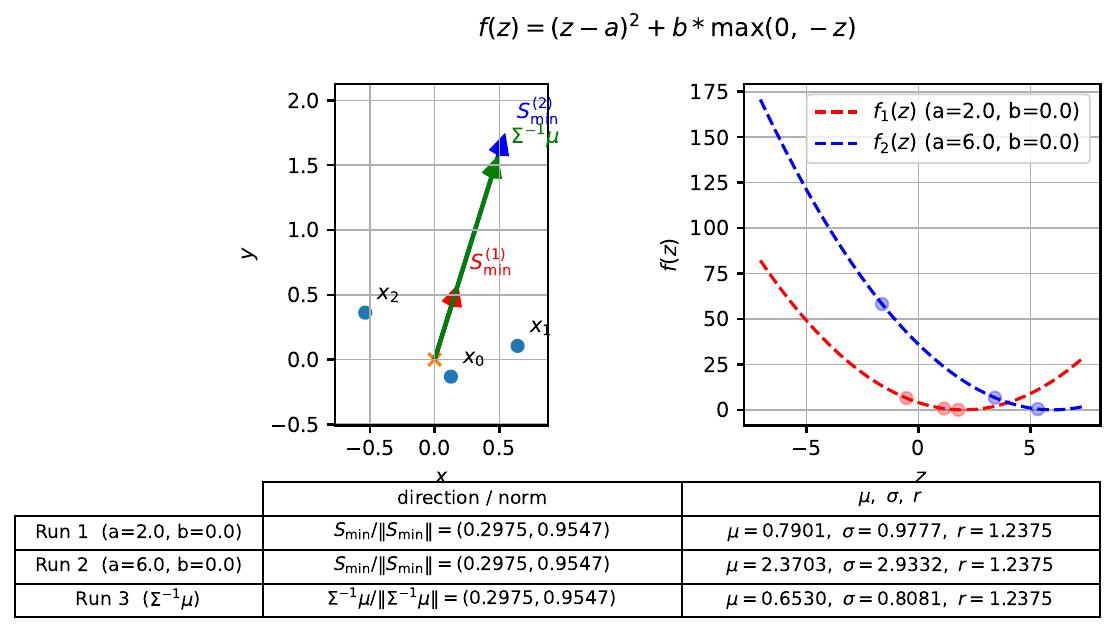} 
\caption{\emph{Toy example ($N=3$) with a quadratic loss: changing $a$ rescales the minimizer but does not change its direction.}
We minimize $\frac13\sum_{i=1}^3 (S^\top x_i-a)^2$ over $S\in\mathbb{R}^2$ for two choices of $a$ (here $a=2$ and $a=6$, with $b=0$). We also present for comparison the direction $\Sigma^{-1} \bmu$.
\textbf{Left:} the three points $x_1,x_2,x_3$ and the two minimizers $S^{\min}_1,S^{\min}_2$, which are colinear.
\textbf{Right:} the corresponding logits $\{S^\top x_i\}_{i=1}^3$ for both runs (shown in different colors), which shift in absolute value.
\textbf{Bottom:} empirical $\mu$ (mean), $\sigma$ (std), and $r=\sigma/\mu$, showing that the coefficient of variation is unchanged across the two runs.}
\label{fig:toy_quadratic_shift_a}
\end{figure}

\textbf{Figure~\ref{fig:toy_nonlinear_b}: adding a hinge-like term (non-quadratic, non-Gaussian).}
In the second toy example we keep $a=2$ fixed and compare $(a,b)=(2,0)$ versus $(a,b)=(2,6)$.
Now the objective includes the non-quadratic term $b\max(0,-z)$, and with only three points the induced logit distribution is extremely far from Gaussian.
In this setting, neither of the sufficient conditions underlying our invariance arguments holds (quadratic loss or Gaussian logit distribution), and we indeed observe that the two minimizers are no longer perfectly colinear (though their angle remains moderate); see \Cref{fig:toy_nonlinear_b}.

\begin{figure}[t]
\centering
\includegraphics[width=\linewidth]{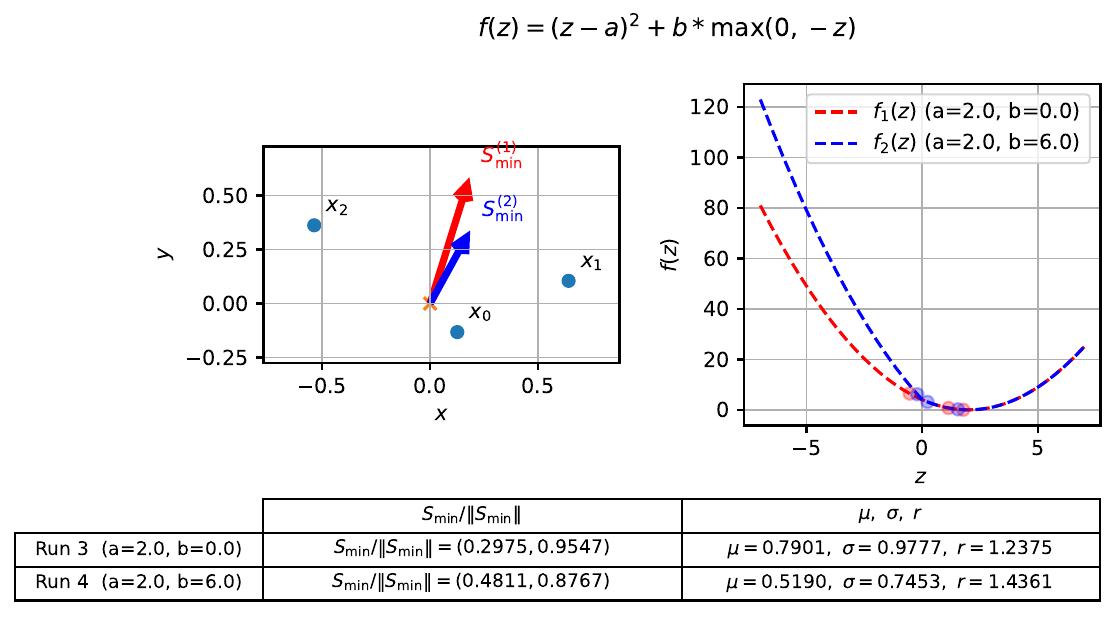} 
\caption{\emph{Toy example ($N=3$) where invariance need not hold: adding a hinge-like term changes the direction.}
We minimize $\frac13\sum_{i=1}^3 \big[(S^\top x_i-a)^2 + b\max(0,-S^\top x_i)\big]$ for two settings: $(a,b)=(2,0)$ and $(a,b)=(2,6)$ on the same three points.
Because the loss is no longer purely quadratic and the empirical logit distribution (with $N=3$) is far from Gaussian, the two minimizers need not share the same direction, and we observe a noticeable (but not extreme) angular deviation between them.}
\label{fig:toy_nonlinear_b}
\end{figure}

\textbf{The invariance to $\alpha$ in the case of the Gaussian model.}
In \cref{fig:alpha_independence_2} we show that for when the data is taken from Gaussian distribution, the direction is almost flat as a function of $\alpha$, even for small values. We show it for different values of $\lambda=d/N$. The slight deviation from the plateau is expected from finite-sampling effects.

\begin{figure}[h!]
\begin{centering}
\includegraphics[scale=0.8]{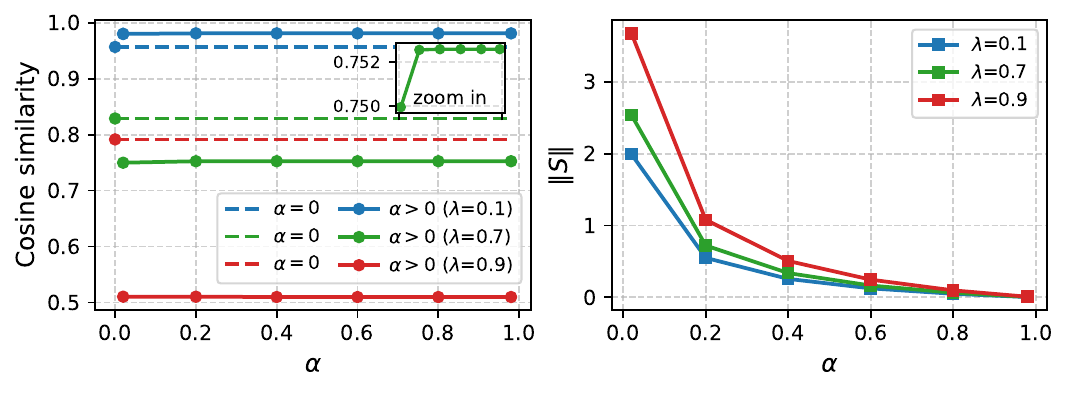}
\par\end{centering}
\caption{\label{fig:alpha_independence_2}\textbf{Left panel:} Cosine similarity of $\bs_{\min}$ with the x-axis. The dashed line indicates the unregularized value ($\alpha=0$). We observe an abrupt shift from $\alpha=0$ to $\alpha>0$, followed by a stable plateau, indicating that the weight direction is effectively independent of $\alpha$. \textbf{Right panel:} The norm $\|\bs_{\min}\|$, which, in contrast, exhibits a clear dependence on $\alpha$. }
\end{figure}

\subsection{Alternative proof of \cref{prop:gaussian_data} via monotonicity of $Q(r)$}
\label[appendix]{app:alt_proof_Qr}

We provide an equivalent proof based on defining an objective $Q(r)$ directly and showing it is strictly increasing in the coefficient of variation $r$.

\begin{proposition}[Restatement of \cref{prop:gaussian_data} for convenience]
\label{prop:gaussian_data_altproof}

 Let $\bx\in\mathbb{R}^{d}$ be Gaussian,
$\bx\sim\mathcal{N}(\bmu_{x},\Sigma_{x})$ with $\bmu_{x}\neq \mathbf{0}$.
Let $f:\mathbb{R}\to\mathbb{R}$ be convex, twice differentiable, and with a unique minimizer at $m>0$.
For $\bS\in\mathbb{R}^{d}$ define the logit
\[
z=\bS^{\top}\bx\sim\mathcal{N}(\mu,\sigma^{2}),
\qquad
\mu(\bS)=\bS^{\top}\bmu_{x},\quad
\sigma^{2}(\bS)=\bS^{\top}\Sigma_{x}\bS,
\]
and the population objective $L(\bS)=\mathbb{E}[f(z)]$.
Then any minimizer $\bS_{\min}\in\arg\min_{\bS}L(\bS)$ also minimizes the ratio
\[
r(\bS)\;=\;\frac{\sigma(\bS)}{\mu(\bS)}
\qquad \text{over all $\bS$ such that $\mu(\bS)>0$.}
\]
\end{proposition}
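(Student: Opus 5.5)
The plan is to reduce $L$ to a function of $(\mu,r)$ and then show that, after optimizing the scale, the resulting value function $Q(r)$ is strictly increasing in $r$. Since $z\sim\mathcal{N}(\mu,\sigma^2)$, write $z=\mu(1+rY)$ with $Y\sim\mathcal{N}(0,1)$ for any $\bS$ with $\mu(\bS)>0$. Then
\[
L(\bS)=\Phi(\mu,r)\equiv\mathbb{E}\bigl[f(\mu(1+rY))\bigr].
\]
Define $Q(r)=\inf_{\mu>0}\Phi(\mu,r)$. Rescaling $\bS\mapsto c\bS$ with $c>0$ changes $\mu$ freely while keeping $r$ fixed. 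Hence the attainable $(\mu,r)$ pairs form $\{\mu>0\}\times R$, where $R$ is the set of attainable ratios, with $\min R=r_1=(\bmu_x^\top\Sigma_x^{-1}\bmu_x)^{-1/2}$ by Cauchy--Schwarz as in \cref{cor:directional_invariance}.

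Two further facts reduce the problem. First, $\mu\le 0$ is never optimal. For $\mu\le0$, Jensen gives $\mathbb{E}f(z)\ge f(\mu)\ge f(0)$, because $f$ is convex with minimizer $m>0$ and hence decreasing on $(-\infty,m]$. On the other hand, a small positive $\mu$ along the ray with ratio $r_1$ gives, by continuity, a value strictly less than $f(0)$. So it suffices to show that $Q$ is strictly increasing on $[0,\infty)$ and that $Q(r)$ is attained. Any minimizer must then sit at $r=r_1$.

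Monotonicity of $Q$ in the weak sense is exactly \cref{lemma:monotonicity_of_std} applied to $g(t)=f(\mu+t)$: for fixed $\mu$, $\Phi(\mu,\cdot)$ is nondecreasing in $\sigma$, hence in $r$, and taking the infimum over $\mu$ preserves this. For strictness, take $r<r'$ and let $\mu'$ (nearly) attain $Q(r')$. Then compare at the same $\mu'$:
\[
\Phi(\mu',r')-\Phi(\mu',r)=\int_{\mu' r}^{\mu' r'}\partial_\sigma\,\mathbb{E}f(\mu'+\sigma Y)\,d\sigma,
\qquad
\partial_\sigma\mathbb{E}f(\mu'+\sigma Y)=\sigma\,\mathbb{E}f''(\mu'+\sigma Y),
\]
where the derivative formula comes from Stein's lemma. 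This integrand is $\ge0$, and it is $>0$ unless $f''=0$ almost everywhere. That case would make $f$ affine, contradicting the existence of a unique minimizer. So the difference is strictly positive, giving $Q(r)\le\Phi(\mu',r)<Q(r')$.

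The main obstacle is this strictness step together with attainment. I need the infimum over $\mu$ to be attained, or at least I need to avoid minimizing sequences with $\mu\to0$ or $\mu\to\infty$. Coercivity handles both ends: a convex $f$ with a unique minimizer grows at least linearly at $\pm\infty$, so $\Phi(\mu,r)\to\infty$ as $\mu\to\infty$. As $\mu\to0^+$, $\Phi\to f(0)>f(m)\ge Q$. I also need Gaussian integrability to justify differentiating under the expectation; I would assume $f$ has at most exponential growth, or else argue via convexity with one-sided derivatives. Once $Q$ is strictly increasing and attained, any $\bS_{\min}$ has $L(\bS_{\min})=Q(r(\bS_{\min}))\ge Q(r_1)$, with equality forcing $r(\bS_{\min})=r_1$. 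This is the claim.
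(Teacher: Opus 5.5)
Your route is essentially the paper's alternative proof. You reduce to $Q(r)=\min_{\mu>0}\mathbb{E}[f(\mu(1+rY))]$, prove $Q$ strictly increasing via the Stein identity $\partial_\sigma\mathbb{E}f(\mu+\sigma Y)=\sigma\,\mathbb{E}f''(\mu+\sigma Y)$, and exclude $\mu\le 0$. Your monotonicity step compares $\Phi(\mu',r)$ with $\Phi(\mu',r')$ at the fixed optimal scale $\mu'$ of $r'$. This is cleaner than the paper's envelope-theorem derivative, which tacitly needs $\mu^*(r)$ to be differentiable. Your strictness argument ($f''$ cannot vanish a.e.\ since $f$ is not affine) is also more careful than the paper's appeal to strict convexity, which is not among the hypotheses.

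One auxiliary step, however, is false as written. In the attainment argument you write $\Phi\to f(0)>f(m)\ge Q$. But $f\ge f(m)$ pointwise gives $Q(r)\ge f(m)$, the reverse inequality, so that argument does not stop a minimizing sequence from drifting to $\mu\to0^+$. Likewise, in the sign step, continuity only gives $\Phi(\mu,r_1)\to f(0)$, not a value strictly below $f(0)$.

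The missing ingredient in both places is the one the paper uses. Differentiability and the unique minimizer at $m>0$ force $f'(0)<0$, and then $\partial_\mu\Phi(0^+,r)=f'(0)\,\mathbb{E}[1+rY]=f'(0)<0$. This is the paper's $h'(0)=\hat\mu f'(0)$ in your parametrization. Hence $Q(r)<f(0)$ for every $r$, which repairs both steps.

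You can also drop the coercivity and attainment discussion altogether. The statement concerns a given minimizer $\bS_{\min}$, whose own mean $\mu_{\min}>0$ already realizes $Q(r(\bS_{\min}))$. If some $\bS$ with $\mu(\bS)>0$ had $r(\bS)<r(\bS_{\min})$, rescale it to mean $\mu_{\min}$. Your strict fixed-$\mu$ comparison then gives a strictly smaller loss, a contradiction.
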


\begin{proof}
Write $\hat{\bS}=\bS/\|\bS\|$ and define
\[
\hat z \;=\; \hat{\bS}^{\top}\bx \sim \mathcal{N}(\hat\mu,\hat\sigma^{2}),
\qquad
\hat\mu=\hat{\bS}^{\top}\bmu_x,\quad
\hat\sigma^{2}=\hat{\bS}^{\top}\Sigma_x\hat{\bS}.
\]
We first note that any minimizing direction must satisfy $\hat\mu>0$.
Fix a unit direction $\hat{\bS}$ and consider the one-dimensional restriction
\[
h(t)\;=\;\mathbb{E}\!\left[f\!\left(t\,\hat z\right)\right],
\qquad t=\|\bS\|\ge 0.
\]
By convexity of $f$, $h$ is convex in $t$, and $h(0)=f(0)$ with
\[
h'(0)=\mathbb{E}\!\left[f'(0)\,\hat z\right]=\hat\mu\,f'(0).
\]
Since $f$ has a unique minimizer at $m>0$, convexity implies $f'(0)<0$.
If $\hat\mu<0$, then $h'(0)>0$ and by convexity $h$ is minimized at $t=0$.
If $\hat\mu>0$, then $h'(0)<0$ and $h$ attains its unique minimizer at some $t>0$.
Thus, flipping $\hat{\bS}\mapsto-\hat{\bS}$ strictly improves the loss near the origin whenever $\hat\mu<0$, so every minimizing direction must satisfy $\hat\mu>0$.

Assume henceforth $\mu(\bS)>0$. Let $Y\sim\mathcal{N}(0,1)$ so that
\[
z=\mu+\sigma Y=\mu(1+rY),\qquad r=\frac{\sigma}{\mu}.
\]
Define
\[
q(\mu,r)\;=\;\mathbb{E}\!\left[f\!\left(\mu(1+rY)\right)\right],
\qquad \mu>0,\ r\ge 0.
\]
For a fixed direction $\hat{\bS}$, scaling $\bS$ by a positive factor scales both $\mu$ and $\sigma$ by the same factor, hence leaves $r=\sigma/\mu$ invariant.
Therefore,
\[
L_{\min}
=\min_{\hat{\bS}}\min_{\mu>0} q(\mu,r(\hat{\bS})).
\]
Define the reduced objective
\[
Q(r)\;=\;\min_{\mu>0} q(\mu,r),
\]
so that
\[
L_{\min}=\min_{\hat{\bS}} Q(r(\hat{\bS})).
\]
It suffices to show $Q$ is strictly increasing on $(0,\infty)$.
Let $\mu^*(r)$ be a minimizer of $q(\mu,r)$ over $\mu>0$, so $Q(r)=q(\mu^*(r),r)$.
Differentiating and using $\partial_\mu q(\mu^*(r),r)=0$ at the optimum (envelope theorem), we obtain
\[
\frac{dQ}{dr}
=\frac{\partial q}{\partial r}(\mu^*(r),r)
=\mu^*(r)\,\mathbb{E}\!\left[f'\!\left(\mu^*(r)(1+rY)\right)Y\right].
\]
Using Gaussian integration by parts, $\mathbb{E}[g(Y)Y]=\mathbb{E}[g'(Y)]$ for $Y\sim\mathcal{N}(0,1)$, with
$g(y)=f'(\mu^*(1+ry))$ and $g'(y)=r\mu^* f''(\mu^*(1+ry))$, yields
\[
\mathbb{E}\!\left[f'\!\left(\mu^*(1+rY)\right)Y\right]
=r\mu^*\,\mathbb{E}\!\left[f''\!\left(\mu^*(1+rY)\right)\right].
\]
Hence
\[
\frac{dQ}{dr}
=r\,\mu^{*}(r)^{2}\,\mathbb{E}\!\left[f''\!\left(\mu^{*}(r)(1+rY)\right)\right].
\]
For $r>0$, $\mu^*(r)>0$, and convex $f$ we have $f''\ge 0$ (and $>0$ a.e.\ if $f$ is strictly convex), so $\frac{dQ}{dr}>0$ for all $r>0$.
Therefore $Q$ is increasing in $r$, and the minimizing direction must achieve the smallest attainable $r(\hat{\bS})=\hat\sigma/\hat\mu$.
Equivalently, any $\bS_{\min}\in\arg\min_{\bS}L(\bS)$ minimizes $r(\bS)=\sigma(\bS)/\mu(\bS)$ over $\mu(\bS)>0$.
\end{proof}

\subsection{Explicit Derivation for the Quadratic Case}
\label[appendix]{app:explicit_quadratic_derivation}

To build intuition, we consider the pedagogically illuminating case where the per-sample loss is purely quadratic, $\ell(z) = (z-a)^2$. In this setting, we can derive the optimal weight vector explicitly and demonstrate directly that minimizing the loss is equivalent to minimizing the coefficient of variation $r$.

Suppose that the loss is :
\begin{equation}
    \mathcal{L}(\boldsymbol{S}) =\mathbb{E}\left[ \left(z - a\right)^{2} \right],
\end{equation}
where $z=\bs^T x$.
Let the direction of the weights be $\hat{\boldsymbol{S}} = \boldsymbol{S}/\|\boldsymbol{S}\|$ and the norm be $g = \|\boldsymbol{S}\|$. We denote the unit-direction moments as $\hat{\mu} = \mathbb{E}[\hat{\boldsymbol{S}} \cdot \boldsymbol{x}]$ and $\hat{\sigma}^2 = \operatorname{Var}( \hat{\boldsymbol{S}} \cdot \boldsymbol{x})$. The statistics of the full logit are then $\mu = g\hat{\mu}$ and $\sigma = g\hat{\sigma}$. 

Using the bias-variance decomposition $\mathbb{E}[(z-a)^2] = \operatorname{Var}(z) + (\mathbb{E}[z]-a)^2$, the loss becomes:
\begin{equation}
    \mathcal{L}(g, \hat{\boldsymbol{S}}) \approx \sigma^2 + (\mu - a)^2 = g^2\hat{\sigma}^2 + (g\hat{\mu} - a)^2.
\end{equation}
We first optimize the scale $g$ for a fixed direction $\hat{\boldsymbol{S}}$. Taking the derivative with respect to $g$:
\begin{equation}
    \frac{\partial \mathcal{L}}{\partial g} = 2g\hat{\sigma}^2 + 2(g\hat{\mu} - a)\hat{\mu} = 0.
\end{equation}
Solving for $g$, we obtain the optimal norm:
\begin{equation}
    g^* = a \frac{\hat{\mu}}{\hat{\mu}^2 + \hat{\sigma}^2}.
\end{equation}
Substituting this back into the expression for the mean logit $\mu^* = g^* \hat{\mu}$, we find:
\begin{equation}
    \mu^* = a \frac{\hat{\mu}^2}{\hat{\mu}^2 + \hat{\sigma}^2} = \frac{a}{1 + r^2},
\end{equation}
where $r = \hat{\sigma}/\hat{\mu} = \sigma/\mu$ is the clustering ratio. This reveals a shrinkage effect: the optimal mean $\mu^*$ is slightly \emph{smaller} than the target $a$ to reduce the variance penalty.

Finally, substituting $g^*$ back into the loss function yields the minimal loss achievable for the direction $\hat{\boldsymbol{S}}$:
\begin{equation}
    \mathcal{L}_{\min}(\hat{\boldsymbol{S}}) 
    = g^{*2}\hat{\sigma}^2 + (g^*\hat{\mu} - a)^2 
    = a^2 \frac{\hat{\sigma}^2}{\hat{\mu}^2 + \hat{\sigma}^2} 
    = a^2 \frac{1}{1 + 1/r^2}.
\end{equation}
Therefore, minimizing the total loss is mathematically equivalent to minimizing the coefficient of variation $r$.

\section{Complementary details for \cref{sec:multiclass} (Multi-class classification)}
\label[appendix]{app:multiclass_additional_details}

In this section we will provide complementary details for the multiclass case. We will show that most of the results generalize naturally.

\subsection{Logit clustering}
\label[appendix]{app:multiclass_quadratic_invariance}
We consider a $K$-class linear classifier $\bz=W\bx+\bb$.
As discussed in the main text, when adding convex and symmetric logit-regularization function $f(\bz)$, the loss in the multi-class case can also be decomposed as a sum of per-sample loss functions.  The main difference is that now, each class has a different per-sample minimum. The goal of the optimizer is to cluster all of the logits of a certain class to its corresponding minimum, and hence the implicit bias would be, again, logit clustering.
From symmetry, this minimum should take the form
\begin{equation}
    \boldsymbol{z}_{c,k}^{*}=\begin{cases}
q_{\mathrm{high}} & k=c\\
q_{\mathrm{low}} & else
\end{cases}
\end{equation}
where $k=1,...,K$ is the class index, $c$ is the class of the logit (determined by the label), and $q_{\mathrm{high}}>q_{\mathrm{low}}$ are some scalars.
We will begin by considering a quadratic loss, and showing that the "direction" and optimal generalization accuracy is invariant to its specific details.

\subsection{Invariance to the Quadratic Logit Regularization Function}We generalize \cref{cor:quadratic_loss} to the multi-class setting. Assuming that the per-sample loss function is quadratic \emph{around the corresponding class minimum}, we show that generalization accuracy is invariant to the specific target values (and thus the regularization parameters).

\begin{proposition}[Quadratic Loss Invariance]\label{prop:multiclass_quadratic_invariance}Consider a multi-class linear classifier minimizing the empirical loss $\mathcal{L} = \frac{1}{N}\sum_i \ell(\bz_i, \by_i)$, where the per-sample loss is quadratic:\begin{equation}\ell(\bz, \by_c) = |\bz - \bz^*_c|^2,\end{equation}and $\bz^*_c\in \mathbb{R}^K$ is a target vector specific to class $c$. Then, for any $q \neq 0$, the generalization accuracy of the optimal solution is invariant to the value of $q$.\end{proposition}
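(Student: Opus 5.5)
The plan is to use the fact that a quadratic per-sample loss makes the training problem an ordinary least-squares regression onto the class targets. Its minimizer is then a linear function of the targets, and the \texttt{argmax} decision rule is unchanged by the resulting transformation of the logits. As in the symmetric form of \cref{app:multiclass_quadratic_invariance}, I parameterize the targets as $\bz^*_c = q_{\mathrm{low}}\1 + q\,\boldsymbol{e}_c$, where $q = q_{\mathrm{high}}-q_{\mathrm{low}}$ is the gap the statement refers to. If the intended parameterization is $\bz^*_c = q\,\boldsymbol{e}_c$, this is the special case $q_{\mathrm{low}}=0$.

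\textbf{Step 1 (linear dependence on the targets).} Write the augmented input $\tilde\bx_i=(\bx_i,1)$ and parameters $\tilde W=[W\,|\,\bb]$. Stack the targets into $Z^*\in\mathbb{R}^{N\times K}$, whose $i$-th row is $\bz^*_{y_i}$. The empirical loss is $\frac1N\|\tilde X\tilde W^\top - Z^*\|_F^2$. It decouples over the $K$ output coordinates, each being an ordinary least-squares problem. When $\lambda=d/N<1$, the matrix $\tilde X^\top\tilde X$ is invertible almost surely, as in \cref{cor:directional_invariance}. The unique minimizer is then $\tilde W^\top=(\tilde X^\top\tilde X)^{-1}\tilde X^\top Z^*$, which is linear in $Z^*$.

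\textbf{Step 2 (decomposing the targets).} Let $Y$ be the one-hot label matrix. Then $Z^*=q_{\mathrm{low}}\1_N\1_K^\top + q\,Y$. By linearity, the minimizer splits into the fit to $\1_N\1_K^\top$ plus $q$ times the fit $\tilde W_0$ to $Y$. The constant target $\1_N\1_K^\top$ is fit exactly by $W=0$, $\bb=\1_K$, because the column of ones lies in the column space of $\tilde X$. The optimal solution is therefore
\[
W^\star=q\,W_0,\qquad \bb^\star=q\,\bb_0+q_{\mathrm{low}}\1_K .
\]
Here $(W_0,\bb_0)$ depends only on the training data and labels, not on the target values.

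\textbf{Step 3 (invariance of predictions).} For any test point $\bx$, the predicted logits are $\bz=q(W_0\bx+\bb_0)+q_{\mathrm{low}}\1_K$. Adding a multiple of $\1_K$ does not change $\mathrm{argmax}_k z_k$. For $q>0$, positive rescaling does not change it either. Hence every test prediction, and so the generalization accuracy, is the same for all $q>0$ and all $q_{\mathrm{low}}$.

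The main obstacle is the sign of $q$. For $q<0$ the \texttt{argmax} becomes the \texttt{argmin} of the reference logits, so invariance holds only for the decision rule $\mathrm{argmax}_k\, \mathrm{sign}(q)\,z_k$. I would state the result for $q>0$, which is the relevant case since $q_{\mathrm{high}}>q_{\mathrm{low}}$, and remark that $q<0$ just relabels the predicted class consistently. Invertibility is the other point to check. If $\lambda\ge1$, I would replace the inverse by the minimum-norm (pseudoinverse) solution, which is still linear in $Z^*$. The constant target is then fit by the bias alone only if that solution puts no weight on $W$; this needs a short argument about the minimum-norm solution, or alternatively I would restrict the statement to $\lambda<1$.
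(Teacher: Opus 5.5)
Your proof is correct for $q>0$ and follows the paper's outline. You absorb $q_{\mathrm{low}}$ into the bias, show that the optimum for gap $q$ is $q$ times the optimum for $q=1$, and note that $\operatorname{argmax}$ is unchanged by positive scaling and by adding multiples of $\1_K$. You differ only in how you get the scaling step.

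The paper solves nothing. It shifts $\bb\mapsto\bb-q_{\mathrm{low}}\1_K$ and observes that the loss with targets $q\boldsymbol{e}_c$ equals $q^2$ times the $q=1$ loss evaluated at $(W/q,\bb/q)$. Hence the set of minimizers for gap $q$ is exactly $q$ times the set for gap $1$. This needs neither a closed form nor a rank condition, and it works unchanged for any absolutely homogeneous loss such as $|\bz-\bz^*_c|^p$.

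Your linearity-of-OLS route is tied to the square loss and to uniqueness, but it gives the optimum explicitly. It also settles the $\lambda\ge1$ case you left open. The problem decouples into $K$ output columns with identical constant targets, so the minimum-norm fit to $\1_N\1_K^\top$ is $(\tilde X^{+}\1_N)\1_K^\top$. Its contribution to any test logit vector is therefore $(\tilde\bx^\top\tilde X^{+}\1_N)\,\1_K$, which cannot change the $\operatorname{argmax}$ whether or not it puts weight on $W$. So invariance holds for the minimum-norm solution (what gradient descent from zero returns) at every $\lambda$. The paper's proof does not address this non-uniqueness issue.

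Your sign caveat is right and applies to the paper too, whose proof ends with ``since $q$ is a positive scalar''. The hypothesis $q\neq0$ should read $q>0$. For $q<0$ the prediction becomes the $\operatorname{argmin}$ of the $q=1$ logits, which generally gives a different accuracy. For $K>2$ this is not a mere relabeling of classes, contrary to your closing remark.
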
\begin{proof}Let $W \in \mathbb{R}^{K \times d}$ and $\bb \in \mathbb{R}^K$. The logits are given by $\bz_i = W\bx_i + \bb$. 
We notice that up to a global shift that can be compensated by the bias, we can assume that $q_{\mathrm{low}}=0$.
Therefore, $\bz^*_c = q \boldsymbol{e}_c$. The loss function is:\begin{equation}\mathcal{L}(W, \bb) = \frac{1}{N} \sum_{i=1}^N |W\bx_i + \bb - q \boldsymbol{y}i|^2,
\end{equation}
where $\boldsymbol{y}_i$ is the one-hot representation of the label.
We can factor out $q^2$:
\begin{equation}
\mathcal{L}(W, \bb) = q^2 \frac{1}{N} \sum{i=1}^N \left| \left(\frac{1}{q}W\right)\bx_i + \left(\frac{1}{q}\bb\right) - \boldsymbol{y}_i \right|^2.\end{equation}Let $\tilde{W} = W/q$ and $\tilde{\bb} = \bb/q$. Minimizing $\mathcal{L}$ with respect to $W, \bb$ is equivalent to minimizing the term inside the summation with respect to $\tilde{W}, \tilde{\bb}$. Let $\tilde{W}_{\min}, \tilde{\bb}_{\min}$ be the minimizers for the case $q=1$. Then the minimizers for an arbitrary $q$ are simply $W_{\min} = q \tilde{W}_{\min}$ and $\bb_{\min} = q \tilde{\bb}_{\min}$.The predicted class for a test point $\bx$ is:
\begin{equation}
    \hat{y} = \operatorname*{argmax}_{k} (W_{\min}\bx + \bb_{\min})_k = \operatorname*{argmax}_{k} (q(\tilde{W}_{\min}\bx + \tilde{\bb}_{\min}))_k.
\end{equation}
Since $q$ is a positive scalar, the argmax (and thus the accuracy) remains unchanged.
\end{proof}

In the binary setting (\cref{prop:gaussian_data}), the scalar coefficient of variation $r=\sigma/\mu$ is equivalent (up to inversion) to the LDA objective: maximizing $\mu/\sigma$ is the same as maximizing the Rayleigh quotient
\[
J(\bs)=\frac{(\bs^\top \bmu)^2}{\bs^\top \Sigma\,\bs},
\]
whose maximizer is $\bs\propto \Sigma^{-1}\bmu$. In the multi-class case, Fisher's criterion has a standard generalization obtained by replacing the scalar mean-variance ratio with a between-class versus within-class scatter ratio. Writing $\bmu_c$ for class means, $\bmu$ for the global mean, and defining the scatters
\begin{equation}
S_B=\sum_{c=1}^K \pi_c(\bmu_c-\bmu)(\bmu_c-\bmu)^\top,
\qquad
\end{equation}
the multi-class Fisher objective for a $(K\!-\!1)$-dimensional projection $U\in\mathbb{R}^{d\times (K-1)}$ is
\begin{equation}
J(U)=\mathrm{tr}\!\Big((U^\top S_W U)^{-1}\,U^\top S_B U\Big)
\end{equation}
(equivalently, $\det(U^\top S_B U)/\det(U^\top S_W U)$).
This reduces to the binary $\mu/\sigma$ criterion when $K=2$ and is maximized by the top $(K-1)$ generalized eigenvectors of the pair $(S_B,S_W)$ (i.e., of $S_W^{-1}S_B$). Accordingly, the natural multi-class analogue of our scalar clustering invariant is the Fisher scatter ratio encoded by these generalized eigenvalues (or their sum/product), rather than a single $\sigma/\mu$.

\subsection{Shift of the Interpolation Threshold ($\sigma_f=0$)}In this section, we provide the multi-class generalization for the shift of the interpolation threshold in the noiseless feature regime. We also demonstrate in \cref{fig:multiclass_grokking} that a grokking dynamics can be achieved, while also the fact that the logits cluster at a single point at the end of the training.

\begin{proposition}
\label{prop:multiclass_generalization_proof}Consider the multi-class model with $\alpha>0$ and $\sigma_{f}=0$. Assume the limit where $d, N \to \infty$ with $\lambda = d/N < 1$. Furthermore, assume that the class means $\{\boldsymbol{\mu}_{c}\}_{c=1}^{K} \subset \mathbb{R}^{d}$ span a $(K-1)$-dimensional affine subspace and are affinely independent. Then, the parameters $W_{\min}, \boldsymbol{b}_{\min}$ that minimize the training loss achieve perfect generalization accuracy.\end{proposition}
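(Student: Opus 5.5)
We mirror the binary argument: exhibit a parameter pair that attains the unconstrained lower bound of the training loss, argue that the minimizer is unique when $\lambda<1$, and check that this minimizer classifies every test point correctly.

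\textbf{Step 1 (lower bound).} For each class $c$ the per-sample loss $\ell(\cdot,\by_c)$ is convex with unique minimizer $\bz^*_{(c)}$, of the simplex form $z_{\mathrm{high}}\boldsymbol{e}_c+z_{\mathrm{low}}(\1-\boldsymbol{e}_c)$. Hence $\mathcal{L}\ge \frac1N\sum_i \ell(\bz^*_{(y_i)},\by_i)$, with equality iff $W\bx_i+\bb=\bz^*_{(y_i)}$ for every $i$.

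\textbf{Step 2 (construction).} With $\sigma_f=0$, every sample of class $c$ has the form $\bx_i=\bmu_c+\sigma_n\boldsymbol{\eta}_{n,i}$, where $\boldsymbol{\eta}_{n,i}\in\mathcal{S}^\perp$. We choose $W$ with rows in $\mathcal{S}$, so that $W\boldsymbol{\eta}_{n,i}=0$, and require $W\bmu_c+\bb=\bz^*_{(c)}$ for all $c$.

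To see this is solvable, write $\bmu_c=\bar{\bmu}+\boldsymbol{v}_c$ with $\boldsymbol{v}_c\in\mathcal{S}$. The $\boldsymbol{v}_c$ span $\mathcal{S}$ (dimension $K-1$) and satisfy the single relation $\sum_c \boldsymbol{v}_c=0$. Since $\sum_c(\bz^*_{(c)}-\bar{\bz}^*)=0$ as well, the map $\boldsymbol{v}_c\mapsto \bz^*_{(c)}-\bar{\bz}^*$ is consistent with this relation. Affine independence of the means therefore gives a linear map $W:\mathcal{S}\to\mathbb{R}^K$ realizing it. Setting $\bb=\bar{\bz}^*-W\bar{\bmu}$ then gives $W\bx+\bb=\bz^*_{(c)}$ exactly, for every training point and every test point of class $c$. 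All logits collapse onto the simplex vertices, so the global lower bound is attained.

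\textbf{Step 3 (uniqueness), the main obstacle.} Any minimizer $(W',\bb')$ must satisfy the same $N$ affine equations $W'\bx_i+\bb'=\bz^*_{(y_i)}$. The difference $(\Delta W,\Delta\bb)$ therefore annihilates every augmented vector $(\bx_i,1)\in\mathbb{R}^{d+1}$.

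It suffices to show that these augmented vectors span $\mathbb{R}^{d+1}$. Their span contains the $K$ vectors $(\bmu_c,1)$ (up to noise) together with the noise directions. When $N>d+1$ and the noise $\boldsymbol{\eta}_n$ is in general position in $\mathcal{S}^\perp$ (Gaussian, say), this holds with probability one. Concretely:
\begin{itemize}
\item within each class, differences $\bx_i-\bx_j$ span $\mathcal{S}^\perp$ once $N$ is large enough, since $\dim\mathcal{S}^\perp=d-K+1<N$ in the $\lambda<1$ proportional limit;
\item the class-affine structure then supplies the remaining $K$ dimensions.
\end{itemize}
This rank argument is the delicate point. In the proportional limit with $\lambda<1$ we have $N-d\to\infty$, which absorbs the $O(K)$ extra dimensions, but one must check that the per-class counts remain large.

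With full rank established, $\Delta W=0$ and $\Delta\bb=0$, so the constructed pair is the unique minimizer. Its test logits equal $\bz^*_{(c)}$ exactly, and since $z_{\mathrm{high}}>z_{\mathrm{low}}$ the argmax is $c$. This gives perfect generalization accuracy.
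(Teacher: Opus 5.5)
Your overall route matches the paper's. You attain the per-sample lower bound with a signal-only $W$ solving $W\bmu_c+\bb=\bz^*_{(c)}$; your centered construction is a cleaner version of the paper's inversion of the augmented-means matrix $M$. You then invoke uniqueness for $\lambda<1$ and read off that test logits land exactly on the simplex vertices. The paper handles uniqueness only by asserting that the loss is strictly convex when $\lambda<1$. Your reduction to the augmented vectors $(\bx_i,1)$ spanning $\mathbb{R}^{d+1}$ is the right way to make that precise. It also uses only uniqueness of each $\bz^*_{(c)}$, not strict convexity of the total loss.

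The concrete justification you give for that rank condition, however, fails in the regime the statement is about. A single class contributes only $N_c-1\approx N/K$ differences, while $\dim\mathcal{S}^\perp=d-K+1\approx\lambda N$. So one class's within-class differences cannot span $\mathcal{S}^\perp$ whenever $\lambda>1/K$, for example in the paper's own multi-class experiments with $K=10$ and $\lambda=0.7$. The bound $d-K+1<N$ you cite compares against the wrong count, and the deferred ``per-class counts remain large'' check would simply fail.

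The fix is to pool across classes. Fix a representative $j(c)$ in each class. The $N-K$ vectors $\boldsymbol{\eta}_{n,i}-\boldsymbol{\eta}_{n,j(c_i)}$ have a jointly absolutely continuous law on $(\mathcal{S}^\perp)^{N-K}$, since they are surjective linear images of independent nondegenerate Gaussians. Hence they span $\mathcal{S}^\perp$ almost surely once $N-K\ge d-K+1$, i.e.\ $N\ge d+1$. The differences $\bx_{j(c)}-\bx_{j(1)}$, taken as vectors $(\cdot,0)$, differ from $\bmu_c-\bmu_1$ by elements of $\mathcal{S}^\perp$ already in the span, so they supply $\mathcal{S}=\mathrm{span}\{\bmu_c-\bmu_1\}$. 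Finally, $(\bx_1,1)$ supplies the last coordinate. Thus only $N\ge d+1$ and every class being represented are needed, with no per-class growth condition, and with this repair your Step 3 is complete.
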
\begin{proof}Since $\sigma_{f}=0$, any sample $\boldsymbol{x}^{(i)}$ belonging to class $c$ lies in a subspace defined by the class mean plus orthogonal noise:
\begin{equation}
\boldsymbol{x}^{(i)} = \boldsymbol{\mu}_{c} + \boldsymbol{\xi}^{(i)}_{\perp},
\end{equation}
where $\boldsymbol{\mu}_{c}$ is the deterministic signal component, and $\boldsymbol{\xi}^{(i)}_{\perp}$ lies in the orthogonal complement of the signal subspace $\mathcal{S} = \operatorname{span}(\{\boldsymbol{\mu}_k\})$.
The regularized loss is a sum of strictly convex per-sample losses. A global lower bound for the total loss is achieved if and only if the logit vector for every sample $i$ of class $c$ coincides exactly with the unique minimizer of the class-specific per-sample loss, $\bz^{*}_{c} \in \mathbb{R}^{K}$.

We decompose the weight matrix $W$ into a component acting on the signal subspace, $W_{\mathrm{sig}}$, and a component acting on the orthogonal noise, $W_{\perp}$. We construct a solution where $W_{\perp} = 0$. The condition for achieving the global minimum on the training set becomes:
\begin{equation}
    W_{\mathrm{sig}}\boldsymbol{\mu}_{c} + \boldsymbol{b} = \bz^{*}_{c}, \quad \forall c \in \{1, \dots, K\}.
\end{equation}
We can rewrite this using augmented notation. Let $\tilde{W}_{\mathrm{sig}} = [\boldsymbol{b}, W_{\mathrm{sig}}]$ and $\tilde{\boldsymbol{\mu}}_c = [1, \boldsymbol{\mu}_c^\top]^\top$. The system is:
\begin{equation}
    \tilde{W}_{\mathrm{sig}} M = Z^*,
\end{equation}
where columns of $M$ are the augmented means $\tilde{\boldsymbol{\mu}}_c$, and columns of $Z^*$ are the targets $\bz^*_c$. Since the class means are affinely independent, the matrix $M \in \mathbb{R}^{K \times K}$ is invertible. Thus, a unique solution $\tilde{W}_{\mathrm{sig}} = Z^* M^{-1}$ exists.

For $\lambda < 1$, the loss is strictly convex, ensuring this solution is unique. Thus, the learned weights map every sample (train or test) of class $c$ exactly to $\boldsymbol{z}^{*}_{c}$. As $\boldsymbol{z}^{*}_{c}$ minimizes the loss for label $c$, it must correctly classifies the input so the classifier achieves $100\%$ accuracy.
\end{proof}

\subsection{Invariance to Orthogonal Noise ($\sigma_n$)}
Finally, we generalize \cref{prop:sigma_n_independence} to the multi-class case.
\begin{proposition}
Let $\alpha > 0$ and $\lambda < 1$. In the multi-class setting described in \cref{sec:multiclass}, the generalization accuracy of the optimal classifier is independent of the orthogonal noise scale $\sigma_n$.
\end{proposition}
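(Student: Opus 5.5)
The plan is to mirror the scaling argument of \cref{prop:sigma_n_independence}, now for the matrix parameters $(W,\bb)$. First I fix an orthonormal basis adapted to the decomposition $\mathbb{R}^d=\mathcal{S}\oplus\mathcal{S}^\perp$. Every input, training or test, is written as
\[
\bx=\boldsymbol{\mu}_c+\sigma_f\boldsymbol{\eta}_f+\sigma_n\boldsymbol{\eta}_n .
\]
Here the first two terms lie in $\mathcal{S}$ and the last in $\mathcal{S}^\perp$. Accordingly I split $W=W_{\mathrm{sig}}P_{\mathcal{S}}+W_\perp P_{\mathcal{S}^\perp}$, so each logit vector decomposes as
\[
\bz=W_{\mathrm{sig}}(\boldsymbol{\mu}_c+\sigma_f\boldsymbol{\eta}_f)+\sigma_n W_\perp\boldsymbol{\eta}_n+\bb .
\]
The key observation is that $\sigma_n$ enters only through the product $\sigma_n W_\perp$. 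I would realise the data for two noise scales on a common probability space, drawing the same $\boldsymbol{\eta}_f,\boldsymbol{\eta}_n$ and labels, so that only $\sigma_n$ changes.

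Next I compare the empirical losses. For a fixed training sample, write $\mathcal{L}_{\sigma_n}(W_{\mathrm{sig}},W_\perp,\bb)$ for the training loss at noise scale $\sigma_n$. Since the per-sample loss \cref{eq:loss_logit_regularization_multiclass} depends on the data only through the logits, the loss is the same function of the parameters, with $W_\perp$ replaced by $\beta W_\perp$, when $\sigma_n$ is replaced by $\beta^{-1}\sigma_n$. Precisely,
\[
\mathcal{L}_{\beta\sigma_n}(W_{\mathrm{sig}},W_\perp,\bb)=\mathcal{L}_{\sigma_n}(W_{\mathrm{sig}},\beta W_\perp,\bb)
\qquad\text{for every }\beta>0 .
\]
The map $W_\perp\mapsto\beta W_\perp$ is a bijection of parameter space. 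Therefore the minimizers correspond exactly, via
\[
(W_{\mathrm{sig}},W_\perp,\bb)_{\min}^{(\beta\sigma_n)}=(W_{\mathrm{sig}},\beta^{-1}W_\perp,\bb)_{\min}^{(\sigma_n)} .
\]
Uniqueness of the minimizer is what makes this statement about \emph{the} optimal classifier well defined. I would get it from $\alpha>0$, which makes each per-sample loss strictly convex with a finite minimum $\bz^*_c$, together with $\lambda<1$. With $\lambda<1$ the augmented data matrix $[\1,X]$ has full column rank almost surely, so the map from parameters to training logits is injective and the total loss is strictly convex in $(W,\bb)$. Existence follows from coercivity. Strict convexity in $\bz$ with a finite minimum makes $\ell$ coercive in $\bz$, and injectivity transfers this to coercivity in the parameters.

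Finally, the test logits at scale $\beta\sigma_n$ under the corresponding minimizer are
\[
W_{\mathrm{sig}}(\boldsymbol{\mu}_c+\sigma_f\boldsymbol{\eta}_f)+(\beta\sigma_n)(\beta^{-1}W_\perp)\boldsymbol{\eta}_n+\bb .
\]
This coincides with the test logits at scale $\sigma_n$, so the predicted class $\operatorname{argmax}_k z_k$, and hence the accuracy, is unchanged pointwise on the coupling. The generalization accuracy, being an expectation over test points and then over training sets, is therefore independent of $\sigma_n$.

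The main obstacle is the uniqueness and existence step. It is the one place where $\alpha>0$ and $\lambda<1$ are genuinely used. I need the per-sample loss to be strictly convex in $\bz$: the cross-entropy term is only convex, and it is flat along the all-ones direction. So I would either assume $f$ strictly convex, or argue that the bias absorbs the all-ones direction and verify strict convexity on the quotient. I would also need to confirm that $\mathcal{S}$ has dimension $K-1$, so that the parameter count $K(d+1)$ against $N$ samples keeps the design full rank when $\lambda<1$.
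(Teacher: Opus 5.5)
Your proposal is correct and uses the same scaling argument as the paper: the reparametrization $W_\perp\mapsto\beta^{-1}W_\perp$ leaves $\sigma_n W_\perp\boldsymbol{\eta}_n$, and hence every training and test logit, unchanged; your explicit loss identity, the coupling, and the existence/uniqueness check (including the all-ones degeneracy for label-smoothing-type $f$) make rigorous what the paper only asserts informally. One detail that both your version and the paper's gloss over: $\boldsymbol{\mu}_c$ need not lie in the centered subspace $\mathcal{S}$ when $P_{\mathcal{S}^\perp}\bar{\boldsymbol{\mu}}\neq\boldsymbol{0}$, but the resulting constant offset $W_\perp P_{\mathcal{S}^\perp}\bar{\boldsymbol{\mu}}$ can be absorbed by a compensating shift of $\bb$, so the reparametrization is still a bijection and the argument goes through unchanged.
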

\begin{proof}
Let the data for class $c$ be $\boldsymbol{x} = \boldsymbol{x}_{\mathrm{sig}} + \sigma_n \boldsymbol{\eta}_{\perp}$, where $\boldsymbol{x}_{\mathrm{sig}} \in \mathcal{S}$ and $\boldsymbol{\eta}_{\perp} \in \mathcal{S}^\perp$.
Decompose the weight matrix as $W = W_{\mathrm{sig}} + W_{\perp}$, where the rows of $W_{\mathrm{sig}}$ lie in $\mathcal{S}$ and rows of $W_{\perp}$ lie in $\mathcal{S}^\perp$.
The logits are:
\begin{equation}\bz = W_{\mathrm{sig}}\boldsymbol{x}_{\mathrm{sig}} + \sigma_n W_{\perp}\boldsymbol{\eta}_{\perp} + \boldsymbol{b}.
\end{equation}
Let $(W, \boldsymbol{b})$ be the minimizer for noise scale $\sigma_n$. The loss depends on the logits $\bz_i$. Consider a scaling of the noise $\sigma_n \to \beta \sigma_n$. The input noise becomes $\beta \sigma_n \boldsymbol{\eta}_{\perp}$.
To maintain the same optimal logit values (and thus the same minimal loss value), the optimizer must scale the orthogonal weight component inversely: $W'_{\perp} = \frac{1}{\beta} W_{\perp}$.
The term relevant for prediction on test data is the inner product of the noise and weights:
\begin{equation}
(\beta \sigma_n \boldsymbol{\eta}_{\perp})^\top (W'_{\perp})^\top = \beta \sigma_n \boldsymbol{\eta}_{\perp}^\top (\frac{1}{\beta} W_{\perp}^\top) = \sigma_n \boldsymbol{\eta}_{\perp}^\top W_{\perp}^\top.
\end{equation}
This term is invariant to $\beta$. Since the signal term $W_{\mathrm{sig}}\boldsymbol{x}_{\mathrm{sig}}$ is unaffected by $\sigma_n$, the distribution of test logits remains identical. Consequently, the accuracy is invariant.

\end{proof}

\section{Supplemental numerical results}
In this section we will present additional numerical results, supporting our results throughout the paper.

\subsection{Discrepancy Between Training and Test Logit Distributions}
\label[appendix]{app:train_test_logit_discrepancy}

Our geometric framework provides a mechanism explaining why test logits cluster less tightly than training logits. In the finite-data regime, the optimizer identifies a weight direction that minimizes the empirical variance of the \textit{training} logits. This is often achieved by exploiting specific noise correlations to "over-cluster" the data --- effectively reducing the empirical variance below the intrinsic population noise level. However, achieving this requires the learned weight vector to deviate from the true signal direction $\boldsymbol{\mu}_f$. This misalignment introduces a non-zero orthogonal component ($\boldsymbol{S}_{\perp}$), which necessarily increases the variance of the test logits (which follow the population statistics). Consequently, the test clusters are looser than those observed during training, as demonstrated in \cref{fig:logit_mean_and_std_vs_sigma_x}.

\begin{figure}[H]
    \centering
    \includegraphics[scale=0.6]{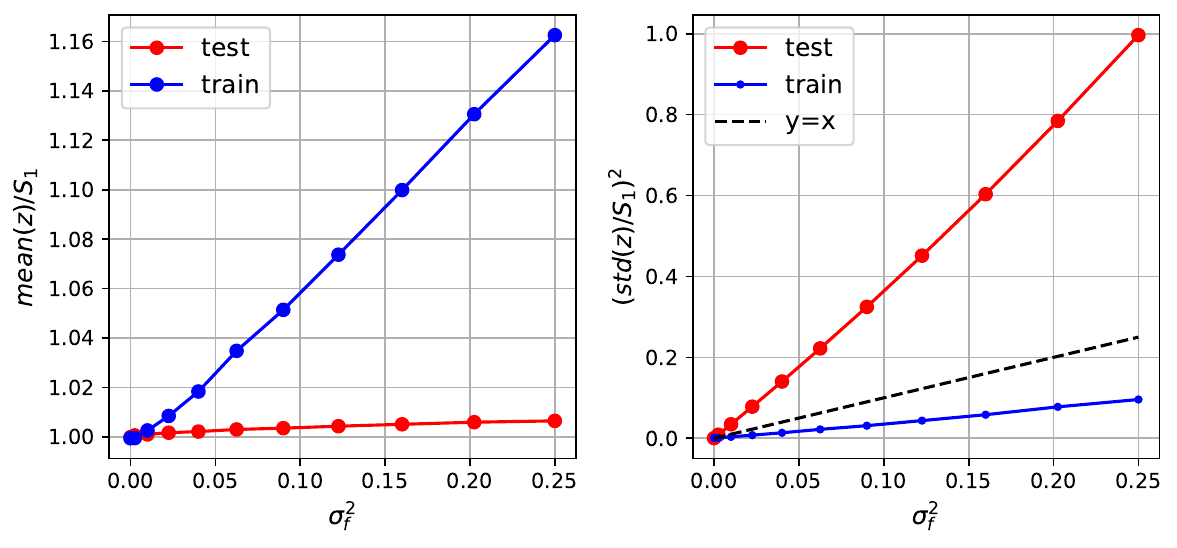}
    \caption{\textbf{Logit statistics vs. signal noise $\sigma_f$.} 
    The mean and standard deviation of the logits in the binary classification setup are shown as functions of $\sigma_{f}$. The diagonal ($y=x$) serves as a reference for the intrinsic signal noise.
    \textbf{Test Set (Generalization):} The variance is given by $\sigma_{z}^{2}=\sigma_{f}^{2}S_{1}^{2}+\sigma_{n}^{2}\|\boldsymbol{S}_{\perp}\|^{2}$. Due to the misalignment term $\|\boldsymbol{S}_{\perp}\|^2$, the test standard deviation typically exceeds the signal contribution, lying above the reference line.
    \textbf{Training Set:} Conversely, the training standard deviation lies below the reference line. This indicates that the optimizer finds a specific direction that minimizes the training logit variance beyond the theoretical lower bound of the signal noise ($\sigma_{z}^{2}S_{1}^{2}$). Here, $\lambda=d/N=0.7$.}
    \label{fig:logit_mean_and_std_vs_sigma_x}
\end{figure}

\subsection{Phase Diagram of Logit Regularization Benefit}

In the main text (\cref{fig:sigma_n_independence_and_phase_diagram}), we presented a phase diagram characterizing the regime where logit regularization improves generalization. That analysis focused on the separable setting ($\lambda=0.7 > 1/2$). Here, we examine the nonseparable regime, $\lambda=0.4 < 1/2$ (see \cref{fig:sigma_n_independence_and_phase_diagram_v2}).

The primary distinction between these regimes lies in the behavior of the unregularized baseline as the orthogonal noise $\sigma_n$ increases.
In the overparameterized case ($\lambda > 1/2$), the training data is typically linearly separable due to the high dimensionality. Consequently, as $\sigma_n$ grows, the unregularized max-margin solution aligns increasingly with the orthogonal noise directions rather than the signal feature. This causes the cosine similarity with the true feature axis to vanish, degrading the generalization accuracy toward random guessing ($1/2$).

In contrast, for $\lambda < 1/2$, the data is generally not linearly separable. Consequently, there is no "easy" overfitting solution that separates the training points using noise dimensions. Instead, increasing the amplitude of the noise will aid alignment: high orthogonal noise effectively drowns out spurious correlations, forcing the optimization to "tune" the margin toward 1 (see bottom panels of \cref{fig:A_test_vs_sigma_n}). 
Therefore, rather than decaying to $1/2$, we empirically observe that the accuracy of the unregularized solution saturates at a non-trivial constant value (see the left panel of \cref{fig:sigma_n_independence_and_phase_diagram_v2}).
This shift in asymptotic behavior alters the topology of the phase diagram. In general, the specific boundaries of the beneficial regime will depend on the geometric ratio $\lambda$ and the underlying data distribution.

\begin{figure}[t!]
    \centering
    \includegraphics[scale=0.6]{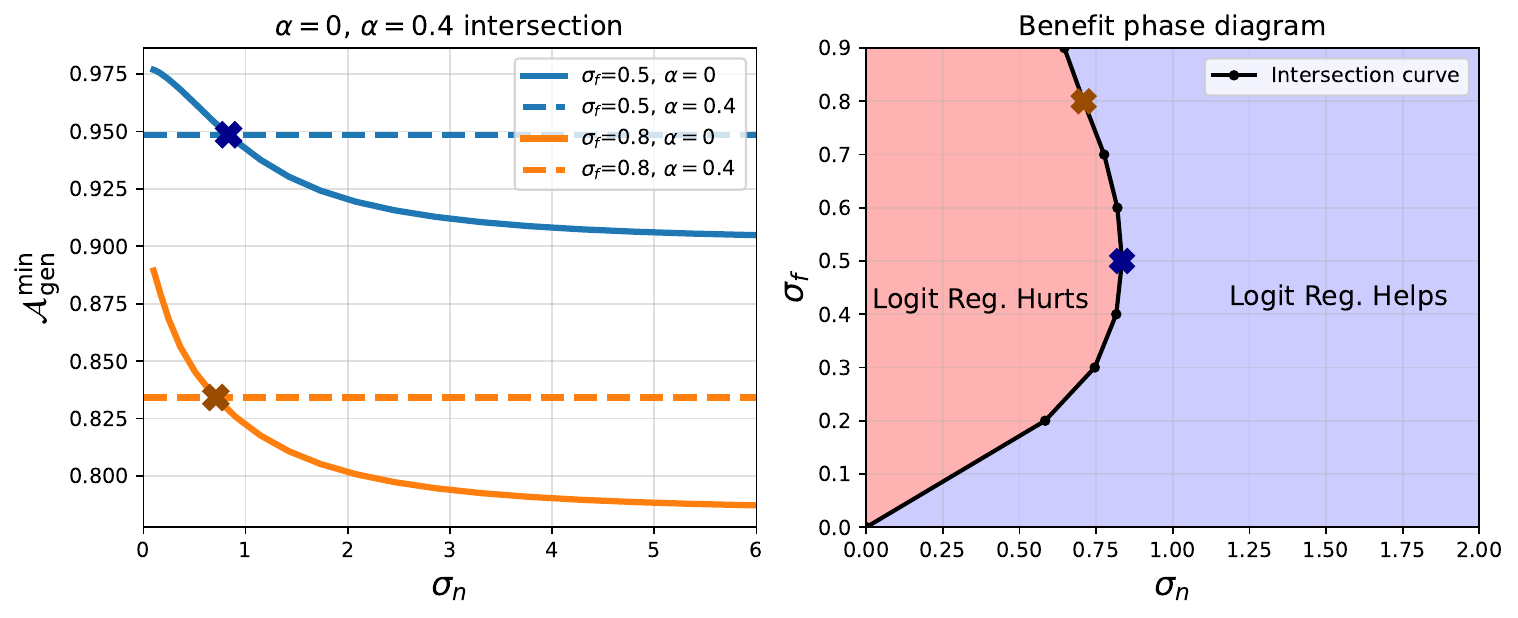}
    \caption{\textbf{Benefit phase diagram for $\lambda<1/2$.}
    \textbf{Left:} Final generalization accuracy as a function of $\sigma_n$, comparing the unregularized ($\alpha=0$) and regularized ($\alpha>0$) cases, and for $\lambda=0.4$. \textbf{Right:} Phase diagram showing regions where logit regularization is beneficial. The boundary curve is constructed from the intersection points (as shown in the left panel) for different values of $\sigma_f$. For illustration, the intersection points of the unregularized and regularized curves for $\sigma_f=0.5, 0.8$ (left) are marked as ``X''s of corresponding colors on the boundary curve (right).}
    \label{fig:sigma_n_independence_and_phase_diagram_v2}
\end{figure}

\subsection{Comprehensive Parameter Sweeps}

In this subsection, we present a systematic numerical analysis of the generalization accuracy and the cosine similarity (relative to the true feature axis). We perform parameter sweeps over the aspect ratio $\lambda$, signal noise $\sigma_f$, and orthogonal noise $\sigma_n$, comparing the unregularized baseline ($\alpha=0$) against the logit-regularized model ($\alpha > 0$).

\begin{figure}[H]
    \centering
    \includegraphics[scale=0.6]{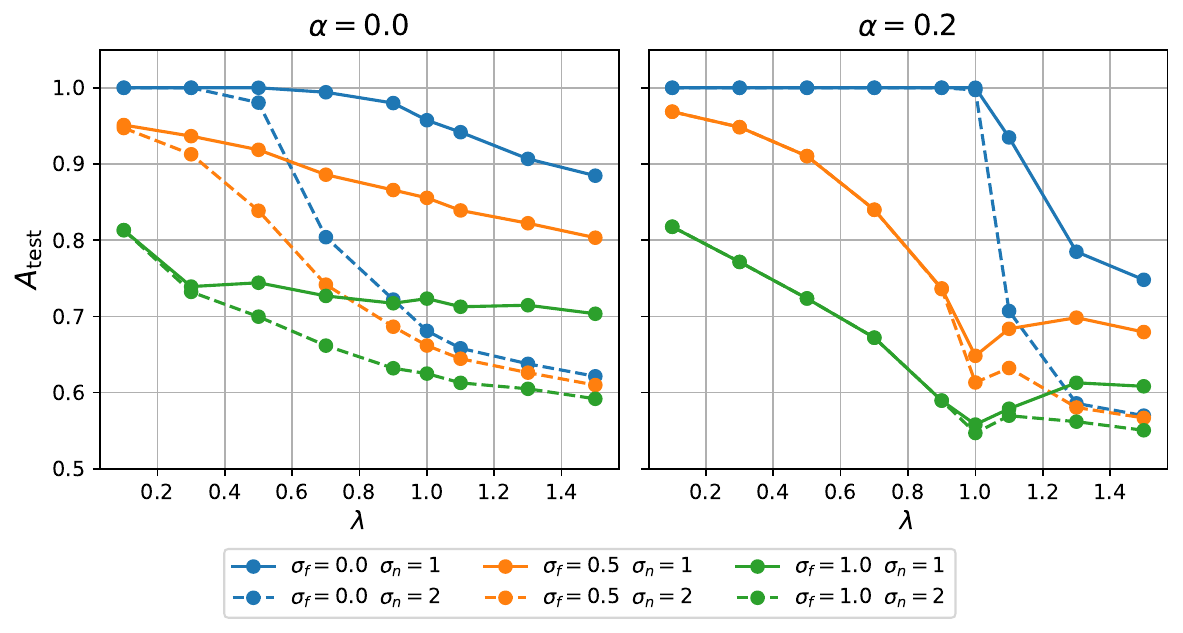}
    \includegraphics[scale=0.6]{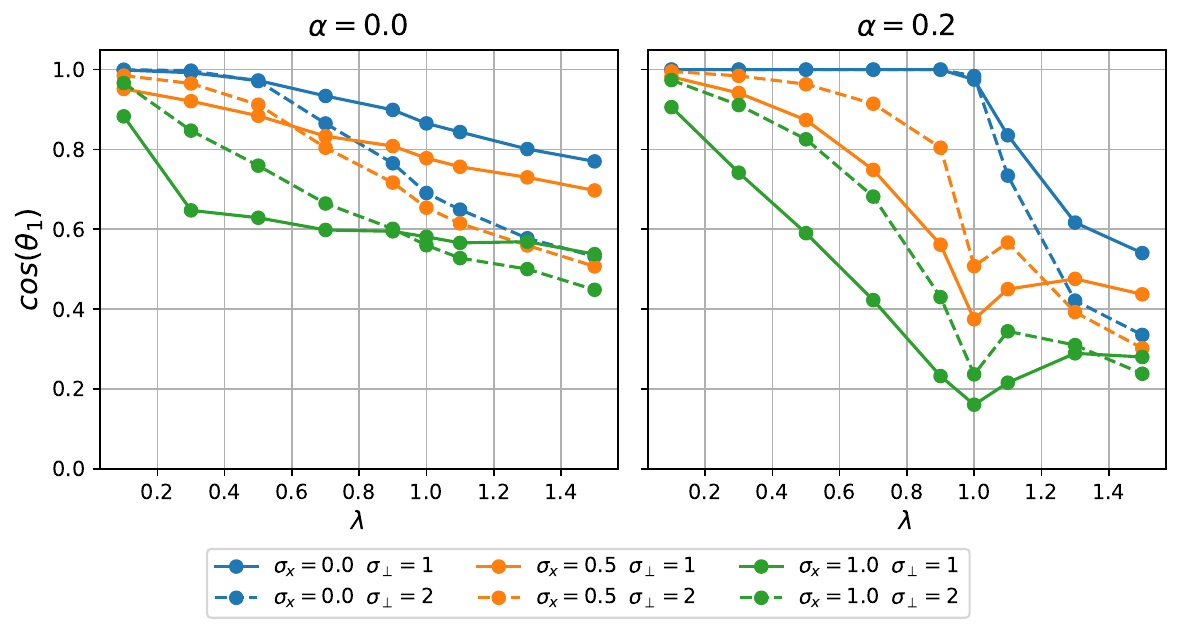}
    \caption{\textbf{Performance as a function of model capacity ($\lambda=d/N$).} 
    \textbf{Top Panels:} Generalization accuracy for the unregularized (left) and regularized (right) models. Different colors correspond to signal noise levels $\sigma_{f}=\{0, 0.5, 1\}$, and solid/dashed lines indicate orthogonal noise levels $\sigma_{n}=\{1, 2\}$.
    Two key phenomena are visible in the regularized case (right): (1) The accuracy becomes independent of $\sigma_{n}$ for $\lambda < 1$, and (2) For the noiseless feature case ($\sigma_f=0$), the interpolation threshold shifts from $\lambda=0.5$ to $\lambda=1$.
    \textbf{Bottom Panels:} Cosine similarity with the true feature axis. Unlike the accuracy, the cosine similarity remains sensitive to $\sigma_n$ in both regimes, confirming that regularization stabilizes the prediction quality even while the weight direction varies.}
    \label{fig:A_test_vs_lambda}
\end{figure}

\begin{figure}[H]
    \centering
    \includegraphics[scale=0.6]{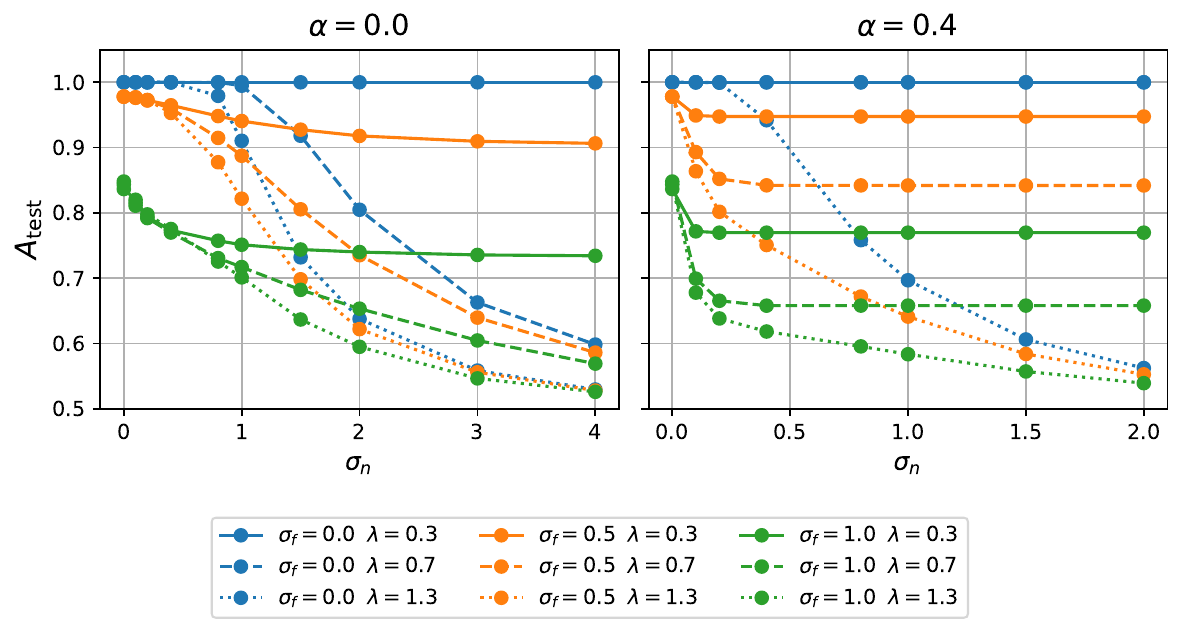}
    \includegraphics[scale=0.6]{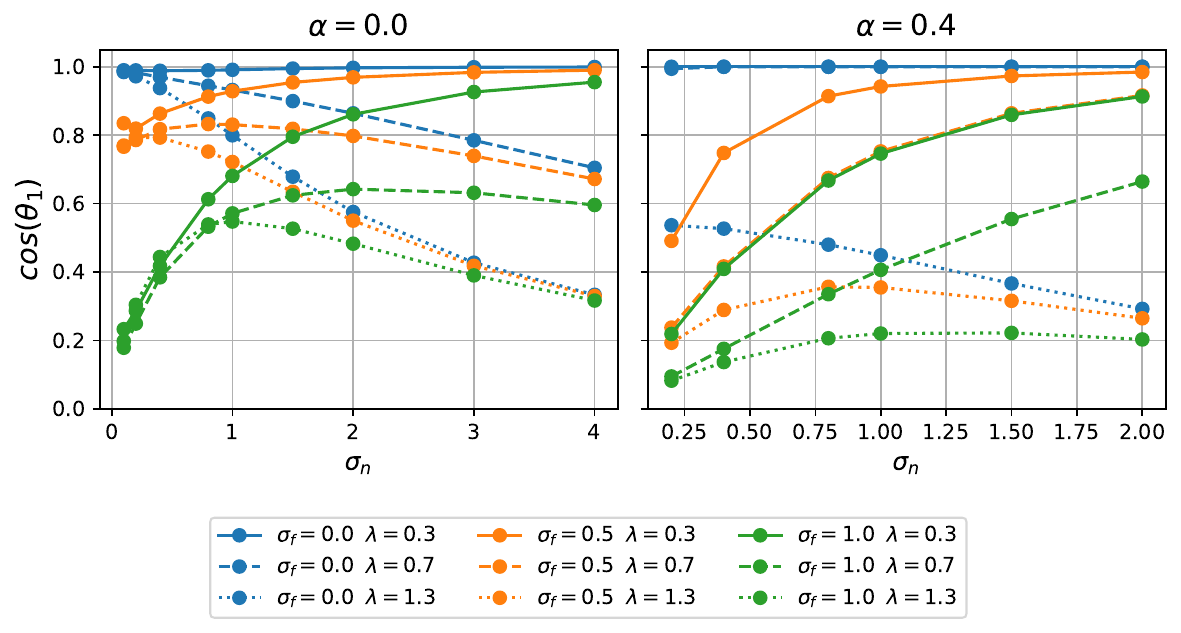}
    \caption{\textbf{Sensitivity to orthogonal noise amplitude $\sigma_n$.}
    \textbf{Top Panels:} Generalization accuracy vs. $\sigma_n$. The regularized model (right) is robust to changes in $\sigma_n$, maintaining constant accuracy. Note: The slight deviation observed at extremely small $\sigma_n$ is a numerical artifact; the convergence time scales inversely with noise, preventing the simulation from fully reaching the theoretical optimum within the fixed computational budget in the limit $\sigma_n \to 0$.
    \textbf{Bottom Panels:} Cosine similarity vs. $\sigma_n$. Here we observe distinct regimes: for the underparameterized case ($\lambda < 1/2$), increasing noise improves alignment (similarity $\to 1$), whereas for the overparameterized case ($\lambda > 1/2$), increasing noise degrades alignment.}
    \label{fig:A_test_vs_sigma_n}
\end{figure}

\subsection{Comparison with Weight Decay}
\label{subsec:comparison_with_weight_decay}

To underscore the unique geometric properties of logit regularization, we contrast its effects with standard $L_2$ parameter regularization (Weight Decay). We consider the objective function:
\begin{equation}
    \mathcal{L}_{\mathrm{WD}} = \sum_{i} \log\left(1+e^{-y_{i}\boldsymbol{S}^{\top}\boldsymbol{x}_{i}}\right) + \frac{\gamma}{2} \|\boldsymbol{S}\|^2.
\end{equation}
While Weight Decay prevents the weights from diverging, it fundamentally differs from logit regularization in how it shapes the optimization landscape. As shown in \cref{fig:A_test_vs_sigma_f_sigma_n_with_WD}, Weight Decay does \textbf{not} induce the specific invariance properties observed under logit regularization.

Specifically, we highlight two key distinctions:
\begin{enumerate}
    \item \textbf{No shift in interpolation threshold:} Unlike logit regularization, adding Weight Decay does not shift the interpolation threshold in the noiseless feature regime ($\sigma_f=0$). The generalization accuracy does not reach $100\%$ throughout the region $\lambda < 1$, confirming that the perfect generalization mechanism (logit clustering around a finite target) is specific to the per-sample convexity of logit penalties.
    \item \textbf{Sensitivity to orthogonal noise:} The final accuracy under Weight Decay remains explicitly dependent on the orthogonal noise scale $\sigma_n$. This stands in sharp contrast to the logit-regularized case, where the accuracy becomes invariant to $\sigma_n$ (as shown in \cref{fig:A_test_vs_lambda}).
\end{enumerate}

It is worth noting that for linear models, quadratic logit regularization ($\alpha z^2$) can be viewed as a generalized Tikhonov regularization of the form $\alpha \boldsymbol{S}^\top \hat{\Sigma} \boldsymbol{S}$, where $\hat{\Sigma} = \frac{1}{N}\sum_i \boldsymbol{x}_i \boldsymbol{x}_i^\top$ is the empirical non-centered covariance matrix. This data-dependent structure naturally adapts the penalty to the geometry of the input features, a property that isotropic Weight Decay ($\gamma \boldsymbol{S}^\top \boldsymbol{I} \boldsymbol{S}$) lacks.

\begin{figure}[H]
    \centering
    \includegraphics[scale=0.6]{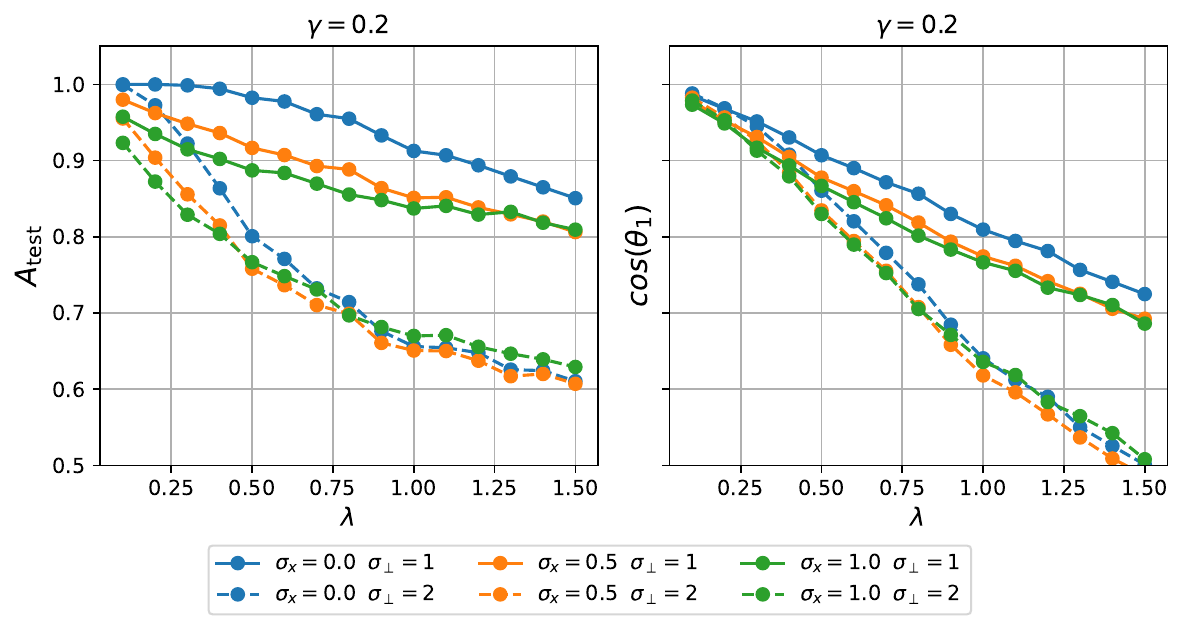}
    \caption{\textbf{Generalization under Weight Decay.} The generalization accuracy (top) and cosine similarity (bottom) achieved at convergence with standard $L_2$ regularization ($\gamma=0.2$). Unlike logit regularization, the accuracy remains sensitive to $\sigma_n$ and does not exhibit perfect generalization in the $\sigma_f=0$ regime for $0.5 < \lambda < 1$.}
    \label{fig:A_test_vs_sigma_f_sigma_n_with_WD}
\end{figure}

\subsection{Dependence of Cosine Similarity on Noise Amplitudes}
\label{subsec:cosine_similarity_dependence}

In this subsection, we numerically investigate the alignment of the learned weights with the true signal direction. We measure this via the cosine similarity $\rho = \cos(\theta_1)$, analyzing its dependence on the orthogonal noise $\sigma_n$ and the signal noise $\sigma_f$.

First, we examine the dependence on $\sigma_n$. Recall from the main text (Eq.~\ref{eq:rho_min}) that the alignment is predicted to follow the scaling form:
\begin{equation}
    \cos(\theta_{1}) = \frac{1}{\sqrt{1+\left(C/\sigma_n\right)^{2}}},
    \label{eq:cosine_scaling_verified}
\end{equation}
where $C$ is a coefficient that depends on $\sigma_f$ but is strictly independent of $\sigma_n$. This implies a linear relationship between $1/\cos^2(\theta_1)$ and $1/\sigma_n^2$. In \cref{fig:cosine_sim_vs_sigma_n}, we verify this prediction numerically: the data perfectly fits the line $y = 1 + C^2 x$, confirming the validity of our analytical derivation.

Next, we characterize the coefficient $C$ by analyzing its dependence on the signal noise $\sigma_f$. We fix $\sigma_n=1$ and track the components of the weight vector $\boldsymbol{S} = S_1 \boldsymbol{e}_1 + \boldsymbol{S}_\perp$. As shown in the left panel of \cref{fig:cosine_sim_vs_sigma_f}, for small $\sigma_f$, the signal component $S_1$ remains approximately constant, while the orthogonal component $\|\boldsymbol{S}_\perp\|$ grows linearly with $\sigma_f$. Consequently, the ratio $C \propto \|\boldsymbol{S}_\perp\|/S_1$ exhibits a linear dependence on $\sigma_f$ in the low-noise regime (see \cref{fig:cosine_sim_vs_sigma_f}, right panel). This demonstrates that as the feature noise vanishes ($\sigma_f \to 0$), the model relies less on spurious orthogonal correlations, recovering the perfect alignment predicted in the noiseless limit.

\begin{figure}[H]
    \centering
    \includegraphics[scale=0.5]{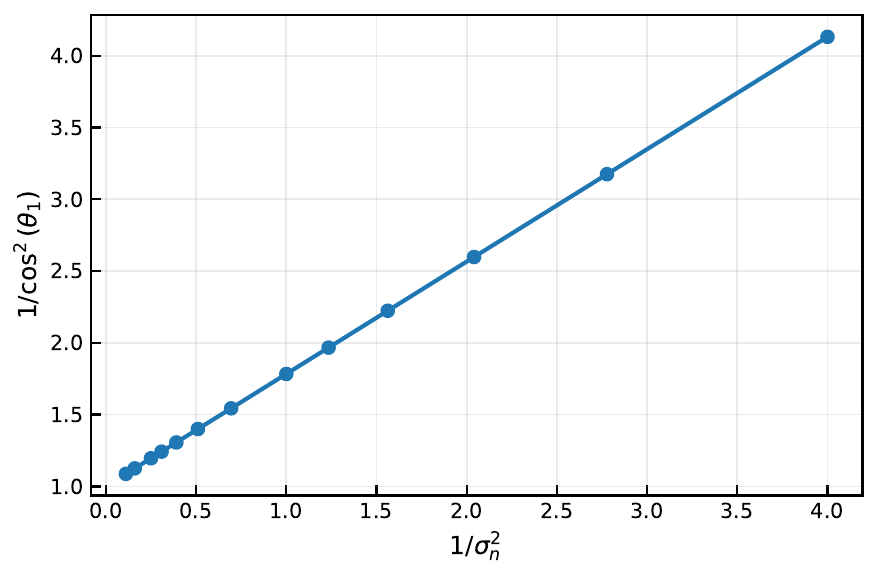}
    \caption{\textbf{Verification of $\sigma_n$ scaling.} Plot of $1/\cos(\theta_{1})^{2}$ versus $1/\sigma_{n}^{2}$. The linear fit confirms the functional form derived in Eq.~(\ref{eq:cosine_scaling_verified}), where the slope corresponds to $C^2$.}
    \label{fig:cosine_sim_vs_sigma_n}
\end{figure}

\begin{figure}[H]
    \centering
    \includegraphics[scale=0.6]{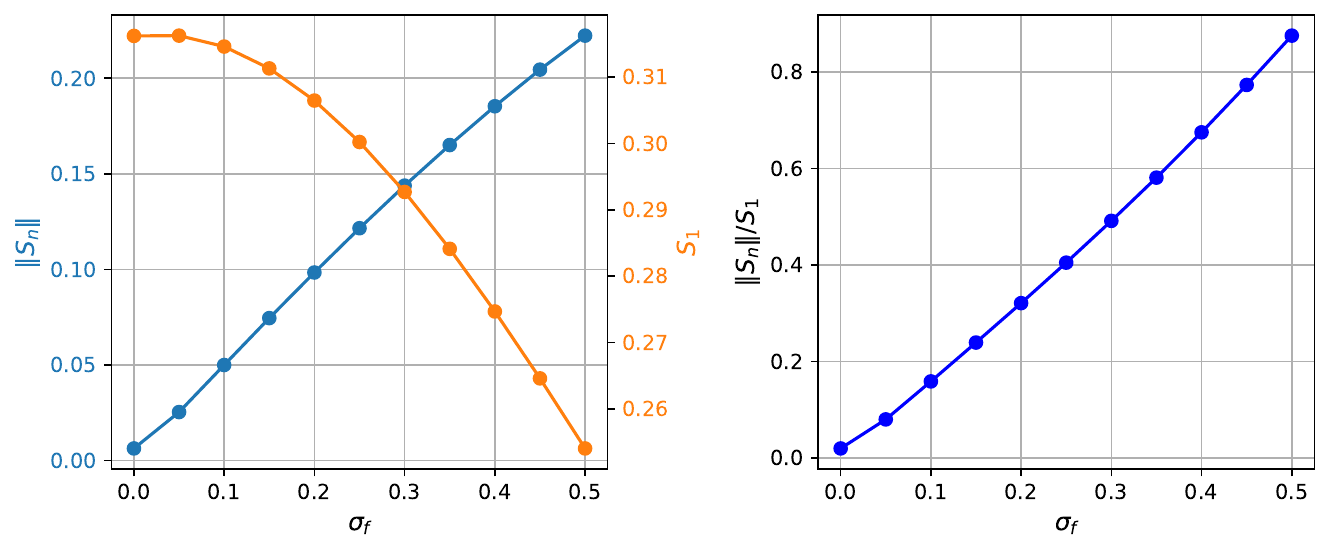}
    \caption{\textbf{Dependence on $\sigma_f$.} All results are for fixed $\sigma_n = 1$.
    \textbf{Left:} The magnitudes of the orthogonal weight component $\|\boldsymbol{S}_{\perp}\|$ and the signal component $S_1$ as functions of $\sigma_f$.
    \textbf{Right:} The ratio $\|\boldsymbol{S}_{\perp}\|/S_{1}$ (which is proportional to $C$) scales linearly with $\sigma_f$ for small noise amplitudes.}
    \label{fig:cosine_sim_vs_sigma_f}
\end{figure}


\subsection{Additional details regarding the numerical settings}
\label[appendix]{app:numerical_details}

Unless stated otherwise, we consider a linear classification model trained by minimizing the regularized loss $\mathcal{L}=\sum_{i} [(1-\alpha)\ell_{\mathrm{CE}}(z_i, y_i) + \alpha z_i^2]$. We use Full-Batch Gradient Descent (GD) for the optimization.
The data is generated according to the signal-plus-noise decomposition $\boldsymbol{x} = y \mu_f \boldsymbol{e}_1 + \boldsymbol{\xi}$, where $\boldsymbol{\xi} = \sigma_f \xi_f \boldsymbol{e}_1 + \sigma_n \boldsymbol{\xi}_{\perp}$. In most experiments, the noise components $\xi_f$ and $\boldsymbol{\xi}_{\perp}$ are drawn from standard Normal distributions.

Below we provide the exact hyperparameters used for each figure:

\begin{itemize}
    \item \textbf{\cref{fig:logit_clustering} (Clustering):}
    \begin{itemize}
        \item \textbf{Top Row ($\alpha=0$):} $d=1400$, $N=2000$ ($\lambda=0.7$), $\mu_f=1$, $\sigma_f=0.5$, $\sigma_n=1$. The model is trained for $20,000$ epochs with learning rate $\eta=0.1$.
        \item \textbf{Bottom Row ($\alpha > 0$):} Same parameters as above, but with $\alpha=0.2$.
        \item \textbf{Figure 1 (Zero Feature Noise):} Same parameters as above, but with $\sigma_f=0$ and $\alpha=0.2$.
    \end{itemize}

    \item \textbf{\cref{fig:alpha_independence} (Invariance with Student-t Noise):} 
    The noise follows a Student-t distribution with degrees of freedom $\nu \in \{2.1, 3, 20\}$. The dimensions are $d=1000$, $N=1500$ ($\lambda \approx 0.67$) with $\mu_f=1$, $\sigma_f=1$, $\sigma_n=1$. 
    For $\alpha=0$, we use GD with $\eta=0.1$. For $\alpha > 0$, we use the Adam optimizer with $\eta=0.001$. The number of epochs varies between $2,000$ and $10,000$ depending on convergence.

    \item \textbf{\cref{fig:alpha_independence_2} (Appendix - Invariance with Gaussian Noise):} 
    Gaussian data with varying dimension $d \in \{200, 1400, 1800\}$ while keeping $N=2000$ fixed, yielding $\lambda \in \{0.1, 0.7, 0.9\}$. Other parameters: $\mu_f=1$, $\sigma_f=0.5$, $\sigma_n=1$. Trained for $20,000$ epochs with GD ($\eta=0.1$).

    \item \textbf{\cref{fig:A_test_vs_lambda} (Performance vs $\lambda$):} 
    We vary $d$ while keeping $N=2000$ fixed to sweep $\lambda$. 
    \begin{itemize}
        \item Parameters: $\mu_f=1$. Signal noise $\sigma_f \in \{0, 0.5, 1\}$. Orthogonal noise $\sigma_n \in \{0.2, 4\}$.
        \item Regularization: We compare $\alpha=0$ vs $\alpha=0.4$.
        \item Optimization: GD with $\eta=0.1$ for $20,000$ epochs.
    \end{itemize}
    
    \item \textbf{\cref{fig:A_test_vs_sigma_f_sigma_n_with_WD} (WD Performance):}
    We vary $d$ while keeping $N=2000$ fixed to sweep $\lambda$,
    \begin{itemize}
        \item Parameters: $\mu_f=1$. $\sigma_f \in \{0, 0.5, 1\}$. $\sigma_n \in \{1, 2\}$.
        \item Regularization: $\alpha=0$ but $\gamma=0.2$ (WD).
        \item Optimization: GD with $\eta=0.1$ for $2,000$ epochs.
    \end{itemize}

    \item \textbf{\cref{fig:cosine_sim_vs_sigma_n} (Cosine Similarity vs $\sigma_n$):} 
    We fix $d=1400$, $N=2000$ ($\lambda=0.7$).
    \begin{itemize}
        \item Parameters: $\mu_f=1$, $\sigma_f=0.5$. We sweep $\sigma_n \in [0.5, 3]$.
        \item Regularization: $\alpha=0.2$.
        \item Optimization: GD with $\eta=0.1$ for $2,000$ epochs.
    \end{itemize}

    \item \textbf{\cref{fig:cosine_sim_vs_sigma_f} (Cosine Similarity vs $\sigma_f$):} 
    We fix $d=1400$, $N=2000$ ($\lambda=0.7$).
    \begin{itemize}
        \item Parameters: $\mu_f=1$, $\sigma_n=1$. We sweep $\sigma_f \in [0, 0.5]$.
        \item Regularization: $\alpha=0.4$.
        \item Optimization: Adam optimizer with $\eta=0.01$ for $1,000$ epochs.
    \end{itemize}

    \item \textbf{\cref{fig:grokking} (Grokking):} 
    We fix $d=1400$, $N=2000$ ($\lambda=0.7$) in the noiseless feature regime.
    \begin{itemize}
        \item Parameters: $\mu_f=0.1$, $\sigma_f=0$, $\sigma_n=1$.
        \item Regularization: We compare $\alpha \in \{0, 0.001, 0.01, 0.1\}$.
        \item Optimization: GD with $\eta=0.1$ for an extended duration of $200,000$ epochs.
    \end{itemize}

    \item \textbf{\cref{fig:sigma_n_independence_and_phase_diagram} (Phase Diagram $\lambda=0.7$):} 
    $d=1400$, $N=2000$. We interpolate theoretical curves based on simulations with $\sigma_n$ sweeping from $0.1$ to $6$.
    \begin{itemize}
        \item Parameters: $\mu_f=1$. $\alpha=0.4$.
        \item Optimization: GD with $\eta=0.1$ for $20,000$ epochs.
    \end{itemize}
    
    \item \textbf{\cref{fig:sigma_n_independence_and_phase_diagram_v2} (Phase Diagram $\lambda=0.4$):} 
    $d=800$, $N=2000$.
    \begin{itemize}
        \item Parameters: $\mu_f=1$. $\alpha=0.4$.
        \item Optimization: GD with $\eta=0.1$ for $20,000$ epochs.
    \end{itemize}

    \item \textbf{\cref{fig:multiclass_clustering} (Multi-class):} 
    $K=10$ classes. $d=1400$, $N=2000$ ($\lambda=0.7$).
    \begin{itemize}
        \item Parameters: $\mu_f=1$, $\sigma_f=0.1$, $\sigma_n=1$.
        \item Regularization: $\alpha=0$ and $\alpha=0.2$.
        \item Optimization: GD optimizer with $\eta=0.1$ for $20,000$ epochs.
    \end{itemize}

    \item \textbf{\cref{fig:multiclass_grokking} (Multi-class Grokking):} 
    $K=10$ classes. $d=1400$, $N=2000$ ($\lambda=0.7$). Same as previous but for $\sigma_f=0$ and $\alpha=0.04$.
\end{itemize}

\subsubsection{ResNet-18 (\cref{fig:feature_geometry_motivation,fig:Penultimate_resnet18_results})}

We extract 512-dimensional penultimate features from a frozen ResNet-18 pre-trained on ImageNet. For the linear probe experiments, we select $N_{\text{train}}=1000$ balanced training samples and use the remaining $N_{\text{test}}=9000$ samples for evaluation. We compare performance on the clean CIFAR-10 test set and on CIFAR-10-C (Impulse Noise, Severity 5).

\textbf{Feature Decomposition and Noise Estimation:}
To map the complex feature distribution to our theoretical signal-plus-noise model, we perform a class-conditional decomposition.
Let $X \in \mathbb{R}^{N \times d}$ be the matrix of feature vectors and $y$ the labels. For each class $c$, we compute the empirical mean $\boldsymbol{\mu}_c \in \mathbb{R}^d$ and the centered covariance matrix $\Sigma_c$.
\begin{enumerate}
    \item \textbf{Signal Subspace:} We define the signal subspace $\mathcal{S}$ as the span of the differences between class means, $\mathcal{S} = \text{span}(\{\boldsymbol{\mu}_c - \boldsymbol{\mu}_1\}_{c=2}^K)$. We compute an orthonormal basis $Q_1 \in \mathbb{R}^{d \times (K-1)}$ for $\mathcal{S}$ via QR decomposition.
    
    \item \textbf{Orthogonal Subspace:} We complete the basis to $\mathbb{R}^d$ using an orthogonal complement $Q_2 \in \mathbb{R}^{d \times (d-K+1)}$, such that $U = [Q_1, Q_2]$ is unitary.
    
    \item \textbf{Noise Projection and RMS Calculation:} We project the class-conditional covariances onto these subspaces. The effective noise amplitudes reported in the figures are calculated as the root-mean-square (RMS) of the standard deviations in each subspace, averaged over all classes:
    \begin{equation}
        \sigma_f^{\text{eff}} = \frac{1}{K} \sum_{c=1}^K \sqrt{\text{Tr}(Q_1^\top \Sigma_c Q_1) / \text{dim}(\mathcal{S})}, \quad
        \sigma_n^{\text{eff}} = \frac{1}{K} \sum_{c=1}^K \sqrt{\text{Tr}(Q_2^\top \Sigma_c Q_2) / \text{dim}(\mathcal{S}^\perp)}.
    \end{equation}
    To allow for comparison across datasets with different signal scales, we normalize these values by the mean pairwise distance between class centroids.
\end{enumerate}

\textbf{Synthetic Orthogonal Scaling:}
To verify the invariance to orthogonal noise (Fig. 11, Right), we synthetically manipulate the data geometry. We project every feature vector $\boldsymbol{x}_i$ onto the orthogonal subspace via the projection matrix $P_{\perp} = Q_2 Q_2^\top$ and scale this component by a factor $\gamma$:
\begin{equation}
    \boldsymbol{x}_i^{\text{scaled}} = (I - P_{\perp})\boldsymbol{x}_i + \gamma P_{\perp}\boldsymbol{x}_i.
\end{equation}
This operation scales the effective $\sigma_n$ by $\gamma$ while preserving the signal geometry exactly. The linear classifier is then trained on these scaled features using the Adam optimizer ($\eta=10^{-4}$) for $30,000$ epochs.

\end{document}